\icmltitlerunning{Confidence-Calibrated Adversarial Training}
\def\Vhrulefill{\leavevmode\hrule height 0.7ex depth \dimexpr0.4pt-0.7ex with 1cm\kern0pt}
\definecolor{colorbrewer0}{RGB}{45,45,45}
\definecolor{colorbrewer1}{RGB}{228,26,28}
\definecolor{colorbrewer2}{RGB}{55,126,184}
\definecolor{colorbrewer3}{RGB}{77,175,74}
\definecolor{colorbrewer4}{RGB}{152,78,163}
\definecolor{colorbrewer5}{RGB}{255,127,0}
\definecolor{colorbrewer6}{RGB}{255,255,51}
\definecolor{colorbrewer7}{RGB}{166,86,40}
\definecolor{colorbrewer8}{RGB}{247,129,191}
\definecolor{colorbrewer9}{RGB}{153,153,153}
\definecolor{colorbrewer10}{RGB}{24,167,181}
\newtheorem{proposition}{Proposition}
\definecolor{darkred}{RGB}{255, 0, 0}
\definecolor{gray}{RGB}{100, 100, 100}
\definecolor{colorbrewer0}{RGB}{45,45,45}
\definecolor{colorbrewer1}{RGB}{228,26,28}
\definecolor{colorbrewer2}{RGB}{55,126,184}
\definecolor{colorbrewer3}{RGB}{77,175,74}
\definecolor{colorbrewer4}{RGB}{152,78,163}
\definecolor{colorbrewer5}{RGB}{255,127,0}
\definecolor{colorbrewer6}{RGB}{255,255,51}
\definecolor{colorbrewer7}{RGB}{166,86,40}
\definecolor{colorbrewer8}{RGB}{247,129,191}
\definecolor{colorbrewer9}{RGB}{153,153,153}
\definecolor{colorbrewer10}{RGB}{24,167,181}
\DeclareRobustCommand\onedot{\futurelet\@let@token\@onedot}
\def\@onedot{\ifx\@let@token.\else.\null\fi\xspace}
\def\eg{\emph{e.g}\onedot} 
\def\ie{\emph{i.e}\onedot} 
\def\cf{\emph{c.f}\onedot} 
\def\etc{\emph{etc}\onedot} 
\def\wrt{w.r.t\onedot} 
\newcommand{\mR}{\mathbb{R}}
\newcommand{\cL}{\mathcal{L}}
\newcommand{\figref}[1]{\Fig~\ref{#1}}
\newcommand{\secref}[1]{\Sec~\ref{#1}}
\renewcommand{\algref}[1]{\Alg~\ref{#1}}
\newcommand{\eqnref}[1]{\Eq~\eqref{#1}}
\newcommand{\tabref}[1]{\Tab~\ref{#1}}
\DeclareMathOperator*{\argmax}{argmax~}
\DeclareRobustCommand\onedot{\futurelet\@let@token\@onedot}
\def\@onedot{\ifx\@let@token.\else.\null\fi\xspace}
\def\eg{e.g\onedot} 
\def\ie{i.e\onedot} 
\def\cf{cf\onedot} 
\def\etc{etc\onedot}
\def\wrt{wrt\onedot}
\def\Fig{Fig\onedot} \def\Eq{Eq\onedot}
\def\Sec{Sec\onedot} \def\Alg{Alg\onedot}
\def\Tab{Tab\onedot} 
\DeclareRobustCommand{\RTE}{%
    \ifmmode
    \text{RErr}
    \else
    RErr\xspace
    \fi
}
\DeclareRobustCommand{\TE}{%
    \ifmmode
    \text{Err}
    \else
    Err\xspace
    \fi
}
\def\PGD{\text{PGD}\xspace}
\def\Normal{\text{Normal}\xspace}
\def\AdvTrain{\text{AT}\xspace}
\def\AdvTrainHalf{\text{AT-50\%}\xspace}
\def\AdvTrainFull{\text{AT-100\%}\xspace}
\def\FConf{\text{Conf}\xspace}
\def\FCE{\text{CE}\xspace}
\def\ConfTrain{\text{CCAT}\xspace}
\def\Wong{MSD\xspace}
\def\WongAT{AT-100\% (Maini)\xspace}
\def\MadryAT{AT-Madry\xspace}
\def\Ma{LID\xspace}
\def\Lee{MAHA\xspace}
\def\TRADES{TRADES\xspace}
\def\Distal{\text{Dist}\xspace}
\def\Random{\text{Rand}\xspace}
\def\BlackBox{\text{Black-Box}\xspace}
\def\Pr{p}
\def\R{\mathbb{R}}
\newcommand{\norm}[1]{\left\|#1\right\|}
\def\Exp{\mathbb{E}}
\newcommand{\Id}{\mathbb{1}}
\def\out{\mathrm{out}}
\def\min{\mathop{\rm min}\nolimits}
\def\max{\mathop{\rm max}\nolimits}
\begin{document}

\twocolumn[
\icmltitle{Confidence-Calibrated Adversarial Training: Generalizing to Unseen Attacks}

\begin{icmlauthorlist}
    \icmlauthor{David Stutz}{mpii}
    \icmlauthor{Matthias Hein}{tue}
    \icmlauthor{Bernt Schiele}{mpii}
\end{icmlauthorlist}

\icmlaffiliation{mpii}{Max Planck Institute for Informatics, Saarland Informatics Campus, Saarbr\"{u}cken}
\icmlaffiliation{tue}{University of T\"{u}bingen, T\"{u}bingen}

\icmlcorrespondingauthor{David Stutz}{david.stutz@mpi-inf.mpg.de}

\icmlkeywords{Adversarial Machine Learning, Adversarial Examples, Adversarial Robustness, Adversarial Training}

\vskip 0.3in
]

\printAffiliationsAndNotice{}

\begin{abstract}
    Adversarial training yields robust models against a specific threat model, \eg, $L_\infty$ adversarial examples. Typically robustness does \emph{not} generalize to previously unseen threat models, \eg, other $L_p$ norms, or larger perturbations.
    Our \textbf{confidence-calibrated adversarial training (\ConfTrain)} tackles this problem by biasing the model towards low confidence predictions on adversarial examples. By allowing to reject examples with low confidence, robustness generalizes beyond the threat model employed during training.
    \ConfTrain, trained \emph{only} on $L_\infty$ adversarial examples, increases robustness against larger $L_\infty$, $L_2$, $L_1$ and $L_0$ attacks, adversarial frames, distal adversarial examples and corrupted examples 
    and yields better clean accuracy compared to adversarial training.
    For thorough evaluation we developed novel white- and black-box attacks directly attacking \ConfTrain by maximizing confidence.
    For each threat model, we use $7$ attacks with up to $50$ restarts and $5000$ iterations and report worst-case robust test error, extended to our confidence-thresholded setting, across \emph{all} attacks.
\end{abstract}
\section{Introduction}
\label{sec:introduction}

\begin{figure}[t]
    
    \hspace*{-0.25cm}
    \begin{minipage}[t]{0.5\textwidth}
        \vspace*{-10px}
        \centering
        \begin{minipage}{0.495\textwidth}
        	\begin{tcolorbox}[
        		left=0pt,
        		right=0pt,
        		top=0pt,
        		bottom=0pt,
        		colback=white,
        		colframe=gray!10!white,
        		width=1\textwidth, 
        		enlarge left by=0mm,
        		boxsep=5pt,
        		arc=0pt,outer arc=0pt,
        		boxrule=1pt,
        		]
        		
        		\centering
        		\small\textbf{Adversarial Training (\AdvTrain):}
                \vspace*{-1px}
        	\end{tcolorbox}
            \begin{tcolorbox}[
                left=0pt,
                right=0pt,
                top=0pt,
                bottom=0pt,
                colback=white,
                colframe=gray!10!white,
                width=1\textwidth, 
                enlarge left by=0mm,
                boxsep=5pt,
                arc=0pt,outer arc=0pt,
                boxrule=1pt,
                ]
                
                \vspace*{-4px}
                \includegraphics[width=3.8cm]{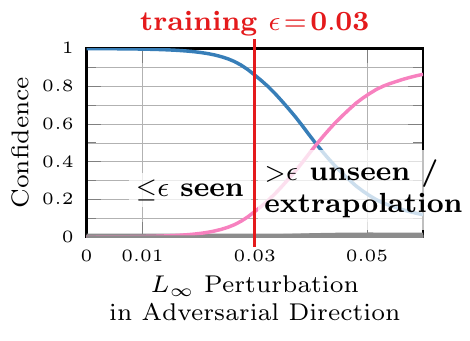}
                \vskip -2px
                
                \includegraphics[width=3.5cm]{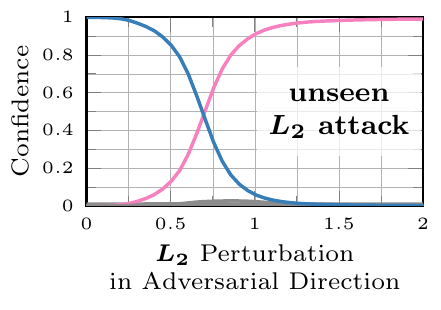}
                \vspace*{-6px}
            \end{tcolorbox}
        \end{minipage}
        \hfill
        \begin{minipage}{0.495\textwidth}
        	\begin{tcolorbox}[
        		left=0pt,
        		right=0pt,
        		top=0pt,
        		bottom=0pt,
        		colback=gray!12!white,
        		colframe=gray!12!white,
        		width=1\textwidth, 
        		enlarge left by=0mm,
        		boxsep=5pt,
        		arc=0pt,outer arc=0pt,
        		boxrule=1pt,
        		]
        		
        		\centering
        		\small\textbf{Ours (CCAT):}
                \vspace*{-1px}
        	\end{tcolorbox}
            \begin{tcolorbox}[
                left=0pt,
                right=0pt,
                top=0pt,
                bottom=0pt,
                colback=gray!12!white,
                colframe=gray!12!white,
                width=1\textwidth, 
                enlarge left by=0mm,
                boxsep=5pt,
                arc=0pt,outer arc=0pt,
                boxrule=1pt,
                ]
                
                \vspace*{-4px}
                \includegraphics[width=3.8cm]{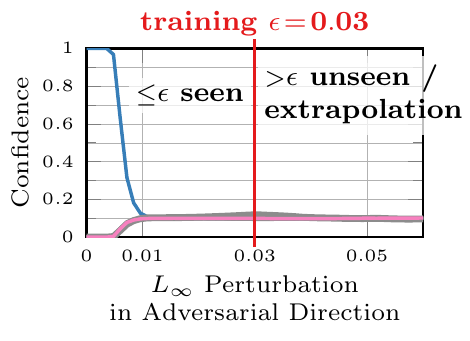}
                \vskip -2px
                
                \includegraphics[width=3.5cm]{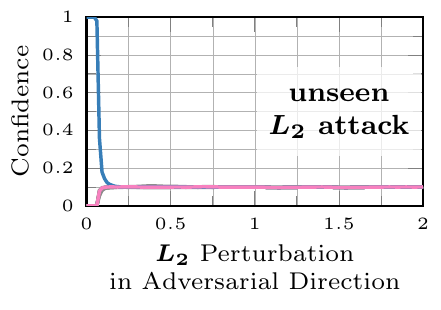}
                \vspace*{-6px}
            \end{tcolorbox}
        \end{minipage}
    
        \begin{tcolorbox}[
            left=0pt,
            right=0pt,
            top=0pt,
            bottom=0pt,
            colback=white,
            colframe=gray!10!white,
            width=1\textwidth, 
            enlarge left by=0mm,
            boxsep=5pt,
            arc=0pt,outer arc=0pt,
            boxrule=1pt,
            ]
            \centering
            \scriptsize
            
            \textcolor{colorbrewer2}{{\rule[2pt]{10pt}{1pt} (Correct) Predicted Class}}\xspace\xspace\xspace \textcolor{colorbrewer8}{{\rule[2pt]{10pt}{1pt} Adversarial Class}}\xspace\xspace\xspace \textcolor{colorbrewer9}{{\rule[2pt]{10pt}{1pt} Other Classes}}
        \end{tcolorbox}
    \end{minipage}
    \vskip -2px
    
    \caption{
    \textbf{Adversarial Training (\AdvTrain) versus our \ConfTrain.} We plot the confidence in the 
    direction of an adversarial example. AT enforces high
    confidence predictions for the correct class on the $L_\infty$-ball of radius $\epsilon$ (\emph{``seen''}
    attack during training, top left). As AT
    enforces no particular bias beyond the $\epsilon$-ball, adversarial examples can be found right beyond this ball. In contrast \ConfTrain enforces a decaying confidence in the correct class up to uniform confidence within the $\epsilon$-ball (top right). Thus, \ConfTrain biases the model to extrapolate uniform confidence beyond
    the $\epsilon$-ball. This behavior also extends to \emph{``unseen''} attacks during training, \eg, $L_2$ attacks (bottom), such that adversarial examples can be
    rejected via confidence-thresholding.
    }
    \label{fig:introduction}
    \vspace*{-2px}
\end{figure}

Deep networks were shown to be susceptible to adversarial examples \cite{SzegedyICLR2014}: adversarially perturbed examples that cause mis-classification while being nearly ``imperceptible'', \ie, close to the original example. Here, ``closeness'' is commonly enforced by constraining the $L_p$ norm of the perturbation, referred to as threat model. Since then, numerous defenses against adversarial examples have been proposed. However, many were unable to keep up with more advanced attacks \cite{AthalyeICML2018b,AthalyeARXIV2018b}. Moreover, most defenses are tailored to only one specific threat model.

Adversarial training \citep{GoodfellowICLR2015,MadryICLR2018}, \ie, training on adversarial examples, can be regarded as state-of-the-art. However, following \figref{fig:introduction}, adversarial training is known to ``overfit'' to the threat model \emph{``seen''} during training, \eg, $L_\infty$ adversarial examples.
Thus, robustness does not extrapolate to larger $L_\infty$ perturbations, \cf \figref{fig:introduction} (top left), or generalize to \emph{``unseen''} attacks, \cf \figref{fig:introduction} (bottom left), \eg, other $L_p$ threat models \citep{SharmaICLRWORK2018,TramerNIPS2019,LiARXIV2019,KangARXIV2019,MainiICML2020}.
We hypothesize this to be a result of enforcing high-confidence predictions on adversarial examples. However, high-confidence predictions are difficult to extrapolate beyond the adversarial examples seen during training. Moreover, it is not meaningful to extrapolate high-confidence predictions to arbitrary regions.
Finally, adversarial training often hurts accuracy, resulting in a robustness-accuracy trade-off \citep{TsiprasICLR2019,StutzCVPR2019,RaghunathanARXIV2019,ZhangICML2019}.

\textbf{Contributions:}
We propose \textbf{confidence-calibrated adversarial training (\ConfTrain)} which trains the network to predict a convex combination of uniform and (correct) one-hot distribution on adversarial examples that becomes more uniform as the distance to the attacked example increases. This is illustrated in \figref{fig:introduction}. Thus, \ConfTrain implicitly biases the network to predict a uniform distribution beyond the threat model \emph{seen} during training, \cf \figref{fig:introduction} (top right). Robustness is obtained by rejecting low-confidence (adversarial) examples through confidence-thresholding. As a result, having seen \emph{only} $L_\infty$ adversarial examples during training, \ConfTrain improves robustness against previously \emph{unseen} attacks, \cf \figref{fig:introduction} (bottom right), \eg, $L_2$, $L_1$ and $L_0$ adversarial examples or larger $L_\infty$ perturbations. Furthermore, robustness extends to adversarial frames \cite{ZajaxAAAIWORK2019}, distal adversarial examples \cite{HeinCVPR2019}, corrupted examples (\eg, noise, blur, transforms \etc) and accuracy of normal training is preserved better than with adversarial training.

For thorough evaluation, following best practices \cite{CarliniARXIV2019}, we adapt several state-of-the-art white- and black-box attacks \cite{MadryICLR2018,IlyasICML2018,AndriushchenkoARXIV2019,NarodytskaCVPRWORK2017,KhouryARXIV2018} to \ConfTrain by explicitly maximizing confidence and improving optimization through a backtracking scheme. In total, we consider $7$ different attacks for each threat model (\ie, $L_p$ for $p \in \{\infty, 2, 1, 0\}$), allowing up to $50$ random restarts and $5000$ iterations each. We report worst-case robust test error, extended to our confidence-thresholded setting, across \emph{all} attacks and restarts, on a \emph{per test example} basis. We demonstrate improved robustness against unseen attacks compared to standard adversarial training \cite{MadryICLR2018}, TRADES \cite{ZhangICML2019}, adversarial training using multiple threat models \cite{MainiICML2020} and two detection methods \cite{MaICLR2018,LeeNIPS2018}, while training \emph{only} on $L_\infty$ adversarial examples.

We make our code (training and evaluation) and pre-trained models publicly available at \href{http://davidstutz.de/ccat}{davidstutz.de/ccat}.
\section{Related Work}
\label{sec:related-work}

\textbf{Adversarial Examples:}
Adversarial examples can roughly be divided into white-box attacks, \ie, with access to the model gradients, \eg \cite{GoodfellowICLR2015,MadryICLR2018,CarliniSP2017}, and black-box attacks, \ie, only with access to the model's output, \eg \cite{IlyasICML2018,NarodytskaCVPRWORK2017,AndriushchenkoARXIV2019}. Adversarial examples were also found to be transferable between models \cite{LiuICLR2017,XieCVPR2019}. In addition to \emph{imperceptible} adversarial examples, adversarial transformations, \eg,  \cite{EngstromICML2019,AlaifariICLR2019}, or adversarial patches \cite{BrownARXIV2017} have also been studied. Recently, projected gradient ascent to maximize the cross-entropy loss or surrogate objectives, \eg, \cite{MadryICLR2018,DongCVPR2018,CarliniSP2017}, has become standard. Instead, we directly maximize the confidence in any but the true class, similar to \cite{HeinCVPR2019,GoodfellowOPENREVIEW2019}, in order to effectively train and attack \ConfTrain.

\textbf{Adversarial Training:}
Numerous defenses have been proposed, of which several were shown to be ineffective \cite{AthalyeICML2018b,AthalyeARXIV2018b}. Currently, adversarial training is standard to obtain robust models. While it was proposed in different variants  \cite{SzegedyICLR2014,MiyatoICLR2016,HuangARXIV2015}, the formulation by \cite{MadryICLR2018} received considerable attention and has been extended in various ways: \cite{ShafahiAAAI2020,PielotARXIV2018} train on universal adversarial examples, in \cite{CaiIJCAI2018}, curriculum learning is used, and in \cite{TramerICLR2018,GrefenstetteARXIV2018} ensemble adversarial training is proposed. 
The increased sample complexity \cite{SchmidtNIPS2018} was addressed in \cite{LambAISEC2019,CarmonNIPS2019,UesatoNIPS2019} by training on interpolated or unlabeled examples. Adversarial training on multiple threat models is also possible \cite{TramerNIPS2019,MainiICML2020}. Finally, the observed robustness-accuracy trade-off has been discussed in \cite{TsiprasICLR2019,StutzCVPR2019,ZhangICML2019,RaghunathanARXIV2019}. Adversarial training has also been combined with self-supervised training \cite{HendrycksNIPS2019}. In contrast to adversarial training, \ConfTrain imposes a target distribution which tends towards a uniform distribution for large perturbations, allowing the model
to extrapolate beyond the threat model used at training time. Similar to adversarial training with an additional ``abstain'' class \cite{LaidlawARXIV2019}, robustness is obtained by rejection. In our case, rejection is based on confidence thresholding.

\textbf{Detection:}
Instead of correctly classifying adversarial examples, several works \cite{GongARXIV2017,GrosseARXIV2017,FeinmanARXIV2017,LiaoCVPR2018,MaICLR2018,AmsalegWIFS2017,MetzenICLR2017,BhagojiARXIV2017,HendrycksICLR2017,LiICCV2017,LeeNIPS2018} try to detect adversarial examples. However, several detectors have been shown to be ineffective against adaptive attacks aware of the detection mechanism \cite{CarliniAISec2017}. Recently, the detection of adversarial examples by confidence, similar to our approach with \ConfTrain, has also been discussed \cite{PangNIPS2018}. Instead, \citet{GoodfellowOPENREVIEW2019} focus on evaluating confidence-based detection methods using adaptive, targeted attacks maximizing confidence. Our attack, although similar in spirit, is untargeted and hence suited for \ConfTrain.
\section{Generalizable Robustness by Confidence Calibration of Adversarial Training}
\label{sec:main}

To start, we briefly review adversarial training on $L_\infty$ adversarial examples \cite{MadryICLR2018}, which has become standard to train robust models, \cf \secref{sec:main-at}.
However, robustness does not generalize to larger perturbations or unseen attacks. We hypothesize this to be the result of enforcing high-confidence predictions on adversarial examples.
\ConfTrain addresses this issue with minimal modifications, \cf \secref{subsec:main-ccat} and \algref{alg:main-ccat}, by encouraging low-confidence predictions on adversarial examples. During testing, adversarial examples can be rejected by confidence thresholding.

\textbf{Notation:}
We consider a classifier $f:\R^d \rightarrow \R^K$ with $K$ classes where $f_k$ denotes the confidence for class $k$. While we use the cross-entropy loss $\cL$ for training, our approach also generalizes to other losses. Given $x \in \R^d$ with class $y{\,\in\,}\{1,\ldots,K\}$, we let $f(x) :=\argmax_k f_k(x)$ denote the predicted class for notational convenience. For $f(x) = y$, an adversarial example $\tilde{x} = x+\delta$ is defined as a ``small'' perturbation $\delta$ such that $f(\tilde{x}) \neq y$, \ie, the classifier changes its decision. The strength of the change $\delta$ is measured by some $L_p$-norm, $p \in \{0,1,2,\infty\}$. Here, $p=\infty$ is a popular choice as it leads to the smallest perturbation per pixel.

\subsection{Problems of Adversarial Training}
\label{sec:main-at}

Following \cite{MadryICLR2018}, adversarial training is given as the following min-max problem:
\vspace*{0px}
\begin{align}
    \min\limits_w \Exp\left[\max\limits_{\|\delta\|_\infty \leq \epsilon}\, \cL(f(x + \delta; w), y)\right]\label{eq:adversarial-training}
\end{align}
\vskip -4px
with $w$ being the classifier's parameters. During mini-batch training the inner maximization problem,
\vspace*{0px}
\begin{align}
    \max\limits_{\|\delta\|_\infty \leq \epsilon} \cL(f(x + \delta; w), y),\label{eq:attack}
\end{align}
\vskip -4px
is approximately solved. In addition to the $L_\infty$-constraint, a box constraint is enforced for images, \ie, $\tilde{x}_i = (x + \delta)_i \in [0,1]$. Note that maximizing the cross-entropy loss is equivalent to finding the adversarial example with \emph{minimal} confidence in the true class. For neural networks, this is generally a non-convex optimization problem. In \citep{MadryICLR2018} the problem is tackled using projected gradient descent (\PGD), initialized using a random $\delta$ with $\|\delta\|_\infty \leq \epsilon$.

In contrast to adversarial training as proposed in \citep{MadryICLR2018}, which computes adversarial examples for the \emph{full} batch in each iteration, others compute adversarial examples only for \emph{half} the examples of each batch \citep{SzegedyICLR2014}. Instead of training \emph{only} on adversarial examples, each batch is divided into $50\%$ clean and $50\%$ adversarial examples. Compared to \eqnref{eq:adversarial-training}, $50\%$/$50\%$ adversarial training effectively minimizes
\vspace*{0px}
\begin{align}
    \underbrace{\Exp\Big[\max\limits_{\|\delta\|_\infty \leq \epsilon} \cL(f(x + \delta; w), y)\Big]}_{\text{50\% adversarial training}} + \underbrace{\Exp\big[\cL(f(x; w), y)\big]}_{\text{50\% ``clean'' training}}.\label{eq:50-50-adversarial-training}
\end{align}
\vskip -4px
This improves test accuracy on clean examples compared to $100\%$ adversarial training but typically leads to worse robustness. Intuitively, by balancing both terms in \eqnref{eq:50-50-adversarial-training}, the trade-off between accuracy and robustness can already be optimized to some extent \citep{StutzCVPR2019}.

\begin{figure}[t]
    \vspace*{-8px}
    
    \centering
    \begin{minipage}{0.49\textwidth}
        \hspace*{-0.2cm}
        \includegraphics[width=1\textwidth]{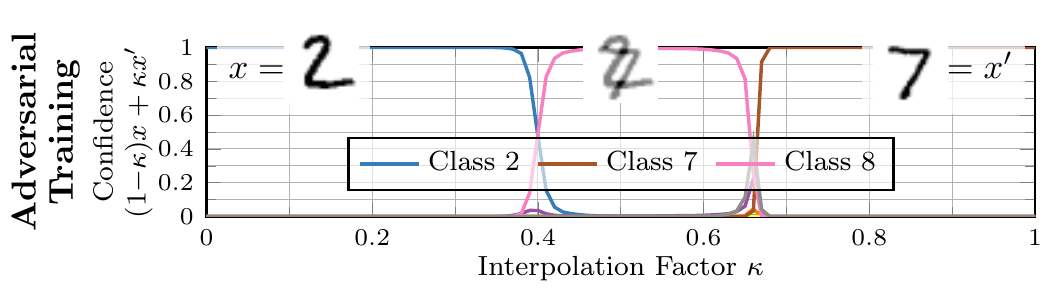}
        
        \hspace*{-0.2cm}
        \includegraphics[width=1\textwidth]{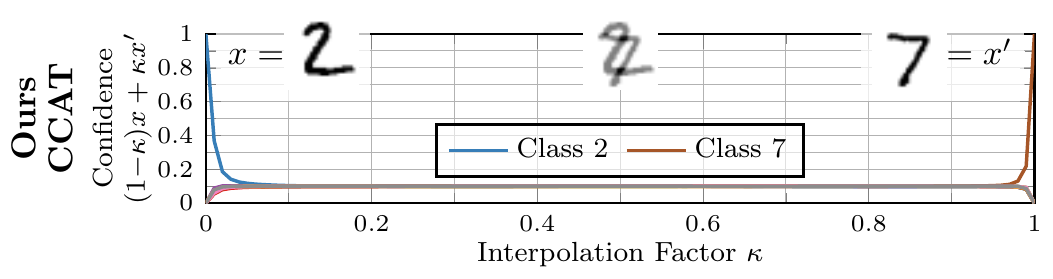}
    \end{minipage}
    \vspace*{-10px}
    
    \caption{\textbf{Extrapolation of Uniform Predictions.} We plot the confidence in each class along an interpolation between two test examples $x$ and $x'$, ``2'' and ``7'', on MNIST \cite{LecunIEEE1998}: $(1 - \kappa)x + \kappa x'$ where $\kappa$ is the interpolation factor. \ConfTrain quickly yields low-confidence, uniform predictions in between both examples, extrapolating the behavior enforced within the $\epsilon$-ball during training. Regular adversarial training, in contrast, consistently produces high-confidence predictions, even on unreasonable inputs.}
    \label{fig:interpolation}
    \vspace*{-2px}
\end{figure}

\textbf{Problems:}
Trained on $L_\infty$ adversarial examples, the robustness of adversarial training does not generalize to previously unseen adversarial examples, including larger perturbations or other $L_p$ adversarial examples. We hypothesize that this is because adversarial training explicitly enforces high-confidence predictions on $L_\infty$ adversarial examples within the $\epsilon$-ball seen during training (``seen'' in \figref{fig:introduction}). However, this behavior is difficult to extrapolate to arbitrary regions in a meaningful way. Thus, it is not surprising that adversarial examples can often be found right beyond the $\epsilon$-ball used during training, \cf \figref{fig:introduction} (top left). This can be described as ``overfitting'' to the $L_\infty$ adversarial examples used during training. Also, larger $\epsilon$-balls around training examples might include (clean) examples from other classes. Then, \eqnref{eq:attack} will focus on these regions and reduce accuracy as considered in our theoretical toy example, see Proposition \ref{prop:toy-example}, and related work \citep{JacobsenICLR2019,JacobsenARXIV2019}.

As suggested in \figref{fig:introduction}, both problems can be addressed by enforcing low-confidence predictions on adversarial examples in the $\epsilon$-ball. In practice, we found that the low-confidence predictions on adversarial examples within the $\epsilon$-ball are extrapolated beyond the $\epsilon$-ball, \ie, to larger perturbations, unseen attacks or distal adversarial examples. This allows to reject adversarial examples based on their low confidence. We further enforce this behavior by explicitly encouraging a ``steep'' transition from high-confidence predictions (on clean examples) to low-confidence predictions (on adversarial examples). As result, the (low-confidence) prediction is almost flat close to the boundary of the $\epsilon$-ball. Additionally, there is no incentive to deviate from the uniform distribution outside of the $\epsilon$-ball. For example, as illustrated in \figref{fig:interpolation}, the confidence stays low in between examples from different classes and only increases if necessary, \ie, close to the examples.

\begin{algorithm}[t]
	\caption{\textbf{Confidence-Calibrated Adversarial Training (\ConfTrain).}  The only changes compared to standard adversarial training are the attack (line \ref{line:attack}) and the probability distribution over the classes (lines \ref{line:lambda} and \ref{line:label}), which becomes more uniform as distance $\norm{\delta}_\infty$ increases. During testing, low-confidence (adversarial) examples are rejected.}
	\label{alg:main-ccat}
	\begin{algorithmic}[1]
		\WHILE{true}
		\STATE choose random batch $(x_1,y_1),\ldots,(x_B,y_B)$.
		\FOR{$b = 1,\ldots,\nicefrac{B}{2}$} 
		\STATE $\delta_b:=\argmax\limits_{\|\delta\|_\infty \leq \epsilon} \max\limits_{k \neq y_b} f_k(x_b{+}\delta)$\label{line:attack} (\eqnref{eq:conf-attack})
		\STATE $\tilde{x}_b := x_b + \delta_b$
		\STATE $\lambda(\delta_b) := (1 - \min(1, \nicefrac{\|\delta_b\|_\infty}{\epsilon}))^\rho$ (\eqnref{eq:lambda})\label{line:lambda}
		\STATE $\tilde{y_b}\,{:=}\,\lambda(\delta_b)\,\text{one\_hot}(y_b)\,{+}\,(1\,{-}\,\lambda(\delta_b)) \frac{1}{K}$ (\eqnref{eq:distribution})\label{line:label}
		\ENDFOR
		\STATE update parameters using \eqnref{eq:50-50-adversarial-training}:\\\hspace*{1cm}$\sum_{b = 1}^{\nicefrac{B}{2}} \mathcal{L}(f(\tilde{x}_b), \tilde{y}_b) + \sum_{b = \nicefrac{B}{2}}^{B} \mathcal{L}(f(x_b), y_b)$
		\ENDWHILE
	\end{algorithmic}
\end{algorithm}

\subsection{Confidence-Calibrated Adversarial Training}
\label{subsec:main-ccat}

\textbf{Confidence-calibrated adversarial training (\ConfTrain)} addresses these problems with minimal modifications, as outlined in \algref{alg:main-ccat}. During training, we train the network to predict a convex combination of (correct) one-hot distribution on clean examples and uniform distribution on adversarial examples as target distribution within the cross-entropy loss. During testing, adversarial examples can be rejected by confidence thresholding: adversarial examples receive near-uniform confidence while test examples receive high-confidence. By extrapolating the uniform distribution beyond the $\epsilon$-ball used during training, previously unseen adversarial examples such as larger $L_\infty$ perturbations can be rejected, as well. In the following, we first introduce an alternative objective for generating adversarial examples. Then, we specifically define the target distribution, which becomes more uniform with larger perturbations $\|\delta\|_\infty$. In \algref{alg:main-ccat}, these changes correspond to lines \ref{line:attack}, \ref{line:lambda} and \ref{line:label}, requiring only few lines of code in practice.

Given an example $x$ with label $y$, our adaptive attack during training maximizes the confidence in any other label $k \neq y$. This results in effective attacks against \ConfTrain, as \ConfTrain will reject low-confidence adversarial examples:
\vspace*{0px}
\begin{align}
 \max\limits_{\|\delta\|_\infty \leq \epsilon} \max\limits_{k\neq y}f_k(x + \delta;w) \label{eq:conf-attack}
\end{align}
\vskip -4px
Note that \eqnref{eq:attack}, in contrast, minimizes the confidence in the true label $y$. Similarly, \citep{GoodfellowOPENREVIEW2019} uses targeted attacks in order to maximize confidence, whereas ours is untargeted and, thus, our objective is the maximal confidence over all other classes.

Then, given an adversarial example from \eqnref{eq:conf-attack} during training, \ConfTrain uses the following combination of uniform and one-hot distribution as target for the cross-entropy loss:
\vspace*{-8px}
\begin{align}
	\tilde{y} = \lambda(\delta) \,\,\text{one\_hot}(y) + \big(1-\lambda(\delta)\big) \frac{1}{K},\label{eq:distribution}
\end{align}
\vskip -4px
with $\lambda(\delta) \in [0,1]$ and $\text{one\_hot}(y) \in \{0,1\}^K$ denoting the one-hot vector corresponding to class $y$. Thus, we enforce a convex combination of the original label distribution and the uniform distribution which is controlled by the parameter~$\lambda = \lambda(\delta)$, computed given the perturbation~$\delta$. We choose $\lambda$ to decrease with the distance $\|\delta\|_\infty$ of the adversarial example to the attacked example $x$ with the intention to enforce uniform predictions when $\|\delta\|_\infty = \epsilon$. Then, the network is encouraged to extrapolate this uniform distribution beyond the used $\epsilon$-ball. Even if extrapolation does not work perfectly, the uniform distribution is much more meaningful for extrapolation to arbitrary regions as well as regions between classes compared to high-confidence predictions as encouraged in standard adversarial training, as demonstrated in \figref{fig:interpolation}. For controlling the trade-off $\lambda$ between one-hot and uniform distribution, we consider the following ``power transition'':
\vspace*{0px}
\begin{align}
    \begin{split}
        \lambda(\delta) :=& \Big(1 - \min\Big(1, \frac{\|\delta\|_\infty}{\epsilon}\Big)\Big)^\rho
    \end{split}\label{eq:lambda}
\end{align}
\vskip -4px
This ensures that for $\delta = 0$ we impose the original (one-hot) label. For growing $\delta$, however, the influence of the original label decays proportional to $\|\delta\|_\infty$. The speed of decay is controlled by the parameter $\rho$.
For $\rho=10$, \figref{fig:introduction} (top right) shows the transition as approximated by the network.
The power transition ensures that for $\|\delta\|_\infty \geq \epsilon$, \ie, perturbations larger than encountered during training, a uniform distribution is enforced as $\lambda$ is $0$. We train on $50\%$ clean and $50\%$ adversarial examples in each batch, as in \eqnref{eq:50-50-adversarial-training}, such that the network has an incentive to predict correct labels.

The convex combination of uniform and one-hot distribution in \eqnref{eq:distribution} resembles the label smoothing regularizer introduced in \cite{SzegedyCVPR2016}. In concurrent work, label smoothing has also been used as regularizer for adversarial training \cite{ChengARXIV2020}. However, in our case, $\lambda = \lambda(\delta)$ from \eqnref{eq:lambda} is not a fixed hyper-parameter as in \cite{SzegedyCVPR2016,ChengARXIV2020}. Instead, $\lambda$ depends on the perturbation $\delta$ and reaches zero for $\|\delta\|_\infty = \epsilon$ to encourage low-confidence predictions beyond the $\epsilon$-ball used during training. Thereby, $\lambda$ explicitly models the transition from one-hot to uniform distribution.

\subsection{Confidence-Calibrated Adversarial Training Results in Accurate Models}
\label{sec:main-proposition}

Proposition \ref{prop:toy-example} discusses a problem where standard adversarial training is unable to reconcile robustness and accuracy while \ConfTrain is able to obtain \emph{both} robustness and accuracy:

\begin{proposition}\label{prop:toy-example}
    We consider a classification problem with two points $x=0$ and $x=\epsilon$ in $\mR$ with deterministic labels, \ie,
    $p(y=2|x=0)=1$ and $p(y=1|x=\epsilon)=1$, such that the problem is fully determined by the probability $p_0=p(x=0)$. 
    The Bayes error of this classification problem is zero. Let the predicted probability distribution over classes be $\tilde{p}(y|x)=\frac{e^{g_y(x)}}{e^{g_1(x)}+e^{g_2(x)}}$, where $g:\R^d \rightarrow \R^2$ is the classifier and we assume that the function $\lambda:\mR_+ \rightarrow [0,1]$ used in \ConfTrain is monotonically decreasing and $\lambda(0)=1$. Then, the error of the Bayes optimal classifier (with cross-entropy loss) for
    \vspace*{-8px}
    \begin{itemize}
    \item adversarial training on $100\%$ adversarial examples, \cf \eqnref{eq:adversarial-training}, 
    is $\min\{p_0,1-p_0\}$.
    \vspace*{-3px}
    \item adversarial training on $50\%$/$50\%$ adversarial/clean examples per batch, \cf \eqnref{eq:50-50-adversarial-training}, 
    is $\min\{p_0,1 - p_0\}$.
    \vspace*{-3px}
    \item \ConfTrain on $50\%$ clean and $50\%$ adversarial examples, \cf \algref{alg:main-ccat}, 
    is \emph{zero} 
    if $\lambda(\epsilon) < \min\left\{\nicefrac{p_0}{1-p_0},\nicefrac{1-p_0}{p_0}\right\}$.
    \vspace*{-6px}
    \end{itemize}
\end{proposition}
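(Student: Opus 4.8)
The plan is to reduce all three objectives to the two scalar quantities $a := \tilde{p}(y=1|x=0)$ and $b := \tilde{p}(y=1|x=\epsilon)$, since the softmax-parametrized classifier is unconstrained and can realize any pair $(a,b)\in(0,1)^2$ at the two points. The single structural fact driving the proposition is that $\|0-\epsilon\|_\infty=\epsilon$, so the $\epsilon$-ball around each point contains the other: an adversary starting at $x=0$ can reach $x=\epsilon$ and vice versa. For each scheme I would first identify what the inner maximization does at the two mutually reachable points, then minimize the resulting population risk over $(a,b)$, and finally read off the $0$-$1$ error of the argmax classifier on the clean distribution ($p_0$ at $x=0$ with label $2$, $1-p_0$ at $x=\epsilon$ with label $1$).

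For $100\%$ adversarial training I would write the risk as $p_0[-\log(1-\max(a,b))] + (1-p_0)[-\log\min(a,b)]$: the term from $x=0$ is maximized by the adversary at whichever point carries the largest class-$1$ probability, and symmetrically for $x=\epsilon$. A short monotonicity argument forces $a=b$ at the optimum, reducing the risk to $h(t)=-p_0\log(1-t)-(1-p_0)\log t$, whose minimizer is $t=1-p_0$. Both points then receive identical predictions $\tilde{p}(1|\cdot)=1-p_0$, so the argmax is constant and misclassifies exactly the minority point, giving error $\min\{p_0,1-p_0\}$. For the $50\%/50\%$ variant I would simply add the clean term $p_0[-\log(1-a)]+(1-p_0)[-\log b]$; the same stationarity analysis again yields $a=b=1-p_0$ and the identical error.

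The interesting case is \ConfTrain. Here I would use that the adversarial example from $x=0$ lands at $x=\epsilon$ (and vice versa) with $\|\delta\|_\infty=\epsilon$, so it carries the \emph{soft} target $\lambda(\epsilon)\,\text{one\_hot}(y)+(1-\lambda(\epsilon))\tfrac12$; writing $\lambda:=\lambda(\epsilon)$, the cross-over adversarial term contributes class-$1$ target mass $(1-\lambda)/2$ at $x=\epsilon$ and $(1+\lambda)/2$ at $x=0$. The crucial observation is that the full objective still \emph{separates} into an $a$-part and a $b$-part, each a one-dimensional cross-entropy of the form $-\alpha\log(\cdot)-\beta\log(1-\cdot)$ whose minimizer is $\alpha/(\alpha+\beta)$ (with the two coefficients summing to $1$ after collecting the clean and adversarial weights). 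This gives $a^\ast=(1-p_0)(1+\lambda)/2$ and $b^\ast=1-\tfrac{p_0}{2}(1+\lambda)$. Zero error requires $a^\ast<\tfrac12<b^\ast$; the first inequality rearranges to $\lambda<p_0/(1-p_0)$ and the second to $\lambda<(1-p_0)/p_0$, i.e.\ exactly $\lambda(\epsilon)<\min\{p_0/(1-p_0),(1-p_0)/p_0\}$.

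The main obstacle I anticipate is the one place where \ConfTrain's attack (maximizing the wrong-class confidence, \eqnref{eq:conf-attack}) interacts with the $\delta$-dependent soft label: whether the adversary actually crosses to the opposite point must be checked for \emph{consistency} at the computed optimum rather than assumed. I would close this by verifying that at $(a^\ast,b^\ast)$ one has $b^\ast-a^\ast=(1-\lambda)/2\ge 0$, so the largest class-$1$ score seen from $x=0$ is indeed $b^\ast\ge a^\ast$ (and symmetrically from $x=\epsilon$), making the fixed-point assumption self-consistent. A minor additional care is to confirm that only the monotonicity of $\lambda$ together with $\lambda(0)=1$ (not its precise form) enters, and that the softmax open-simplex constraint does not obstruct attaining these interior optima.
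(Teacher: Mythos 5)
Your reduction to the two predicted probabilities $a=\tilde p(y{=}1\,|\,x{=}0)$ and $b=\tilde p(y{=}1\,|\,x{=}\epsilon)$ is the paper's own route, just written in probability rather than logit coordinates: your optima for \AdvTrainFull and \AdvTrainHalf (namely $a=b=1-p_0$, error $\min\{p_0,1-p_0\}$) and for \ConfTrain (namely $a^\ast=(1-p_0)\tfrac{1+\lambda}{2}$, $b^\ast=1-p_0\tfrac{1+\lambda}{2}$) are exactly the sigmoids of the paper's critical logits, and your threshold condition coincides with the paper's. The two adversarial-training bullets are handled essentially identically to the paper.

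The gap is in the \ConfTrain bullet. Because the attack maximizes wrong-class \emph{confidence} while the loss is a cross-entropy against a $\|\delta\|$-dependent target, the population objective is piecewise-defined: on $\{a<b\}$ the attack crosses over (the case you minimize), but on $\{a\ge b\}$ it stays put, so $\lambda(0)=1$ and the \ConfTrain objective collapses to \emph{twice the clean cross-entropy}, whose minimum over $\{a\ge b\}$ is the constant predictor $a=b=1-p_0$ with value $2H(p_0)$, where $H(p)=-p\log p-(1-p)\log(1-p)$. The paper analyzes both regimes; you analyze only the crossing one, and your self-consistency check ($b^\ast-a^\ast=\tfrac{1-\lambda}{2}\ge 0$) only certifies that your point is the minimizer \emph{within} that regime. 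It cannot substitute for the missing case: the objective is discontinuous across $\{a=b\}$ (at the staying-regime minimizer the crossing branch is strictly larger whenever $p_0\neq\tfrac12$ and $\lambda<1$), so there is a second, equally self-consistent candidate whose loss must be compared against yours before one can speak of \emph{the} Bayes-optimal classifier. This is not a pedantic point: taking $p_0=0.9$ and $\lambda=0.1$ (allowed by the hypothesis, since $\min\{p_0/(1-p_0),(1-p_0)/p_0\}=1/9$), the staying regime achieves $2H(0.9)\approx 0.65$ while your crossing optimum achieves $H(0.055)+H(0.495)\approx 0.91$, i.e.\ strictly worse. So ``minimize the crossing objective and verify consistency'' does not by itself identify the global minimizer; you must carry out the $a\ge b$ case and the comparison of the two regional minima (a comparison which, to be fair, the paper's own proof also leaves implicit: it derives the competing boundary minimizer but never compares loss values).
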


Here, $100\%$ and $50\%$/$50\%$ standard adversarial training are unable to obtain \emph{both} robustness and accuracy: The $\epsilon$-ball used during training contains examples of different classes such that adversarial training enforces high-confidence predictions in contradicting classes. \ConfTrain addresses this problem by encouraging low-confidence predictions on adversarial examples within the $\epsilon$-ball. Thus, \ConfTrain is able to improve accuracy while preserving robustness.

\section{Detection and Robustness Evaluation with Adaptive Attack}
\label{sec:evaluation-attack}

\ConfTrain allows to reject (adversarial) inputs by confidence-thresholding before classifying them. As we will see, this ``reject option'', is also beneficial for standard adversarial training (\AdvTrain). Thus, evaluation also requires two stages: First, we fix the confidence threshold at $99\%$ true positive rate (TPR), where correctly classified clean examples are positives such that at most $1\%$ (correctly classified) clean examples are rejected.
Second, on the non-rejected examples, we evaluate accuracy and robustness using \emph{confidence-thresholded} (robust) test error. 

\subsection{Adaptive Attack}
\label{subsec:evaluation-attack}

As \ConfTrain encourages low confidence on adversarial examples, we use \PGD to maximize the confidence of adversarial examples, \cf \eqnref{eq:conf-attack}, as effective adaptive attack against \ConfTrain. In order to effectively optimize our objective, we introduce a simple but crucial improvement: after each iteration, the computed update is only applied if the objective is improved; otherwise the learning rate is reduced.
Additionally, we use momentum \citep{DongCVPR2018} and run the attack for exactly $T$ iterations, choosing the perturbation corresponding to the best objective across all iterations. In addition to random initialization, we found that $\delta = 0$ is an effective initialization against \ConfTrain. We applied the same principles for \cite{IlyasICML2018}, \ie, \PGD with approximated gradients, \eqnref{eq:conf-attack} as objective, momentum and backtracking; we also use \eqnref{eq:conf-attack} as objective for the black-box attacks of \cite{AndriushchenkoARXIV2019,NarodytskaCVPRWORK2017,KhouryARXIV2018}.

\subsection{Detection Evaluation}

In the first stage, we consider a detection setting: adversarial example are \emph{negatives} and correctly classified clean examples are \emph{positives}. The confidence threshold $\tau$ is chosen extremely conservatively by requiring a \textbf{$\boldsymbol{99\%}$ true positive rate (TPR)}: at most $1\%$ of correctly classified clean examples can be rejected. As result, the confidence threshold is determined \emph{only} by correctly classified clean examples, independent of adversarial examples. Incorrectly rejecting a significant fraction of correctly classified clean examples is unacceptable. This is also the reason why we do not report the area under the receiver operating characteristic (ROC) curve as related work \citep{LeeNIPS2018,MaICLR2018}.
Instead, we consider the \textbf{false positive rate (FPR)}.
The supplementary material includes a detailed discussion.

\subsection{Robustness Evaluation}

In the second stage, after confidence-thresholding, we consider the widely used robust test error (\RTE) \cite{MadryICLR2018}. It quantifies the model's test error in the case where all test examples are allowed to be attacked, \ie, modified within the chosen threat model, \eg, for $L_p$:
\begin{align}
    \text{``Standard'' }\RTE = \frac{1}{N}\sum_{n = 1}^N \;\max\limits_{\|\delta\|_p\leq \epsilon} \Id_{f(x_n + \delta)\neq y_n}
\end{align}
where $\{(x_n,y_n)\}_{n = 1}^N$ are test examples and labels. In practice, \RTE is computed empirically using adversarial attacks. Unfortunately, standard \RTE does not take into account the option of rejecting (adversarial) examples.

We propose a generalized definition adapted to our confidence-thresholded setting where the model can reject examples. For fixed confidence threshold $\tau$ at $99\%$TPR, the \textbf{confidence-thresholded \RTE} is defined as
\begin{align}
    \RTE(\tau) = \frac{
    	\sum\limits_{n=1}^N \;\max\limits_{\|\delta\|_p\leq \epsilon, c(x_n + \delta)\geq \tau} \Id_{f(x_n + \delta)\neq y_n}
    }
    {
    	\sum\limits_{n=1}^N \;\max\limits_{\|\delta\|_p\leq \epsilon} \Id_{c(x_n + \delta)\geq \tau}
    }\label{eq:conf-rte}
\end{align}
with $c(x) = \max_k f_k(x)$ and $f(x)$ being the model's confidence and predicted class on example $x$, respectively. Essentially, this is the \textbf{test error on test examples that can be modified within the chosen threat model \emph{and} pass confidence thresholding}. For $\tau=0$ (\ie, all examples pass confidence thresholding) this reduces to the standard \RTE, comparable to related work. We stress that our adaptive attack in \eqnref{eq:conf-attack} directly maximizes the numerator of \eqnref{eq:conf-rte} by maximizing the confidence of classes not equal $y$. A (clean) \textbf{confidence-thresholded test error ($\TE(\tau)$)} is obtained similarly. In the following, if not stated otherwise, we report \emph{confidence-thresholded} \RTE and \TE as default and omit the confidence threshold $\tau$ for brevity.

\textbf{FPR and \RTE:}
FPR quantifies how well an adversary can perturb (correctly classified) examples while not being rejected. The confidence-thresholded \RTE is more conservative as it measures \emph{any} non-rejected error (adversarial or not). As result, \RTE implicitly includes FPR \emph{and} \TE. Therefore, we report only \RTE and include FPRs for all our experiments in the supplementary material.

\begin{figure}[t]
    \vspace*{0px}
    \centering
    
    \hspace*{-0.3cm}
    \begin{minipage}[t]{0.23\textwidth}
        \vspace*{0px}
        
        \centering
        \includegraphics[width=1.05\textwidth]{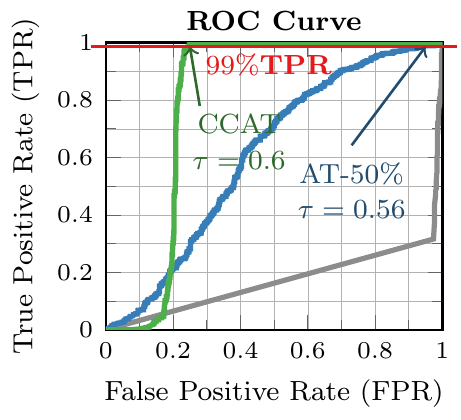}
    \end{minipage}
    \begin{minipage}[t]{0.25\textwidth}
        \vspace*{0px}
        
        \centering			
        \includegraphics[width=1.05\textwidth]{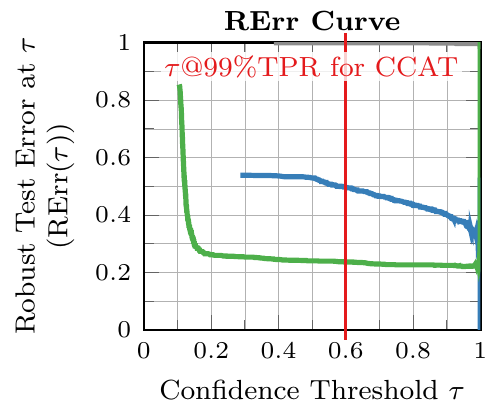}
    \end{minipage}\\
    
    \hspace*{-0.4cm}
    \begin{minipage}[t]{0.45\textwidth}
        \fbox{
            \hspace*{1.6cm}\includegraphics[width=0.575\textwidth]{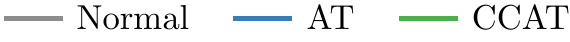}\hspace*{1.6cm}
        }
    \end{minipage}
    \vskip -4px
    \caption{\textbf{ROC and \RTE Curves.} On SVHN, we show ROC curves when distinguishing \emph{correctly classified} test examples from adversarial examples by confidence (left) and (confidence-thresholded) \RTE against confidence threshold $\tau$ (right) for worst-case adversarial examples across $L_\infty$ attacks with $\epsilon = 0.03$. The confidence threshold $\tau$ is chosen exclusively on correctly classified clean examples to obtain $99\%$TPR. For \ConfTrain, this results in $\tau \approx 0.6$. Note that \RTE subsumes both \TE and FPR.}
    \label{fig:experiments-evaluation}
    \vspace*{-2px}
\end{figure}

\textbf{Per-Example Worst-Case Evaluation:}
Instead of reporting average or per-attack results, we use a per-example \emph{worst-case} evaluation scheme: For each individual test example, all adversarial examples from all attacks (and restarts) are accumulated. Subsequently, \emph{per test example}, only the adversarial example with highest confidence is considered, resulting in a significantly stronger robustness evaluation compared to related work.
\section{Experiments}
\label{sec:experiments}

\begin{figure}[t]
    \vspace*{-2px}
    \centering
    
    \begin{minipage}[t]{0.225\textwidth}
        \vspace*{0px}
        
        \centering
        \includegraphics[width=1\textwidth]{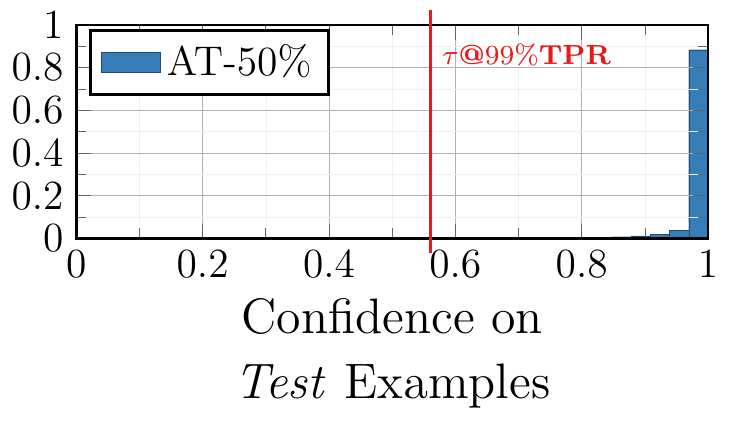}
        \vspace*{-14px}
        
        \includegraphics[width=1\textwidth]{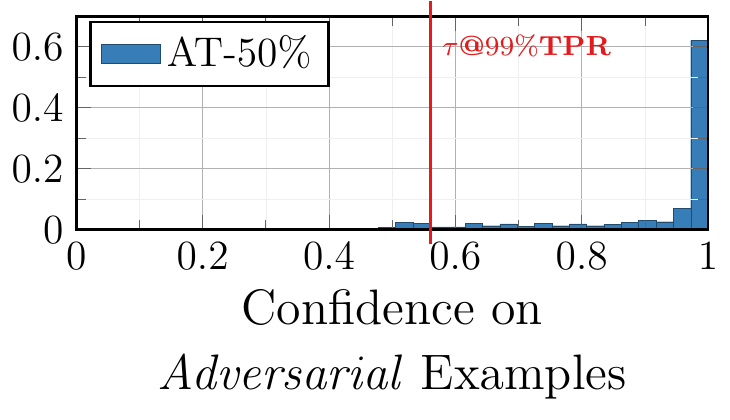}
    \end{minipage}
    \begin{minipage}[t]{0.225\textwidth}
        \vspace*{0px}
        
        \centering
        \includegraphics[width=1\textwidth]{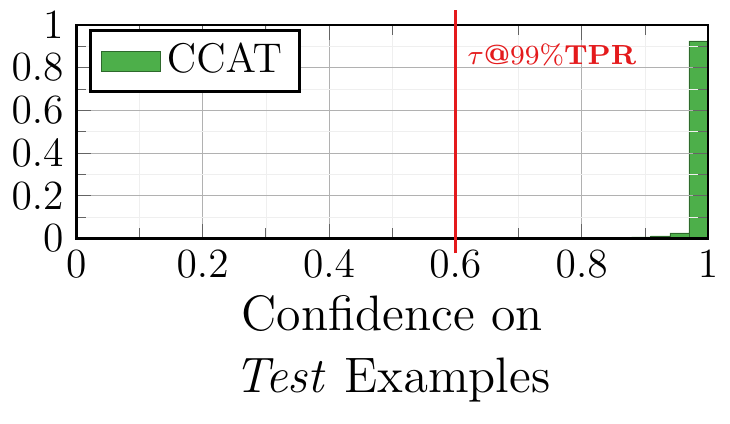}
        \vspace*{-14px}
        
        \includegraphics[width=1\textwidth]{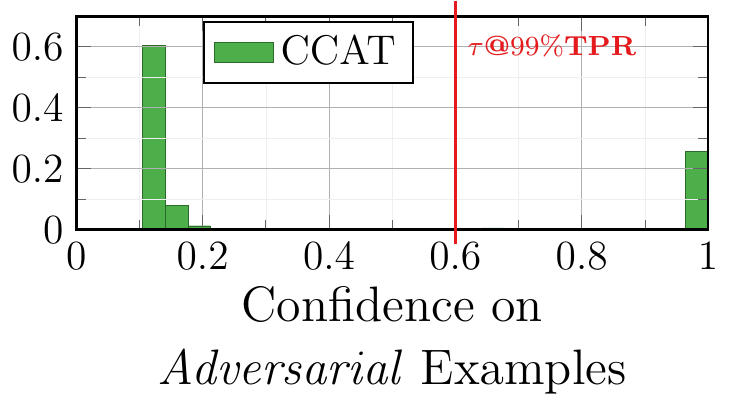}
    \end{minipage}
    
    \vspace*{-8px}
    \caption{\textbf{Confidence Histograms.} On SVHN, for \AdvTrain ($50\%$/$50\%$ adversarial training, left) and \ConfTrain (right), we show confidence histograms corresponding to \emph{correctly classified} test examples (top) and adversarial examples (bottom). We consider the worst-case adversarial examples across all $L_\infty$ attacks for $\epsilon = 0.03$. While the confidence of adversarial examples is reduced slightly for \AdvTrain, \ConfTrain is able to distinguish the majority of adversarial examples from (clean) test examples by confidence thresholding (in \textcolor{colorbrewer1}{red}).}
    \label{fig:experiments-histograms}
    \vspace*{-8px}
\end{figure}

We evaluate \ConfTrain in comparison with \AdvTrain \cite{MadryICLR2018} and related work \citep{MainiICML2020,ZhangICML2019} on MNIST \citep{LecunIEEE1998}, SVHN \citep{NetzerNIPS2011} and Cifar10 \citep{Krizhevsky2009} as well as MNIST-C \cite{MuICMLWORK2019} and Cifar10-C \cite{HendrycksICLR2019} with corrupted examples (\eg, blur, noise, compression, transforms \etc). We report \textbf{\textit{confidence-thresholded} test error (\TE; $\downarrow$ lower is better)} and \textbf{\textit{confidence-thresholded} robust test error (\RTE; $\downarrow$ lower is better)} for a confidence-threshold $\tau$ corresponding to $99\%$ true positive rate (TPR); we omit $\tau$ for brevity. We note that normal and standard adversarial training (\AdvTrain) are also allowed to reject examples by confidence thresholding. \TE is computed on $9000$ test examples. \RTE is computed on $1000$ test examples. The confidence threshold $\tau$ depends \emph{only} on correctly classified clean examples and is fixed at $99\%$TPR on the held-out \emph{last} $1000$ test examples.

\textbf{Attacks:}
For thorough evaluation, we consider $7$ different $L_p$ attacks for $p \in \{\infty, 2, 1, 0\}$. As white-box attacks, we use \PGD to maximize the objectives \eqnref{eq:attack} and \eqref{eq:conf-attack}, referred to as \PGD-\FCE and \PGD-\FConf. We use $T = 1000$ iterations and $10$ random restarts with random initialization plus one restart with zero initialization for \PGD-\FConf, and $T = 200$ with $50$ random restarts for \PGD-\FCE. For $L_\infty$, $L_2$, $L_1$ and $L_0$ attacks, we set \textbf{$\boldsymbol{\epsilon}$ to $\boldsymbol{0.3, 3, 18, 15}$ (MNIST) or $\boldsymbol{0.03, 2, 24, 10}$ (SVHN/Cifar10)}.

As black-box attacks, we additionally use the Query Limited (QL) attack \cite{IlyasICML2018} adapted with momentum and backtracking for $T = 1000$ iterations with $10$ restarts, the Simple attack \citep{NarodytskaCVPRWORK2017} for $T = 1000$ iterations and $10$ restarts, and the Square attack ($L_\infty$ and $L_2$) \citep{AndriushchenkoARXIV2019} with $T = 5000$ iterations. In the case of $L_0$ we also use Corner Search (CS) \citep{CroceICCV2019}. For all $L_p$, $p \in \{\infty, 2, 1, 0\}$, we consider $5000$ (uniform) random samples from the $L_p$-ball and the Geometry attack \cite{KhouryARXIV2018}. Except for CS, all black-box attacks use \eqnref{eq:conf-attack} as objective:

{\footnotesize
\vskip -2px
\begin{tabular}{| l | l | c | c |}
	\hline
	Attack & Objective & $T$ & Restarts\\\hline\hline
	\PGD-\FCE & \eqnref{eq:attack}, random init. & 200 & 50\\
	\PGD-\FConf & \eqnref{eq:conf-attack}, zero + random init. & 1000 & 11\\
	QL$^{\boldsymbol{\dagger}}$ & \eqnref{eq:conf-attack}, zero + random init. & 1000 & 11\\
	Simple$^{\boldsymbol{\dagger}}$ & \eqnref{eq:conf-attack} & 1000 & 10\\
	Square$^{\boldsymbol{\dagger}}$ & \eqnref{eq:conf-attack}, $L_\infty$, $L_2$ only & 5000 & 1\\
	CS$^{\boldsymbol{\dagger}}$ & \eqnref{eq:attack}, $L_0$ only & 200 & 1\\
	Geometry$^{\boldsymbol{\dagger}}$ & \eqnref{eq:conf-attack} & 1000 & 1\\
	Random$^{\boldsymbol{\dagger}}$ & \eqnref{eq:conf-attack} & -- & 5000\\
	\hline
    \multicolumn{4}{l}{\scriptsize $\boldsymbol{\dagger}$ Black-box attacks.}
\end{tabular}
\vskip -2px
}

Additionally, we consider adversarial frames and distal adversarial examples: Adversarial frames \cite{ZajaxAAAIWORK2019} allow a $2$ (MNIST) or $3$ (SVHN/Cifar10) pixel border to be manipulated arbitrarily within $[0,1]$ to maximize \eqnref{eq:conf-attack} using \PGD. Distal adversarial examples start with a (uniform) random image and use \PGD to maximize \eqref{eq:conf-attack} within a $L_\infty$-ball of size $\epsilon = 0.3$ (MNIST) or $\epsilon = 0.03$ (SVHN/Cifar10).

\begin{table}[t]
    \centering
    \footnotesize
    \input{tab_msvhn_per_attack_99tpr}
    \caption{\textbf{Per-Example Worst-Case Evaluation.} We compare confidence-thresholded \RTE with $\tau$@$99\%$TPR for the per-example worst-case and the top-$5$ individual attacks/restarts among $7$ attacks with $84$ restarts in total. Multiple restarts are \emph{necessary} to effectively attack \ConfTrain, while a single attack and restart is nearly sufficient against \AdvTrainHalf. This demonstrates that \ConfTrain is more difficult to ``crack''.}
    \label{tab:experiments-per-attack}
\end{table}

\textbf{Training:}
We train $50\%$/$50\%$ \AdvTrain (\AdvTrainHalf) and \ConfTrain as well as $100\%$ \AdvTrain (\AdvTrainFull) with $L_\infty$ attacks using $T = 40$ iterations for \PGD-\FCE and \PGD-\FConf, respectively, and $\epsilon = 0.3$ (MNIST) or $\epsilon = 0.03$ (SVHN/Cifar10). We use ResNet-20 \citep{HeCVPR2016}, implemented in PyTorch \citep{PaszkeNIPSWORK2017}, trained using stochastic gradient descent. For \ConfTrain, we use $\rho = 10$.

\textbf{Baselines:}
We compare to multi-steepest descent (\Wong) adversarial training \cite{MainiICML2020} using the pre-trained LeNet on MNIST and pre-activation ResNet-18 on Cifar10 trained with $L_\infty$, $L_2$ and $L_1$ adversarial examples and $\epsilon$ set to $0.3, 1.5, 12$ and $0.03, 0.5, 12$, respectively. The $L_2$ and $L_1$ attacks in \tabref{tab:experiments-main} (larger $\epsilon$) are unseen. For \TRADES \cite{ZhangICML2019}, we use the pre-trained convolutional network \cite{CarliniSP2017} on MNIST and WRN-10-28 \cite{ZagoruykoBMVC2016} on Cifar10, trained on $L_\infty$ adversarial examples with $\epsilon=0.3$ and $\epsilon=0.03$, respectively. On Cifar10, we further consider the pre-trained ResNet-50 of \cite{MadryICLR2018} (\MadryAT, $L_\infty$ adversarial examples with $\epsilon = 0.03$). We also consider the Mahalanobis (\Lee) \cite{LeeNIPS2018} and local intrinsic dimensionality (\Ma) detectors \cite{MaICLR2018} using the provided pre-trained ResNet-34 on SVHN/Cifar10.

\begin{figure}[t]
    \vspace*{-1px}
    
    \begin{minipage}[t]{0.235\textwidth}
        \vspace*{-4px}
        
        \includegraphics[width=1\textwidth]{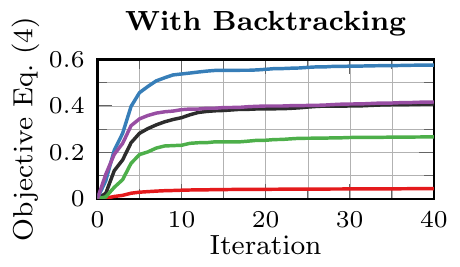}
    \end{minipage}
    \begin{minipage}[t]{0.235\textwidth}
        \vspace*{-4px}
        
        \includegraphics[width=1\textwidth]{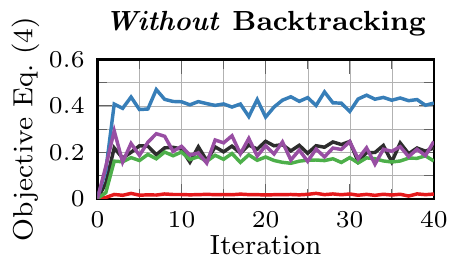}
    \end{minipage}
    \\
    \fbox{
        \hspace*{0.9cm}\includegraphics[width=0.35\textwidth]{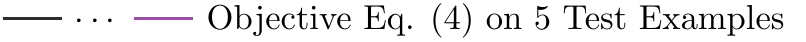}\hspace*{0.9cm}
    }
    \vspace*{-14px}
    \caption{\textbf{Backtracking.} Our $L_\infty$ \PGD-\FConf attack, \ie, \PGD maximizing \eqnref{eq:conf-attack}, using $40$ iterations with momentum and our developed backtracking scheme (left) and without both (right) on SVHN. We plot \eqnref{eq:conf-attack} over iterations for the first $5$ test examples corresponding to different colors. Backtracking avoids oscillation and obtains higher overall objective values within the same number of iterations.}
    \label{fig:experiments-momback}
    \vspace*{-4px}
\end{figure}

\subsection{Ablation Study}
\label{subsec:experiments-ablation}

\textbf{Evaluation Metrics:}
\figref{fig:experiments-evaluation} shows ROC curves, \ie, how well adversarial examples can be rejected by confidence. As marked in \textcolor{colorbrewer1}{red}, we are only interested in the FPR for the conservative choice of $99\%$TPR, yielding the confidence threshold $\tau$. The \RTE curves highlight how robustness is influenced by the threshold: \AdvTrain also benefits from a reject option, however, not as much as \ConfTrain which has been explicitly designed for rejecting adversarial examples.

\begin{table*}[t]
	\footnotesize
	\hspace*{-0.2cm}
	\begin{subfigure}{0.82\textwidth}
		\centering
		\input{tab_mmnist_main4_99tpr}
	\end{subfigure}
	\begin{subfigure}{0.085\textwidth}
		\centering
		\begin{tabular}{|@{\hskip 7px}c@{\hskip 7px}|}
\hline
\textbf{FPR} $\downarrow$\\
\hline
\begin{tabular}{@{}c@{}}distal\\\vphantom{t}\end{tabular}\\
\hline
\textcolor{colorbrewer1}{\bfseries unseen}\\
\hline
\hline
100.0 
\\
100.0 
\\
100.0 
\\
\bfseries 0.0 
\\\hline\hline
100.0 
\\
100.0 
\\\hline
\end{tabular}

	\end{subfigure}
	\begin{subfigure}{0.08\textwidth}
		\centering
		\begin{tabular}{|c|}
\hline
\textbf{\TE} $\downarrow$\\
\hline
\begin{tabular}{@{}c@{}}corrupted\\{MNIST-C}\end{tabular}\\
\hline
\textcolor{colorbrewer1}{\bfseries unseen}\\
\hline
\hline
32.8\\ 
12.6\\ 
17.6\\ 
\bfseries 5.7\\ 
\hline\hline
6.0\\ 
7.9\\ 
\hline
\end{tabular}

	\end{subfigure}
	\vskip 2px
	\hspace*{-0.2cm}
	\begin{subfigure}[t]{0.82\textwidth}
		\centering
		\vspace*{0px}
		
		\input{tab_msvhn_main4_99tpr}
	\end{subfigure}
	\begin{subfigure}[t]{0.08\textwidth}
		\centering
		\vspace*{0px}
		
		\begin{tabular}{|@{\hskip 7px}c@{\hskip 7px}|}
\hline
\textbf{FPR} $\downarrow$\\
\hline
\begin{tabular}{@{}c@{}}distal\\\vphantom{t}\end{tabular}\\
\hline
\textcolor{colorbrewer1}{\bfseries unseen}\\
\hline
\hline
87.1 
\\
86.3 
\\
81.0 
\\
\bfseries 0.0 
\\\hline\hline
8.6\\
\bfseries 0.0\\
\hline
\end{tabular}

	\end{subfigure}
	\hfill
	\vskip 2px
	\hspace*{-0.2cm}
	\begin{subfigure}[t]{0.82\textwidth}
		\centering
		\vspace*{0px}
		
		\input{tab_mcifar10_main4_99tpr}
	\end{subfigure}
	\begin{subfigure}[t]{0.085\textwidth}
		\centering
		\vspace*{0px}
		
		\begin{tabular}{|@{\hskip 7px}c@{\hskip 7px}|}
\hline
\textbf{FPR} $\downarrow$\\
\hline
\begin{tabular}{@{}c@{}}distal\\\vphantom{t}\end{tabular}\\
\hline
\textcolor{colorbrewer1}{\bfseries unseen}\\
\hline
\hline
83.3 
\\
75.0 
\\
72.5 
\\
\bfseries 0.0 
\\\hline\hline
76.7 
\\
76.2 
\\
78.5 
\\\hline\hline
0.1\\
2.4\\
\hline
\end{tabular}

	\end{subfigure}
	\begin{subfigure}[t]{0.08\textwidth}
		\centering
		\vspace*{0px}
		
		\begin{tabular}{|@{\hskip 7px}c@{\hskip 7px}|}
\hline
\textbf{\TE} $\downarrow$\\
\hline
\begin{tabular}{@{}c@{}}corrupted\\{\scriptsize CIFAR10-C}\end{tabular}\\
\hline
\textcolor{colorbrewer1}{\bfseries unseen}\\
\hline
\hline
12.3\\
16.2\\
19.6\\
\bfseries 8.5\\
\hline\hline
19.3\\
15.0\\
12.9\\
\hline\hline
11.59\\
12.4\\
\hline
\end{tabular}

	\end{subfigure}
	\vskip -6px
	\caption{\textbf{Main Results: Generalizing Robustness.} For $L_\infty$, $L_2$, $L_1$, $L_0$ attacks and adversarial frames, we report per-example worst-case (confidence-thresholded) \TE and \RTE at $99\%$TPR across all attacks; $\epsilon$ is reported in the corresponding columns. For distal adversarial examples and corrupted examples, we report FPR and \TE, respectively. $L_\infty$ attacks with $\epsilon{=}0.3$ on MNIST and $\epsilon = 0.03$ on SVHN/Cifar10 were used for training (\textcolor{colorbrewer3}{seen}). The remaining attacks were not encountered during training (\textbf{\textcolor{colorbrewer1}{unseen}}). \ConfTrain outperforms \AdvTrain and the other baselines regarding robustness against unseen attacks. FPRs included in the supplementary material. \textbf{*}~Pre-trained models with different architecture, \Ma/\Lee use the same model.}
	\label{tab:experiments-main}
    \vspace*{-2px}
\end{table*}

\textbf{Worst-Case Evaluation:} 
\tabref{tab:experiments-per-attack} illustrates the importance of worst-case evaluation on SVHN, showing that \ConfTrain is significantly ``harder'' to attack than \AdvTrain. We show the worst-case \RTE over all $L_\infty$ attacks as well as the top-$5$ individual attacks (each restart treated as separate attack). For \AdvTrainHalf, a single restart of \PGD-\FConf with $T=1000$ iterations is highly successful, with $52.1\%$ \RTE close to the overall worst-case of $56\%$. For \ConfTrain, in contrast, multiple restarts are crucial as the best individual attack, \PGD-\FConf with $T=1000$ iterations and zero initialization obtains only $23.6\%$ \RTE compared to the overall worst-case of $39.1\%$.

\textbf{Backtracking:}
\figref{fig:experiments-momback} illustrates the advantage of backtracking for \PGD-\FConf with $T{=}40$ iterations on $5$ test examples of SVHN. Backtracking results in better objective values and avoids oscillation, \ie, a stronger attack for training and testing. In addition, while $T{=}200$ iterations are sufficient against \AdvTrain, we needed up to $T{=}1000$ iterations for \ConfTrain.

\subsection{Main Results (Table  \ref{tab:experiments-main})}
\label{subsec:experiments-main}

\textbf{Robustness Against \textcolor{colorbrewer3}{seen} $L_\infty$ Attacks:}
Considering \tabref{tab:experiments-main} and $L_\infty$ adversarial examples as \textcolor{colorbrewer3}{seen} during training, \ConfTrain exhibits comparable robustness to \AdvTrain. With $7.4\%$/$67.9\%$ \RTE on MNIST/Cifar10, \ConfTrain lacks behind \AdvTrainHalf ($1.7\%$/$62.7\%$) only slightly. On SVHN, in contrast \ConfTrain outperforms \AdvTrainHalf and \AdvTrainFull significantly with $39.1\%$ vs. $56.0\%$ and $48.3\%$. We note that \ConfTrain and \AdvTrainHalf are trained on $50\%$ clean / $50\%$ adversarial examples. This is in contrast to \AdvTrainFull trained on $100\%$ adversarial examples, which improves robustness slightly, \eg, from $56\%$/$62.7\%$ to $48.3\%$/$59.9\%$ on SVHN/Cifar10.

\textbf{Robustness Against \textcolor{colorbrewer1}{unseen} $L_p$ Attacks:}
Regarding \textcolor{colorbrewer1}{unseen} attacks, \AdvTrain's robustness deteriorates quickly while \ConfTrain is able to generalize robustness to novel threat models. On SVHN, for example, \RTE of \AdvTrainHalf goes up to $88.4\%$, $99.4\%$, $99.5\%$ and $73.6\%$ for larger $L_\infty$, $L_2$, $L_1$ and $L_0$ attacks. In contrast, \ConfTrain's robustness generalizes to these unseen attacks significantly better, with $53.1\%$, $29\%$, $31.7\%$ and $3.5\%$, respectively. The results on MNIST and Cifar10 or for \AdvTrainFull tell a similar story. However, \AdvTrain generalizes better to $L_1$ and $L_0$ attacks on MNIST, possibly due to the large $L_\infty$-ball used during training ($\epsilon = 0.3$). Here, training purely on adversarial examples, \ie, \AdvTrainFull is beneficial. On Cifar10, \ConfTrain has more difficulties with large $L_\infty$ attacks ($\epsilon = 0.06$) with $92\%$ \RTE.
As detailed in the supplementary material, these observations are supported by considering FPR. \AdvTrain benefits from considering FPR as clean \TE is not taken into account. On Cifar10, for example, $47.6\%$ FPR compared to $62.7\%$ \RTE for \AdvTrainHalf. This is less pronounced for \ConfTrain due to the improved \TE compared to \AdvTrain. Overall, \ConfTrain improves robustness against arbitrary (unseen) $L_p$ attacks, demonstrating that \ConfTrain indeed extrapolates near-uniform predictions beyond the $L_\infty$ $\epsilon$-ball used during training.

\textbf{Comparison to \Wong and \TRADES:}
\TRADES is able to outperform \ConfTrain alongside \AdvTrain (including \MadryAT) on Cifar10 with respect to the $L_\infty$ adversarial examples \textcolor{colorbrewer3}{seen} during training: $43.5\%$ \RTE compared to $68.4\%$ for \ConfTrain. This might be a result of training on $100\%$ adversarial examples and using more complex models: \TRADES uses a WRN-10-28 with roughly $46.1\text{M}$ weighs, in contrast to our ResNet-18 with $4.3\text{M}$ (and ResNet-20 with $11.1\text{M}$ for \Wong). However, regarding \textcolor{colorbrewer1}{unseen} $L_2$, $L_1$ and $L_0$ attacks, \ConfTrain outperforms \TRADES with $52.2\%$, $58.8\%$ and $23\%$ compared to $70.9\%$, $96.9\%$ and $36.9\%$ in terms of \RTE. Similarly, \ConfTrain outperforms \Wong. This is surprising, as \Wong trains on both $L_2$ and $L_1$ attacks with smaller $\epsilon$, while \ConfTrain does not. Only against larger $L_\infty$ adversarial examples with $\epsilon = 0.06$, \TRADES reduces \RTE from $92.4\%$ (\ConfTrain) to $81\%$. Similar to \AdvTrain, \TRADES also generalizes better to $L_2$, $L_1$ or $L_0$ on MNIST, while \Wong is not able to compete. Overall, compared to \Wong and \TRADES, the robustness obtained by \ConfTrain generalizes better to previously unseen attacks. We also note that, on MNIST, \ConfTrain outperforms the robust Analysis-by-Synthesis (ABS) approach of \cite{SchottICLR2019} \wrt $L_\infty$, $L_2$, and $L_0$ attacks.

\textbf{Detection Baselines:}
The detection methods \Ma and \Lee are outperformed by \ConfTrain across all datasets and threat models. On SVHN, for example, \Lee obtains $73\%$ \RTE against the seen $L_\infty$ attacks and $79.5\%$, $78.1\%$, $67.5\%$ and $41.5\%$ \RTE for the unseen $L_\infty$, $L_2$, $L_1$ and $L_0$ attacks. \Ma is consistently outperformed by \Lee on SVHN. This is striking, as we \emph{only} used \PGD-\FCE and \PGD-\FConf to attack these approaches and emphasizes the importance of training \emph{adversarially} against an adaptive attack to successfully reject adversarial examples.

\textbf{Robustness Against Unconventional Attacks:}
Against adversarial frames, robustness of \AdvTrain reduces to $73.7\%$ /$62.3\%$ \RTE (AT-$50\%$/$100\%$), even on MNIST, while \ConfTrain achieves $0.2\%$. \Wong, in contrast, is able to preserve robustness better with $8.8\%$ \RTE, which might be due to the $L_2$ and $L_1$ attacks seen during training. \ConfTrain outperforms both approaches with $0.2\%$ \RTE, as does \TRADES. On SVHN and Cifar10, however, \ConfTrain outperforms all approaches, including \TRADES, considering adversarial frames.
Against distal adversarial examples, \ConfTrain outperforms all approaches significantly, with $0\%$ FPR, compared to the second-best of $72.5\%$ for \AdvTrainFull on Cifar10. Only the detection baselines \Ma and \Lee are competitive, reaching close to $0\%$ FPR.
This means that \ConfTrain is able to extrapolate low-confidence distributions to far-away regions of the input space.
Finally, we consider corrupted examples (\eg, blur, noise, transforms \etc) where \ConfTrain also improves results, \ie, mean \TE across all corruptions. On Cifar10-C, for example, \ConfTrain achieves $8.5\%$ compared $12.9\%$ for \MadryAT and $12.3\%$ for normal training. On MNIST-C, only \Wong yields a comparably low \TE: $6\%$ vs. $5.7\%$ for \ConfTrain.

\textbf{Improved Test Error:}
\ConfTrain also outperforms \AdvTrain regarding \TE, coming close to that of normal training. On all datasets, \emph{confidence-thresholded} \TE for \ConfTrain is better or equal than that of normal training. On Cifar10, only \Ma/\Lee achieve a better standard and confidence-thresholded \TE using a ResNet-34 compared to our ResNet-20 for \ConfTrain ($21.2\text{M}$ vs. $4.3\text{M}$ weights). In total the performance of \ConfTrain shows that the robustness-generalization trade-off can be improved significantly.

Our \textbf{supplementary material} includes detailed descriptions of our \PGD-\FConf attack (including pseudo-code), a discussion of our confidence-thresholded \RTE, and more details regarding baselines. We also include results for confidence thresholds at $98\%$ and $95\%$TPR, which improves results only slightly, at the cost of ``throwing away'' significantly more clean examples. Furthermore, we provide ablation studies, qualitative examples and per-attack results.

\section{Conclusion}
\label{sec:conclusion}

Adversarial training results in robust models against the threat model \emph{seen} during training, \eg, $L_\infty$ adversarial examples. However, generalization to \emph{unseen} attacks such as other $L_p$ adversarial examples or larger $L_\infty$ perturbations is insufficient. We propose \textbf{confidence-calibrated adversarial training (\ConfTrain)} which biases the model towards low confidence predictions on adversarial examples and beyond. Then, adversarial examples can easily be rejected based on their confidence. Trained exclusively on $L_\infty$ adversarial examples, \ConfTrain improves robustness against unseen threat models such as larger $L_\infty$, $L_2$, $L_1$ and $L_0$ adversarial examples, adversarial frames, distal adversarial examples and corrupted examples. Additionally, accuracy is improved in comparison to adversarial training. We thoroughly evaluated \ConfTrain using $7$ different white-and black-box attacks with up to $50$ random restarts and $5000$ iterations. These attacks where adapted to \ConfTrain by directly maximizing confidence. We reported worst-case robust test error, extended to our confidence-thresholded setting, across \emph{all} attacks.

\bibliography{bibliography}
\bibliographystyle{icml2020}

\onecolumn
\begin{appendix}
	\icmltitle{Supplementary Material for\\Confidence-Calibrated Adversarial Training: Generalizing to Unseen Attacks}

\vskip 0.3in

\begin{abstract}
    This document provides supplementary material for \textbf{confidence-calibrated adversarial training (\ConfTrain)}. First, in \secref{sec:supp-proof}, we provide the proof of Proposition 1, showing that there exist problems where standard adversarial training (\AdvTrain) is unable to reconcile robustness and accuracy, while \ConfTrain is able to obtain \emph{both} robustness \emph{and} accuracy. In \secref{sec:supp-experiments}, to promote reproducibility and emphasize our thorough evaluation, we discuss details regarding the used attacks, training procedure, baselines and evaluation metrics. Furthermore, \secref{sec:supp-experiments} includes additional experimental results in support of the observations in the main paper. For example, we present results for confidence threshold at $95\%$ and $98\%$ true positive rate (TPR), results for the evaluated detection baselines as well as per-attack and per-corruption results for in-depth analysis. We also include qualitative results highlighting how \ConfTrain obtains robustness through confidence thresholding. Code and pre-trained models are available at \href{http://davidstutz.de/ccat}{davidstutz.de/ccat}.
\end{abstract}

\section{Introduction}

\textbf{Confidence-calibrated adversarial training (\ConfTrain)} biases the network towards low-confidence predictions on adversarial examples. This is achieved by training the network to predict a uniform distribution between (correct) one-hot and uniform distribution which becomes more uniform as the distance to the attacked example increases. In the main paper, we show that \ConfTrain addresses two problems of standard adversarial training (\AdvTrain) as, \eg, proposed in \cite{MadryICLR2018}: the poor generalization of robustness to attacks not employed during training, \eg, other $L_p$ attacks or larger perturbations, and the reduced accuracy. We show that \ConfTrain, trained only on $L_\infty$ adversarial examples, improves robustness against previously unseen attacks through confidence thresholding, \ie, rejecting low-confidence (adversarial) examples. Furthermore, we demonstrate that \ConfTrain is able to improve accuracy compared to adversarial training. In this document, \secref{sec:supp-proof} provides the proof of Proposition 1. Then, \secref{sec:supp-experiments} includes details on our experimental setup, emphasizing our efforts to thoroughly evaluate \ConfTrain, and additional experimental results allowing an in-depth analysis of the robustness obtained through \ConfTrain.

In \secref{sec:supp-proof}, corresponding to the proof of Proposition 1, we show that there exist problems where standard adversarial training is indeed unable to reconcile robustness and accuracy. \ConfTrain, in contrast, is able to obtain \emph{both} robustness and accuracy, given that the ``transition'' between one-hot and uniform distribution used during training is chosen appropriately.

In \secref{sec:supp-experiments}, we discuss our thorough experimental setup to facilitate reproducibility and present additional experimental results in support of the conclusions of the main paper. In the first part of \secref{sec:supp-experiments}, starting with \secref{subsec:supp-experiments-attacks}, we provide a detailed description of the employed projected gradient descent (\PGD) attack with momentum and backtracking, including pseudo-code and used hyper-parameters. Similarly, we discuss details of the used black-box attacks. In \secref{subsec:supp-experiments-training}, we include details on our training procedure, especially for \ConfTrain. In \secref{subsec:supp-experiments-baselines}, we discuss the evaluated baselines, \ie, \cite{MainiICML2020,MadryICLR2018,ZhangICML2019,LeeNIPS2018,MaICLR2018}. Then, in \secref{subsec:supp-experiments-evaluation}, we discuss the employed evaluation metrics, focusing on our \emph{confidence-thresholded} robust test error (\RTE). In the second part of \secref{sec:supp-experiments}, starting with \secref{subsec:supp-experiments-ablation}, we perform ablation studies considering our attack and \ConfTrain. Regarding the attack, we demonstrate the importance of enough iterations, backtracking and appropriate initialization to successfully attack \ConfTrain. Regarding \ConfTrain, we consider various values for the hyper-parameter $\rho$ which controls the transition from one-hot to uniform distribution during training. In \secref{subsec:supp-experiments-analysis}, we analyze how \ConfTrain achieves robustness by considering its behavior in adversarial directions as well as in between clean examples. Finally, we provide additional experimental results in \secref{subsec:supp-experiments-results}: our main results, \ie, robustness against seen and unseen adversarial examples, for a confidence threshold at $95\%$ and $98\%$ true positive rate (TPR), per-attack results on all datasets and per-corruption results on MNIST-C \citep{MuICMLWORK2019} and Cifar10-C \citep{HendrycksARXIV2019}.
	\section{Proof of Proposition 1}
\label{sec:supp-proof}

Adversarial training usually results in reduced accuracy on clean examples. In practice, training on $50\%$ clean and  $50\%$ adversarial examples instead of training \emph{only} on adversarial examples allows to control the robustness-accuracy trade-off to some extent, \ie, increase accuracy while sacrificing robustness. The following proposition shows that there exist problems where adversarial training is unable to reconcile robustness and accuracy, while \ConfTrain is able to obtain \emph{both}:

\begin{proposition}\label{prop:toy-example}
    We consider a classification problem with two points $x=0$ and $x=\epsilon$ in $\mR$ with deterministic labels, \ie,
    $p(y=2|x=0)=1$ and $p(y=1|x=\epsilon)=1$, such that the problem is fully determined by the probability $p_0=p(x=0)$. 
    The Bayes error of this classification problem is zero. Let the predicted probability distribution over classes be $\tilde{p}(y|x)=\frac{e^{g_y(x)}}{e^{g_1(x)}+e^{g_2(x)}}$, where $g:\R^d \rightarrow \R^2$ is the classifier and we assume that the function $\lambda:\mR_+ \rightarrow [0,1]$ used in \ConfTrain is monotonically decreasing and $\lambda(0)=1$. Then, the error of the Bayes optimal classifier (with cross-entropy loss) for
    \vspace*{-8px}
    \begin{itemize}
        \item adversarial training on $100\%$ adversarial examples
        is $\min\{p_0,1-p_0\}$.
        \vspace*{-3px}
        \item adversarial training on $50\%$/$50\%$ adversarial/clean examples per batch
        is $\min\{p_0,1 - p_0\}$.
        \vspace*{-3px}
        \item \ConfTrain on $50\%$ clean and $50\%$ adversarial examples 
        is \emph{zero} 
        if $\lambda(\epsilon) < \min\left\{\nicefrac{p_0}{1-p_0},\nicefrac{1-p_0}{p_0}\right\}$.
        \vspace*{-6px}
    \end{itemize}
\end{proposition}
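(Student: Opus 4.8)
The plan is to reduce each training objective to a finite-dimensional optimization over the classifier's predictions at the few relevant points, exploit the fact that the two $\epsilon$-balls overlap, and then read off the induced $0/1$ test error at the clean points $x=0$ and $x=\epsilon$. Geometrically, in $\mR$ the $\epsilon$-ball around $x=0$ is $[-\epsilon,\epsilon]$ and around $x=\epsilon$ is $[0,2\epsilon]$; they overlap exactly on $[0,\epsilon]$, which contains \emph{both} clean points. Writing $a=\tilde p(2\mid 0)$ and $b=\tilde p(2\mid\epsilon)$ (with $\tilde p(1\mid\cdot)=1-\tilde p(2\mid\cdot)$ since $K=2$), the test error is $p_0\,\Id_{a\le 1/2}+(1-p_0)\,\Id_{1-b\le 1/2}$. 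Outside the overlap the model is unconstrained, so I set $\tilde p(2\mid x)=1$ on $[-\epsilon,0)$ and $\tilde p(2\mid x)=0$ on $(\epsilon,2\epsilon]$, which removes those regions as candidates for the worst-case perturbation.

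For $100\%$ AT the attack for $x=0$ selects $\min_{[0,\epsilon]}\tilde p(2)$ and for $x=\epsilon$ selects $\max_{[0,\epsilon]}\tilde p(2)$; calling these $u$ and $1-v$, the adversarial risk is $-p_0\log u-(1-p_0)\log v$ with the only constraint $u+v\le 1$ (any such pair being realizable by a suitable function on $[0,\epsilon]$). Since the objective is decreasing in $u$ and $v$, the optimum has $u+v=1$, giving $u=p_0$ and forcing the constant prediction $\tilde p(2)\equiv p_0$ on $[0,\epsilon]$; hence $a=b=p_0$ and the error is $\min\{p_0,1-p_0\}$. For $50/50$ AT I add the two clean terms $-p_0\log a-(1-p_0)\log(1-b)$; substituting $\min=b$, $\max=a$ (valid once $a\ge b$ is checked at the optimum) the objective decouples into $-p_0\log(ab)-(1-p_0)\log\big((1-a)(1-b)\big)$, whose minimizer is again $a=b=p_0$, leaving the error at $\min\{p_0,1-p_0\}$.

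The CCAT case is the crux. Here the attack of \eqnref{eq:conf-attack} maximizes wrong-class confidence, so for $x=0$ it goes to the global minimizer of $\tilde p(2)$ over $[-\epsilon,\epsilon]$ and for $x=\epsilon$ to the global maximizer of $\tilde p(2)$ over $[0,2\epsilon]$. The key step is that at the optimum these land at the opposite clean point, at distance exactly $\epsilon$: the clean term on $x=\epsilon$ pushes $b$ down and the one on $x=0$ pushes $a$ up, so the forced low/high values sit at $\epsilon$ and $0$, and the model has no incentive to manufacture a more extreme interior point, since that only raises its loss against a target tilted even further towards the true label. At both adversarial points the distance is $\epsilon$, so $\lambda=\lambda(\epsilon)$ and the soft target of \eqnref{eq:distribution} puts mass $t:=(1+\lambda(\epsilon))/2$ on the clean label. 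Collecting the clean and adversarial terms and grouping by $a$ and $b$, the problem again decouples, and each one-dimensional term has the form $-A\log z-B\log(1-z)$, minimized at $z=A/(A+B)$; this yields
\[
a^\ast = p_0 + (1-p_0)\tfrac{1-\lambda(\epsilon)}{2},\qquad b^\ast = p_0\,\tfrac{1+\lambda(\epsilon)}{2}.
\]
Zero test error requires $a^\ast>1/2$ and $b^\ast<1/2$, which rearrange directly to $\lambda(\epsilon)<p_0/(1-p_0)$ and $\lambda(\epsilon)<(1-p_0)/p_0$, i.e. exactly the stated condition; the difference $a^\ast-b^\ast=(1-\lambda(\epsilon))/2\ge 0$ retroactively justifies the assumed location of the min and max.

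I expect the main obstacle to be rigorously pinning down the attacker's best response in the CCAT objective --- namely that the confidence-maximizing perturbation lands at the opposite clean point, so that $\lambda(\epsilon)$ rather than some larger $\lambda(r)$ enters --- and confirming that the decoupled per-coordinate minimization is genuinely the global optimum of the joint training loss and not an artifact of the chosen parametrization of the prediction on $[0,\epsilon]$.
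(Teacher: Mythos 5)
Your reductions for the two adversarial-training bullets are correct and essentially the paper's own argument in different coordinates (the paper works with the logit differences $g_1-g_2$ at the two points, you with the probabilities $a=\tilde p(2|0)$, $b=\tilde p(2|\epsilon)$; the optimizers coincide), and your CCAT critical point $a^\ast=p_0+(1-p_0)\tfrac{1-\lambda(\epsilon)}{2}$, $b^\ast=p_0\tfrac{1+\lambda(\epsilon)}{2}$ with the resulting threshold is exactly the paper's solution after the same change of variables. The genuine gap is the obstacle you flagged yourself, and the retroactive check $a^\ast-b^\ast=\tfrac{1-\lambda(\epsilon)}{2}\ge 0$ does not close it. The confidence-maximizing attack splits classifier space into two regimes: (i) $a\ge b$, where the attack on each clean point lands at the \emph{opposite} clean point, so the soft target with $\lambda(\epsilon)$ enters --- the only regime you analyze; and (ii) $a<b$, where the wrong-class confidence is already maximal at each clean point itself, so $\delta^\ast=0$, $\lambda(0)=1$, the target is the one-hot label, and the CCAT objective collapses to \emph{twice the clean cross-entropy}. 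Verifying $a^\ast\ge b^\ast$ only certifies a minimizer \emph{within} regime (i); it does not exclude that the global minimizer lies in regime (ii). Note that every regime-(ii) classifier automatically has error at least $\min\{p_0,1-p_0\}$ (zero error would need $a>\tfrac12>b$, contradicting $a<b$), and classifiers with $a$ slightly below $b$, both near $p_0$ --- realizable with exactly your extension trick outside $[0,\epsilon]$ --- have loss arbitrarily close to $2H(p_0)$, where $H(q)=-q\log q-(1-q)\log(1-q)$.

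So a complete proof must additionally establish the cross-regime comparison $H(a^\ast)+H(b^\ast)<2H(p_0)$, the left side being the regime-(i) optimal loss, and this does \emph{not} follow from the stated hypothesis: take $p_0=0.1$ and $\lambda(\epsilon)=0$, so that $\lambda(\epsilon)<\min\{p_0/(1-p_0),(1-p_0)/p_0\}=1/9$ holds; then $a^\ast=0.55$, $b^\ast=0.05$ and $H(0.55)+H(0.05)\approx 0.89$, while $2H(0.1)\approx 0.65$. Hence every classifier that beats your candidate lies in regime (ii), and the Bayes-optimal CCAT classifier has error $p_0\neq 0$. In other words, the model's cheapest escape is not ``a more extreme interior point'' but collapsing toward a (near-)constant prediction, in which case the attack lands at distance $0$ and the $\lambda(\epsilon)$-target never appears --- precisely the scenario your heuristic dismisses. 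To be fair, the paper's own proof carries out the two-regime case distinction explicitly (regime (ii)'s minimum sits on the diagonal $a=b$, regime (i) has the interior critical point) but is equally silent about comparing the two optimal values, so this gap is shared with the paper rather than avoided by it; still, your write-up is the weaker of the two, since regime (ii) is never even set up, and closing the argument requires either proving the entropy inequality under added assumptions (it does hold for moderate $p_0$, for instance $p_0=0.3$, for all admissible $\lambda(\epsilon)$) or restricting the range of $p_0$ in the statement.
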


\begin{proof}
    First, we stress that we are dealing with three different probability distributions over the labels:
	the true one $\Pr(y|x)$, the imposed one during training $\hat{p}(y|x)$ and the predicted one $\tilde{p}(y|x)$. We also note that $\hat{p}$ depends on $\lambda$ as follows:
    \begin{align}
        \hat{p}(k) = \lambda p_y(k) + (1 - \lambda) u(k)
    \end{align}
    where $p_y(k)$ is the original one-hot distribution, \ie, $p_y(k) = 1$ iff $k = y$ and $p_y(k) = 0$ otherwise with $y$ being the true label, and $u(k) = \nicefrac{1}{K}$ is the uniform distribution. We note that this is merely an alternative formulation to the target distribution as outlined in the main paper.
    Also note that $\lambda$ itself is a function of the norm $\|\delta\|$; here, this dependence is made explicit by writing $\hat{p}(\lambda)(y|x)$.
	This makes the expressions for the expected loss of \ConfTrain slightly more complicated. We first derive the Bayes optimal classifier and its loss for \ConfTrain.
    We introduce
    \begin{align}
        a = g_1(0)-g_2(0), \quad b=g_1(\epsilon)-g_2(\epsilon). \label{eq:supp-a-b}
    \end{align}
    and express the logarithm of the predicted probabilities (confidences) of class $1$ and $2$ in terms of these quantities.
	\begin{align*}
		\hspace*{-10px}
		-\log \tilde{p}(y=2|x=x)&= -\log\Big(\frac{e^{g_2(x)}}{e^{g_1(x)} + e^{g_2(x)}}\Big) = \log\big(1 + e^{g_1(x)-g_2(x)}\big)
     	=\begin{cases} \log\big(1+e^a\big) & \textrm{ if } x=0\\ \log(1+e^b) & \textrm{ if } x=\epsilon \end{cases}.\\
    	\hspace*{-10px}
    	-\log \tilde{p}(y=1|x=x)&=-\log\Big(\frac{e^{g_1(x)}}{e^{g_1(x)} + e^{g_2(x)}}\Big) = \log\big(1 + e^{g_2(x)-g_1(x)}\big) = 
    	\begin{cases} \log(1+e^{-a}) & \textrm{ if } x=0\\ \log\big(1+e^{-b}\big) & \textrm{ if } x=\epsilon\end{cases}.
    \end{align*}
    
    We consider the approach by \cite{MadryICLR2018} with $100\%$ adversarial training. The expected loss can be written as
	\begin{align*}
		&\Exp\Big[\max_{\norm{\delta}_\infty\leq \epsilon} L(y,g(x+\delta)\Big]= \Exp\Big[\Exp\Big[\max_{\norm{\delta}_\infty\leq \epsilon} L(y,g(x+\delta)|x\Big]\Big]\\
		=&\Pr(x=0)\Pr(y=2|x=0)\max\left\{-\log\big(\tilde{p}(y=2|x=0)\big),-\log\big(\tilde{p}(y=2|x=\epsilon)\big)\right\}\\
		&+(1-\Pr(x=0))\Pr(y=1|x=\epsilon)\max\left\{-\log\big(\tilde{p}(y=1|x=0)\big),-\log\big(\tilde{p}(y=1|x=\epsilon)\big)\right\}\\
		=&\Pr(x=0)\max\left\{-\log\big(\tilde{p}(y=2|x=0)\big),-\log\big(\tilde{p}(y=2|x=\epsilon)\big)\right\}\\
	 	&+ (1-\Pr(x=0))\max\left\{-\log\big(\tilde{p}(y=1|x=0)\big),-\log\big(\tilde{p}(y=1|x=\epsilon)\right\}
	\end{align*}
	This yields in terms of the parameters $a,b$ the expected loss:
    \begin{align*}
    	L(a,b)=&  \max\Big\{ \log(1+e^a),\log(1+e^b)\Big\}p_0 + \max\Big\{ \log(1+e^{-a}),\log(1+e^{-b})\Big\}(1-p_0)
    \end{align*}
   	The expected loss is minimized if $a=b$ as then both maxima are minimal. This results in the expected loss
   	\begin{align*}
    	L(a)= \log(1+e^a)p_0 + \log(1+e^{-a}) (1-p_0).
    \end{align*}
    The critical point is attained at $a^*=b^*=\log\Big(\frac{1-p_0}{p_0}\Big)$. 
    Thus
    \begin{align*}
        a^*=b^*=\begin{cases} >0 & \textrm{ if } p_0<\frac{1}{2},\\ <0 & \textrm{ if } p_0>\frac{1}{2}.\end{cases}
    \end{align*}
    Thus, we classify $x=0$ correctly, if $p_0>\frac{1}{2}$ and $x=\epsilon$ correctly if $p_0<\frac{1}{2}$. As result, the error of $100\%$ adversarial training is given by
    $\min\{p_0,1-p_0\}$ whereas the Bayes optimal error is zero as the problem is deterministic.
    
    Next we consider $50\%$ adversarial plus $50\%$ clean training. The expected loss 
	\begin{align*}
		\Exp\Big[\max_{\norm{\delta}_\infty\leq \epsilon} L(y,g(x+\delta)\Big] + \Exp\Big[ L(y,g(x+\delta)\Big],
	\end{align*}
	can be written as
    \begin{align*}
	    L(a,b)=&  \max\Big\{ \log(1+e^a),\log(1+e^b)\Big\}p_0 \\
	           &+ \max\Big\{ \log(1+e^{-a}),\log(1+e^{-b})\Big\}(1-p_0)\\
	           &+ \log(1+e^a)p_0 + \log(1+e^{-b})(1-p_0)
    \end{align*}
    We make a case distinction. If $a\geq b$, then the loss reduces to
    \begin{align*}
	    L(a,b)=&    \log(1+e^a)p_0 + \log(1+e^{-b})(1-p_0) \\
	           & + \log(1+e^a)p_0 + \log(1+e^{-b})(1-p_0)\\
	           &\geq L(a,a)\\
	           &=2\log(1+e^a)p_0 + 2\log(1+e^{-a})(1-p_0)
    \end{align*}
    Solving for the critical point yields $a^*=\log\Big(\frac{1-p_0}{p_0}\Big)=b^*$. Next we consider the set $a\leq b$.  This yields the loss
    \begin{align*}
	    L(a,b)=& \log(1+e^b)p_0 + \log(1+e^{-a})(1-p_0)\\
	           &+ \log(1+e^a)p_0 + \log(1+e^{-b})(1-p_0)
    \end{align*}
    Solving for the critical point yields $a^*=\log\Big(\frac{1-p_0}{p_0}\Big)=b^*$ which fulfills $a\leq b$. Actually, it coincides with the solution found already for $100\%$ adversarial training and thus resulting error of $50\%$ adversarial, $50\%$ clean training is again $\min\{p_0,1-p_0\}$ which is not equal to the Bayes optimal error.
    
    For our confidence-calibrated adversarial training one first has to solve
	\begin{align*}
		\delta_x^*(j)=\argmax_{\norm{\delta}_\infty \leq \epsilon} \max\limits_{k\neq j} \tilde{p}(y=k\,|\,x+\delta).
	\end{align*}
	We get with the expressions above (note that we have a binary classification problem):
    \begin{align*}
	    \delta^*_0(1)=\argmax_{\norm{\delta}_\infty \leq \epsilon} \tilde{p}(y=2|{0}+\delta) = \begin{cases} 0 & \textrm{ if } a<b \\ \epsilon & \textrm{ else}.\end{cases}.\\
	    \delta^*_0(2)= \argmax_{\norm{\delta}_\infty \leq \epsilon} \tilde{p}(y=1|{0} +\delta) = \begin{cases} \epsilon & \textrm{ if }  a<b  \\ 0 & \textrm{ else}.\end{cases}.\\
			\delta^*_\epsilon(1)= \argmax_{\norm{\delta}_\infty \leq \epsilon} \tilde{p}(y=2|\epsilon+\delta) = \begin{cases} -\epsilon & \textrm{ if }  a<b  \\0 & \textrm{ else}.\end{cases}.\\
	    \delta^*_\epsilon(2)=\argmax_{\norm{\delta}_\infty \leq \epsilon} \tilde{p}(y=1|\epsilon +\delta) = \begin{cases} 0 & \textrm{ if }  a<b  \\ -\epsilon & \textrm{ else}.\end{cases}.
    \end{align*}
    Note that the imposed distribution $\hat{p}$ over the classes depends on the true label $y$ of $x$ and $\delta^*_x(y)$ and then $\lambda(\norm{\delta^*_x(y)}_\infty)$. Due to the simple structure of the problem it holds that $\norm{\delta^*_x(y)}_\infty$ is either $0$ or $\epsilon$.
    
	In \ConfTrain we use for $50\%$ of the batch the standard cross-entropy loss and for the other $50\%$ we use the following loss:
	\begin{align*}
		L\left(\hat{p}_y\big(\lambda\big(\norm{\delta^*_x(y)}\big)\big)(x),\tilde{p}(x)\right)= - \sum_{j=1}^2 \hat{p}_y\big(\lambda\big(\norm{\delta^*_x(y)}\big)\big)(y=j\,|\,x=x) \log\left(\tilde{p}\big(y=j\,|\,x=x+\delta^*_x(j)\big)\right).
	\end{align*}
	The corresponding expected loss is then given by
	\begin{align*}
		&\Exp\Big[ L\Big(\hat{p}_y\big(\lambda\big(\norm{\delta^*_x(y)}\big)\big)(x),\tilde{p}(x)\Big)\Big]
		=\Exp\Big[\,\Exp\Big[ L\Big(\hat{p}_y\big(\lambda\big(\norm{\delta^*_x(y)}\big)\big)(x),\tilde{p}(x)\Big) \Big| \,x\Big]\Big] \\
		=&\Pr(x=0)\,\Exp\Big[ L\Big(\hat{p}_y\big(\lambda\big(\norm{\delta^*_0(y)}\big)\big)(0),\tilde{p}(0)\Big)\Big|\,x=0\Big] + \Pr(x=\epsilon)\,\Exp\Big[ L\Big(\hat{p}_Y\big(\lambda\big(\norm{\delta^*_\epsilon(y)}\big)\big)(\epsilon),\tilde{p}(\epsilon)\Big)\Big|\,x=\epsilon\Big],
	\end{align*}
	where $\Pr(x=0)=p_0$ and $\Pr(x=\epsilon)=1-\Pr(x=0)=1-p_0$. With the true conditional probabilities $\Pr(y=s\,|\,x)$ we get 
	\begin{align*}
		&\Exp\Big[ L\Big(\hat{p}_y\big(\lambda(\delta)\big)(x),\tilde{p}(x)\Big)\Big|\,x\Big]
		=\sum_{s=1}^2 \Pr(y=s\,|\,x) \, L\Big(\hat{p}_s\big(\lambda\big(\norm{\delta^*_x(s)}\big)\big)(x),\tilde{p}(x)\Big)\\
		=&- \sum_{s=1}^2 \Pr(y=s\,|\,x) \sum_{j=1}^2 \hat{p}_s\big(\lambda\big(\norm{\delta^*_x(s)}\big)\big)(y=j\,|\,x) \log\big(\tilde{p}(y=j\,|\,x=x+\delta^*_x(s))\big)
	\end{align*}
	For our problem it holds with $\Pr(y=2\,|\,x=0)=\Pr(y=1\,|\,x=\epsilon) =1$ (by assumption). Thus,
	\begin{align*}
		\Exp\Big[ L\Big(\hat{p}_y\big(\lambda(\delta)\big)(x),\tilde{p}(x)\Big)\Big|x=0\Big]=-\sum_{j=1}^2 \hat{p}_2\big(\lambda\big(\norm{\delta^*_0(2)}\big)\big)(y=j\,|\,x=0) \log\big(\tilde{p}(y=j\,|\,x=x+\delta^*_0(2))\big)\\
		\Exp\Big[ L\Big(\hat{p}_y\big(\lambda(\delta)\big)(x),\tilde{p}(x))\Big|\,x=\epsilon\Big]=-\sum_{j=1}^2 \hat{p}_1\big(\lambda\big(\norm{\delta^*_\epsilon(1)}\big)\big)(y=j\,|\,x=\epsilon) \log\big(\tilde{p}(y=j\,|\,x=x+\delta^*_\epsilon(1))\big)
	\end{align*}
	As $\norm{\delta^*_x(y)}_\infty$ is either $0$ or $\epsilon$ and $\lambda(0)=1$ we use in the following for simplicity the notation $\lambda=\lambda(\norm{\epsilon}_\infty)$. Moreover, note that 
	\begin{align*}
		\hat{p}_y(\lambda)(y=j|x=x)=\begin{cases} \lambda + \frac{(1-\lambda)}{K} & \textrm{ if } y = j,\\ \frac{(1-\lambda)}{K} & \textrm{ else }\end{cases},
	\end{align*}
	where $K$ is the number of classes. Thus, $K=2$ in our example and we note that $\lambda + \frac{(1-\lambda)}{2}=\frac{1+\lambda}{2}$.
	
    With this we can write the total loss (remember that we have half normal cross-entropy loss and half the loss for the adversarial part with the modified ``labels'') as
    \begin{align*}
    	L(a,b)=&  p_0\Big[\log(1+e^a) \Id_{a\geq b} 
           + \Id_{a<b}\Big( \frac{(1+\lambda)}{2}\log(1+e^b)+\frac{(1-\lambda)}{2}\log(1+e^{-b})\Big)\Big]\\ 
					 &+ (1-p_0)\Big[\log(1+e^{-b})   \Id_{a\geq b} +\Id_{a<b}\Big( \frac{(1+\lambda)}{2}\log(1+e^{-a})+\frac{(1-\lambda)}{2}\log(1+e^a)\Big)\Big]\\
           &+ \log(1+e^a)p_0 + \log(1+e^{-b})(1-p_0),
    \end{align*}
    where we have omitted a global factor $\frac{1}{2}$ for better readability (the last row is the cross-entropy loss).  We distinguish two sets in the optimization. First we consider the case $a\geq b$. Then it is easy to see that in order to minimize the loss we have $a=b$. 
    \begin{align*}
    	\partial_a L &= 2\frac{e^a}{1+e^a}p_0-\frac{e^{-a}}{1+e^{-a}}(1-p_0)
    \end{align*}
    This yields $e^a=\frac{1-p_0}{p_0}$ or $a=\log\big(\frac{1-p_0}{p_0}\big)$ and the minimum for $a\geq b$ is attained on the boundary of the domain of $a\leq b$. The other case is $a\leq b$. We get
    \begin{align*}
    	\partial_a L =& \Big[\frac{(1+\lambda)}{2}\frac{-e^{-a}}{1+e^{-a}}+\frac{(1-\lambda)}{2}\frac{e^a}{1+e^a}\Big](1-p_0)
                   + p_0 \frac{e^a}{1+e^a}\\
    	\partial_b L =& \Big[\frac{(1+\lambda)}{2}\frac{e^{b}}{1+e^{b}}+\frac{(1-\lambda)}{2}\frac{-e^b}{1+e^{-b}}\Big]p_0 
                  + (1-p_0) \frac{-e^{-b}}{1+e^{-b}}
    \end{align*}
    This yields the solution
    \begin{align*}
    	a^* = \log\left(\frac{\frac{1+\lambda}{2}(1-p_0)}{p_0 + \frac{1-\lambda}{2}(1-p_0)}\right), \qquad
    	b^* = \log\left(\frac{\frac{1-\lambda}{2}p_0 + (1-p_0)}{\frac{1+\lambda}{2}p_0}\right)
    \end{align*}
    It is straightforward to check that $a^*<b^*$ for all $0<p_0<1$, indeed we have
	\begin{align*}
		\frac{\frac{1+\lambda}{2}(1-p_0)}{p_0 + \frac{1-\lambda}{2}(1-p_0)} &= \frac{\frac{1+\lambda}{2}(1-p_0)}{p_0 \frac{1+\lambda}{2} +\frac{1-\lambda}{2}}
		=\frac{1-p_0 - \frac{(1-\lambda)}{2}(1-p_0)}{p_0 \frac{1+\lambda}{2} +\frac{1-\lambda}{2}} < \frac{\frac{1-\lambda}{2}p_0 + (1-p_0)}{\frac{1+\lambda}{2}p_0}
	\end{align*}
	if $0<p_0<1$ and note that $\lambda<1$ by assumption. We have $a^*<0$ and thus $g_2(0)>g_1(0)$ (Bayes optimal decision for $x=0$) if 
	\begin{align*}
    	1 > \frac{1-p_0}{p_0}\lambda,
	\end{align*}
   	and $b^*>0$ and thus $g_1(\epsilon)>g_2(\epsilon)$ (Bayes optimal decision for $x=\epsilon$) if
   	\begin{align*}
	    1 > \frac{p_0}{1-p_0}\lambda.
   	\end{align*}
    Thus we recover the Bayes classifier if
    \begin{align*}
    	\lambda < \min\Big\{\frac{1-p_0}{p_0},\frac{p_0}{1-p_0}\Big\}.
   	\end{align*}
\end{proof}
	\section{Experiments}
\label{sec:supp-experiments}

In the following, we provide additional details on our experimental setup regarding (a) the used attacks, especially our \PGD-\FConf attack including pseudo-code in \secref{subsec:supp-experiments-attacks}, (b) training of \AdvTrain and \ConfTrain in \secref{subsec:supp-experiments-training}, (c) the evaluated baselines in \secref{subsec:supp-experiments-baselines} and (d) the used evaluation metrics in \secref{subsec:supp-experiments-evaluation}. Afterwards, we include additional experimental results, including ablation studies in \secref{subsec:supp-experiments-ablation}, qualitative results for analysis in \secref{subsec:supp-experiments-analysis}, and further results for $95\%$ and $98\%$ true positive rate (TPR), results per attack and results per corruption on MNIST-C \citep{MuICMLWORK2019} and Cifar10-C \citep{HendrycksARXIV2019} in \secref{subsec:supp-experiments-results}.

\subsection{Attacks}
\label{subsec:supp-experiments-attacks}

\textbf{Projected Gradient Descent (\PGD):}
Complementary to the description of the projected gradient descent (\PGD) attack by \cite{MadryICLR2018} and our adapted attack, we provide a detailed algorithm in \algref{alg:supp-pgd}. We note that the objective maximized in \citep{MadryICLR2018} is
\begin{align}
    \mathcal{F}(x + \delta, y) = \cL(f(x + \delta; w), y)\label{eq:supp-attack}
\end{align}
where $\mathcal{L}$ denotes the cross-entropy loss, $f(\cdot;w)$ denotes the model and $(x,y)$ is an input-label pair from the test set. Our adapted attack, in contrast, maximizes
\begin{align}
    \mathcal{F}(x + \delta, y) = \max_{k\neq y}f_k(x + \delta;w)\label{eq:supp-conf-attack}
\end{align}
where $f_k$ denotes the confidence of $f$ in class $k$. Note that the maximum over labels, \ie, $\max_{k\neq y}$, is explicitly computed during optimization; this means that in contrast to \citep{GoodfellowOPENREVIEW2019}, we do not run $(K - 1)$ targeted attacks and subsequently take the maximum-confidence one, where $K$ is the number of classes. We denote these two variants as \PGD-\FCE and \PGD-\FConf, respectively. Deviating from \citep{MadryICLR2018}, we initialize $\delta$ uniformly over directions and norm (instead of uniform initialization over the volume of the $\epsilon$-ball):
\begin{align}
    \delta = u\epsilon \frac{\delta'}{\|\delta'\|_\infty},\quad\delta' \sim \mathcal{N}(0, I),u\sim U(0,1)\label{eq:supp-initialization}
\end{align}
where $\delta'$ is sampled from a standard Gaussian and $u \in [0,1]$ from a uniform distribution. We also consider zero initialization, \ie, $\delta = 0$. For random initialization we always consider multiple restarts, $10$ for \PGD-\FConf and $50$ for \PGD-\FCE; for zero initialization, we use $1$ restart. Finally, in contrast to \cite{MadryICLR2018}, we run \PGD for exactly $T$ iterations, taking the perturbation corresponding to the best objective value obtained throughout the optimization.

\begin{algorithm*}[t]
    \caption{\textbf{Projected Gradient Descent (\PGD) with Backtracking.} Pseudo-code for the used \PGD procedure to maximize \eqnref{eq:supp-attack} or \eqnref{eq:supp-conf-attack} using momentum and backtracking subject to the constraints $\tilde{x}_i = x_i + \delta_i \in [0,1]$ and $\|\delta\|_\infty \leq \epsilon$; in practice, the procedure is applied on batches of inputs. The algorithm is easily adapted to work with arbitrary $L_p$-norm; only the projections on Line \ref{line:projection1} and \ref{line:projection2} as well as the normalized gradient in Line \ref{line:normalized-grad} need to be adapted.}
    \label{alg:supp-pgd}
    \begin{algorithmic}[1]
        \small
        \STATEx \textbf{input:} example $x$ with label $y$
        \STATEx \textbf{input:} number of iterations $T$
        \STATEx \textbf{input:} learning rate $\gamma$, momentum $\beta$, learning rate factor $\alpha$
        \STATEx \textbf{input:} initial $\delta^{(0)}$, \eg, \eqnref{eq:supp-initialization} or $\delta^{(0)} = 0$
        \STATE $v := 0$ \COMMENT{saves the best objective achieved}
        \STATE $\tilde{x} := x + \delta^{(0)}$ \COMMENT{best adversarial example obtained}
        \STATE $g^{(-1)} := 0$ \COMMENT{accumulated gradients}
        \FOR{$t = 0,\ldots,T$}
        \STATE \COMMENT{projection onto $L_\infty$ $\epsilon$-ball and on $[0,1]$:}
        \STATE clip $\delta^{(t)}_i$ to $[-\epsilon, \epsilon]$\label{line:projection1}
        \STATE clip $x_i + \delta^{(t)}_i$ to $[0,1]$
        \STATE \COMMENT{forward and backward pass to get objective and gradient:}
        \STATE $v^{(t)} := \mathcal{F}(x + \delta^{(t)}, y)$ \COMMENT{see \eqnref{eq:supp-attack} or \eqnref{eq:supp-conf-attack}}
        \STATE $g^{(t)} := \text{sign}\left(\nabla_{\delta^{(t)}} \mathcal{F}(x + \delta^{(t)}, y)\right)$\label{line:normalized-grad}
        \STATE \COMMENT{keep track of adversarial example resulting in best objective:}
        \IF{$v^{(t)} > v$}
        \STATE $v := v^{(t)}$
        \STATE $\tilde{x} := x + \delta^{(t)}$
        \ENDIF
        \STATE \COMMENT{iteration $T$ is only meant to check whether last update improved objective:}
        \IF{$t = T$}
        \STATE \textbf{break}
        \ENDIF
        \STATE \COMMENT{integrate momentum term:}
        \STATE $g^{(t)} := \beta g^{(t - 1)} + (1 - \beta)g^{(t)}$
        \STATE \COMMENT{``try'' the update step and see if objective increases:}
        \STATE $\hat{\delta}^{(t)} := \delta^{(t)} + \gamma g^{(t)}$
        \STATE clip $\hat{\delta}^{(t)}_i$ to $[-\epsilon, \epsilon]$\label{line:projection2}
        \STATE clip $x_i + \hat{\delta}^{(t)}_i$ to $[0,1]$
        \STATE $\hat{v}^{(t)} := \mathcal{F}(x + \hat{\delta}^{(t)}, y)$
        \STATE \COMMENT{only keep the update if the objective increased; otherwise decrease learning rate:}
        \IF{$\hat{v}^{(t)} \geq v^{(t)}$}
        \STATE $\delta^{(t + 1)} := \hat{\delta}^{(t)}$
        \ELSE
        \STATE $\gamma := \nicefrac{\gamma}{\alpha}$
        \ENDIF
        \ENDFOR
        \STATE \textbf{return} $\tilde{x}$, $\tilde{v}$
    \end{algorithmic}
\end{algorithm*}
\begin{table*}[t]
    \centering
    \begin{subfigure}{1\textwidth}
        \centering
        \footnotesize
        \input{tab_mmnist_attack_99tpr}
    \end{subfigure}
    \vskip 2px
    \begin{subfigure}{1\textwidth}
        \centering
        \footnotesize
        \input{tab_msvhn_attack_99tpr}
    \end{subfigure}
    \vskip 2px
    \begin{subfigure}{1\textwidth}
        \centering
        \footnotesize
        \input{tab_mcifar10_attack_99tpr}
    \end{subfigure}
    \vskip -6px
    \caption{\textbf{Detailed Attack Ablation Studies.} We compare our $L_\infty$ \PGD-\FConf attack with $T$ iterations and different combinations of momentum, backtracking and initialization on all three datasets. We consider \AdvTrain, \AdvTrain trained with \PGD-\FConf (\AdvTrain\FConf), and \ConfTrain; we report \RTE for confidence threshold $\tau$@$99\%$TPR. As backtracking requires an additional forward pass per iteration, we use $T = 60$ and $T = 300$ for attacks without backtracking to be comparable to attacks with $T = 40$ and $T = 200$ with backtracking. Against \ConfTrain, $T=1000$ iterations or more are required and backtracking is essential to achieve high \RTE. \AdvTrain, in contrast, is ``easier'' to attack, requiring less iterations and less sophisticated optimization (\ie, without momentum and/or backtracking).}
    \label{tab:supp-experiments-attack}
\end{table*}
\begin{table}[t]
    \centering
    \begin{subfigure}[t]{1\textwidth}
        \vspace*{0px}
        
        \centering
        \footnotesize
        \input{tab_msvhn_training_99tpr}
    \end{subfigure}
    \vskip 2px
    \begin{subfigure}[t]{1\textwidth}
        \vspace*{0px}
        
        \centering
        \footnotesize
        \input{tab_mcifar10_training_99tpr}
    \end{subfigure}
    \vskip -6px
    \caption{\textbf{Training Ablation Studies.} We report unthresholded \RTE and \TE, \ie, $\tau = 0$ (``Standard Setting''), and $\tau$@$99\%$TPR as well as ROC AUC (``Detection Setting'') for \ConfTrain with various values for $\rho$. The models are tested against our $L_\infty$ \PGD-\FConf attack with $T = 1000$ iterations and zero as well as random initialization. On Cifar10, $\rho = 10$ works best and performance stagnates for $\rho > 10$. On SVHN, we also use $\rho = 10$, although $\rho = 6$ shows better results.}
    \label{tab:supp-experiments-training}
\end{table}
\begin{figure*}[t]
    \begin{subfigure}[t]{0.485\textwidth}
        \vspace*{0px}
        
        \centering
        \textbf{MNIST} (worst-case of $L_\infty$ attacks with $\epsilon = 0.3$)
        
        \begin{subfigure}[t]{0.49\textwidth}
            \vspace*{0px}
            
            \centering
            \includegraphics[width=1\textwidth]{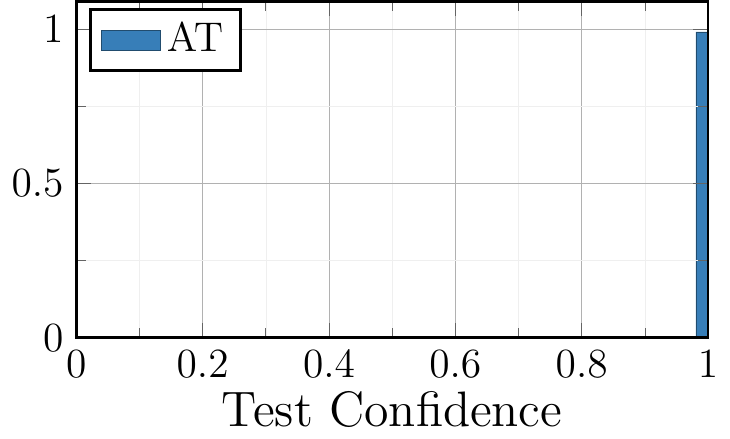}
            
            \includegraphics[width=1\textwidth]{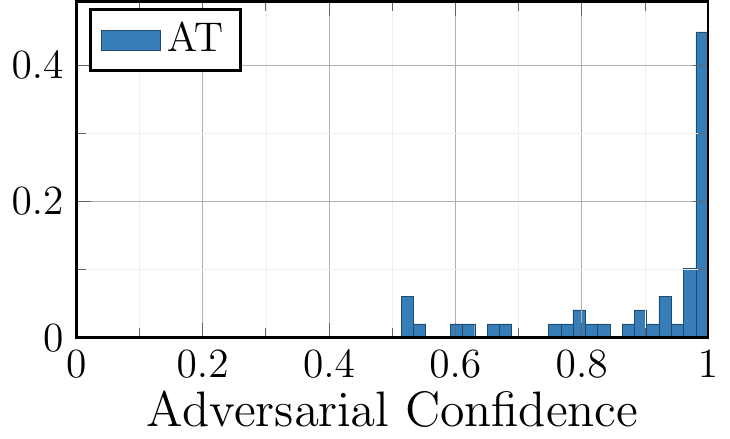}
        \end{subfigure}
        \begin{subfigure}[t]{0.49\textwidth}
            \vspace*{0px}
            
            \centering
            \includegraphics[width=1\textwidth]{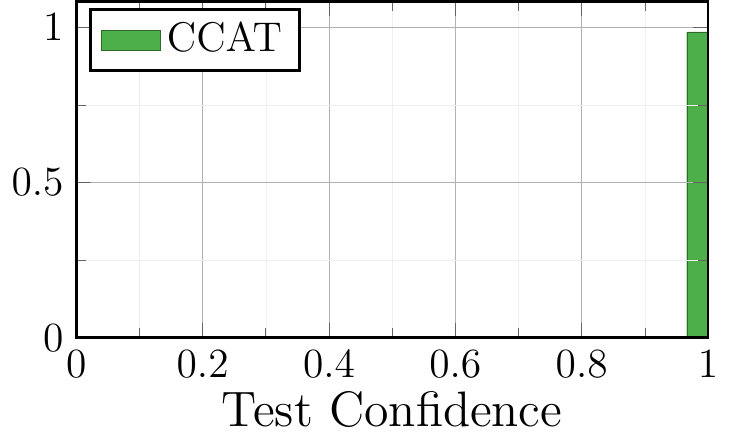}
            
            \includegraphics[width=1\textwidth]{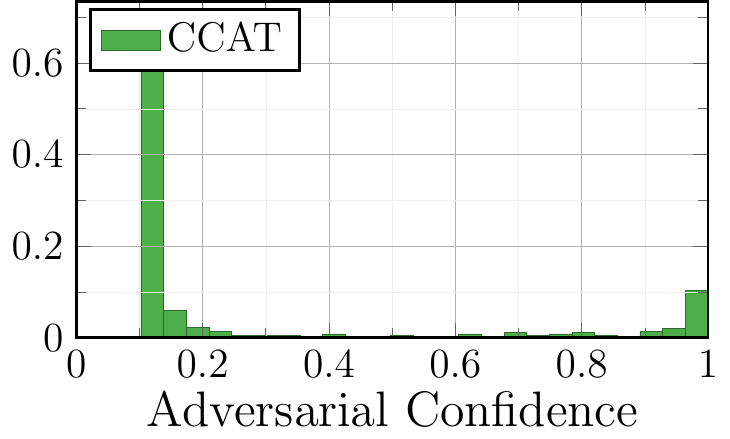}
        \end{subfigure}
    \end{subfigure}
    \hfill\vrule\hfill
    \begin{subfigure}[t]{0.485\textwidth}
        \vspace*{0px}
        
        \centering
        \textbf{Cifar10} (worst-case of $L_\infty$ attacks with $\epsilon = 0.03$)
        
        \begin{subfigure}[t]{0.49\textwidth}
            \vspace*{0px}
            
            \centering
            \includegraphics[width=1\textwidth]{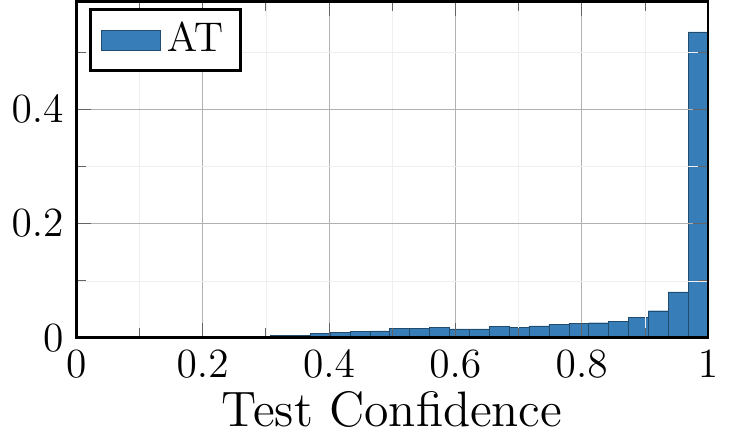}
            
            \includegraphics[width=1\textwidth]{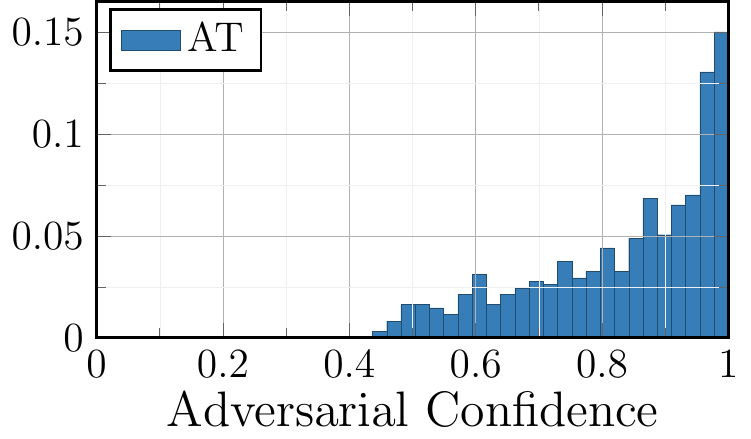}
        \end{subfigure}
        \begin{subfigure}[t]{0.49\textwidth}
            \vspace*{0px}
            
            \centering
            \includegraphics[width=1\textwidth]{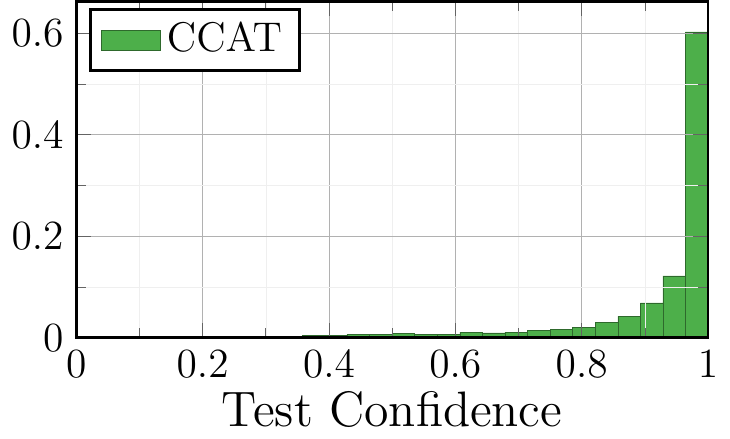}
            
            \includegraphics[width=1\textwidth]{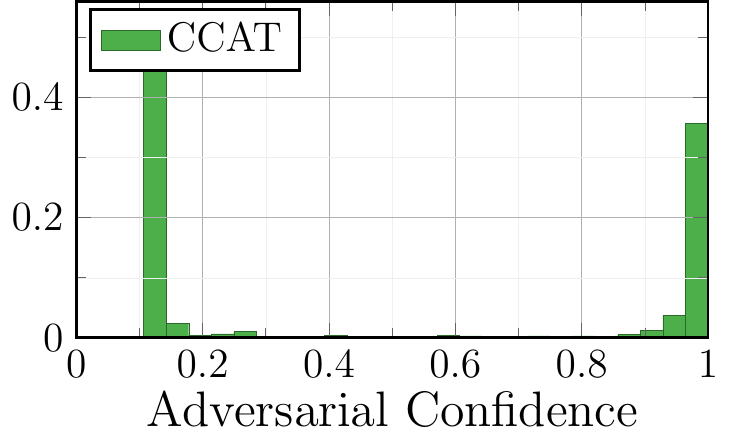}
        \end{subfigure}
    \end{subfigure}
    \vskip -6px
    \caption{\textbf{Confidence Histograms.} We show histograms of confidences on correctly classified test examples (top) and on adversarial examples (bottom) for both \AdvTrain and \ConfTrain. Note that for \AdvTrain, the number of successful adversarial examples is usually lower than for \ConfTrain. For \ConfTrain in contrast, nearly all adversarial examples are successful, while only a part has high confidence. Histograms obtained for the worst-case adversarial examples across all  tested $L_\infty$ attacks with $\epsilon = 0.3$ and $\epsilon = 0.03$ on MNIST and Cifar10, respectively.}
    \label{fig:supp-experiments-histograms}
\end{figure*}

\textbf{\PGD for $\boldsymbol{L_p}$, $\boldsymbol{p \in \{0, 1, 2\}}$:}
Both \PGD-\FCE and \PGD-\FConf can also be applied using the $L_2$, $L_1$ and $L_0$ norms following the description above. Then, gradient normalization in Line \ref{line:normalized-grad} of \algref{alg:supp-pgd}, the projection in Line \ref{line:projection1}, and the initialization in \eqnref{eq:supp-initialization} need to be adapted. For the $L_2$ norm, the gradient is normalized by dividing by the $L_2$ norm; for the $L_1$ norm only the $1\%$ largest values (in absolute terms) of the gradient are kept and normalized by their $L_1$ norm; and for the $L_0$ norm, the gradient is normalized by dividing by the $L_1$ norm. We follow the algorithm of \cite{DuchiICML2008} for the $L_1$ projection; for the $L_0$ projection (onto the $\epsilon$-ball for $\epsilon \in \mathbb{N}_0$), only the $\epsilon$ largest values are kept. Similarly, initialization for $L_2$ and $L_1$ are simple by randomly choosing a direction (as in \eqnref{eq:supp-initialization}) and then normalizing by their norm. For $L_0$, we randomly choose pixels with probability $(\frac{2}{3}\epsilon)/(HWD)$ and set them to a uniformly random values $u \in [0,1]$, where $H \times W \times D$ is the image size. In experiments, we found that tuning the learning rate for \PGD with $L_1$ and $L_0$ constraints (independent of the objective, \ie, \eqnref{eq:supp-attack} or \eqnref{eq:supp-conf-attack}) is much more difficult. Additionally, \PGD using the $L_0$ norm seems to get easily stuck in sub-optimal local optima.

\textbf{Backtracking:}
\algref{alg:supp-pgd} also gives more details on the employed momentum and backtracking scheme. These two ``tricks'' add two additional hyper-parameters to the number of iterations $T$ and the learning rate $\gamma$, namely the momentum parameter $\beta$ and the learning rate factor $\alpha$. After each iteration, the computed update, already including the momentum term, is only applied if this improves the objective. This is checked through an additional forward pass. If not, the learning rate is divided by $\alpha$, and the update is rejected. \algref{alg:supp-pgd} includes this scheme as an algorithm for an individual test example $x$ with label $y$ for brevity; however, extending it to work on batches is straight-forward. However, it is important to note that the learning rate is updated per test example individually. In practice, for \PGD-\FCE, with $T = 200$ iterations, we use $\gamma = 0.05$, $\beta = 0.9$ and $\alpha = 1.25$; for \PGD-\FConf, with $T = 1000$ iterations, we use $\gamma = 0.001$, $\beta = 0.9$ and $\alpha = 1.1$.

\textbf{Black-Box Attacks:}
We also give more details on the used black-box attacks. For random sampling, we apply \eqnref{eq:supp-initialization} $T = 5000$ times. We also implemented the Query-Limited (QL) black-box attack of \cite{IlyasICML2018} using a population of $50$ and variance of $0.1$ for estimating the gradient in Line \ref{line:normalized-grad} of \algref{alg:supp-pgd}; a detailed algorithm is provided in \citep{IlyasICML2018}. We use a learning rate of $0.001$ (note that the gradient is signed, as in \citep{MadryICLR2018}) and also integrated a momentum with $\beta = 0.9$ and backtracking with $\alpha = 1.1$ and $T = 1000$ iterations. We use zero and random initialization; in the latter case we allow $10$ random restarts. For the Simple black-box attack we follow the algorithmic description in \citep{NarodytskaCVPRWORK2017} considering only axis-aligned perturbations of size $\epsilon$ per pixel. We run the attack for $T = 1000$ iterations and allow $10$ random restarts. Following, \cite{KhouryARXIV2018}, we further use the Geometry attack for $T = 1000$ iterations. Random sampling, QL, Simple and Geometry attacks are run for arbitrary $L_p$, $p \in \{\infty, 2, 1, 0\}$. For $L_\infty$, we also use the Square attack proposed in \citep{AndriushchenkoARXIV2019} with $T = 5000$ iterations with a probability of change of $0.05$. For all attacks, we use \eqnref{eq:supp-conf-attack} as objective. Finally, for $L_0$, we also use Corner Search \cite{CroceICCV2019} with the cross-entropy loss as objective, for $T = 200$ iterations. We emphasize that, except for QL, these attacks are not gradient-based and do not approximate the gradient. Furthermore, we note that all attacks except Corner Search are adapted to explicitly attack \ConfTrain by maximizing \eqnref{eq:supp-conf-attack}.

\subsection{Training}
\label{subsec:supp-experiments-training}

We follow the ResNet-20 architecture by \cite{HeCVPR2016} implemented in PyTorch \citep{PaszkeNIPSWORK2017}. For training we use a batch size of $100$ and train for $100$ and $200$ epochs on MNIST and SVHN/Cifar10, respectively: this holds for normal training, adversarial training (\AdvTrain) and confidence-calibrated adversarial training (\ConfTrain). For the latter two, we use \PGD-\FCE and \PGD\FConf, respectively, for $T = 40$ iterations including momentum and backtracking ($\beta = 0.9$, $\alpha = 1.5$). For \PGD-\FCE we use a learning rate of $0.05$, $0.01$ and $0.005$ on MNIST, SVHN and Cifar10. For \PGD-\FConf we use a learning rate of $0.005$. For \ConfTrain, we randomly switch between the initialization in \eqnref{eq:supp-initialization} and zero initialization. For training, we use standard stochastic gradient descent, starting with a learning rate of $0.1$ on MNIST/SVHN and $0.075$ on Cifar10. The learning rate is multiplied by $0.95$ after each epoch. We do not use weight decay; but the network includes batch normalization \citep{IoffeICML2015}. On SVHN and Cifar10, we use random cropping, random flipping (only Cifar10) and contrast augmentation during training. We always train on $50\%$ clean and $50\%$ adversarial examples per batch, \ie, each batch contains both clean and adversarial examples which is important when using batch normalization.

\subsection{Baselines}
\label{subsec:supp-experiments-baselines}

As baseline, we use the multi-steepest descent (\Wong) adversarial training of \cite{MainiICML2020}, using the code and models provided in the official repository\footnote{\url{https://github.com/locuslab/robust_union}}. The models correspond to a LeNet-like \cite{LecunIEEE1998} architecture on MNIST, and the pre-activation version of ResNet-18 \cite{HeCVPR2016} on Cifar10. The models were trained with $L_\infty$, $L_2$ and $L_1$ adversarial examples and $\epsilon$ set to $0.3, 1.5, 12$ and $0.03, 0.5, 12$, respectively. We attacked these models using the same setup as used for standard \AdvTrain and our \ConfTrain. 

Additionally, we compare to \TRADES \cite{ZhangICML2019} using the code and pre-trained models from the official repository\footnote{\url{https://github.com/yaodongyu/TRADES}}. The models correspond to a convolutional architecture with four convolutional and three fully-connected layers \cite{CarliniSP2017} on MNIST, and a wide ResNet, specifically WRN-10-28 \cite{ZagoruykoBMVC2016}, on Cifar10. Both are trained using \emph{only} $L_\infty$ adversarial examples with $\epsilon = 0.3$ and $\epsilon = 0.03$, respectively. The evaluation protocol follows the same setup as used for standard \AdvTrain and \ConfTrain.

On Cifar10, we also use the pre-trained ResNet-50 from \cite{MadryICLR2018} obtained from the official repository\footnote{\url{https://github.com/MadryLab/robustness}}. The model was trained on $L_\infty$ adversarial examples with $\epsilon = 0.03$. The same evaluation as for \ConfTrain applies.

Furthermore, we evaluate two detection baseline: the Mahalanobis detector (\Lee) of \cite{MaICLR2018} and the local intrinsic dimensionality (\Ma) detector of \cite{LeeNIPS2018}.
We used the code provided by \cite{LeeNIPS2018} from the official repository\footnote{\url{https://github.com/pokaxpoka/deep_Mahalanobis_detector}}. For evaluation, we used the provided setup, adding \emph{only} \PGD-\FCE and \PGD-\FConf with $T=1000$, $T=200$ and $T = 40$. For $T=1000$, we used $5$ random restarts, for $T=200$, we used $25$ restarts, and for $T = 40$, we used one restart. These were run for $L_\infty$, $L_2$, $L_1$ and $L_0$. We also evaluated distal adversarial examples as in the main paper. While the hyper-parameters were chosen considering our $L_\infty$ \PGD-\FCE attack ($T = 40$, one restart) and kept fixed for other threat models, the logistic regression classifier trained on the computed statistics (\eg, the Mahalanobis statistics) is trained for each threat model individually, resulting in an advantage over \AdvTrain and \ConfTrain. For worst-case evaluation, where we keep the highest-confidence adversarial example per test example for \ConfTrain, we use the obtained detection score instead. This means, for each test example individually, we consider the adversarial example with worst detection score for evaluation.

\subsection{Evaluation Metrics}
\label{subsec:supp-experiments-evaluation}

Complementing the discussion in the main paper, we describe the used evaluation metrics and evaluation procedure in more detail. Adversarial examples are computed on the first 1000 examples of the test set; the used confidence threshold is computed on the last 1000 examples of the test set; test errors are computed on all test examples minus the last 1000. As we consider multiple attacks, and some attacks allow multiple random restarts, we always consider the worst case adversarial example per test example and across all attacks/restarts; the worst-case is selected based on confidence.

\textbf{FPR and ROC AUC:}
To compute receiver operating characteristic (ROC) curves, and the area under the curve, \ie, ROC AUC, we define negatives as \emph{successful} adversarial examples (corresponding to correctly classified test examples) and positives as the corresponding \emph{correctly classified} test examples. The ROC AUC as well as the curve itself can easily be calculated using scikit-learn \citep{PedregosaJMLR2011}. Practically, the generated curve could be used to directly estimate a threshold corresponding to a pre-determined true positive rate (TPR). However, this requires interpolation; after trying several interpolation schemes, we concluded that the results are distorted significantly, especially for TPRs close to $100\%$. Thus, we follow a simpler scheme: on a held out validation set of size $1000$ (the last 1000 samples of the test set), we sorted the corresponding confidences, and picked the confidence threshold in order to obtain (at least) the desired TPR, \eg, $99\%$.

\begin{figure*}[t]
    \begin{subfigure}[t]{0.485\textwidth}
        \vspace*{0px}
        
        \centering
        \textbf{MNIST} (worst-case of $L_\infty$ attacks with $\epsilon = 0.3$)
        
        \begin{subfigure}[t]{0.47\textwidth}
            \vspace*{0px}
            
            \centering
            \includegraphics[height=3.5cm]{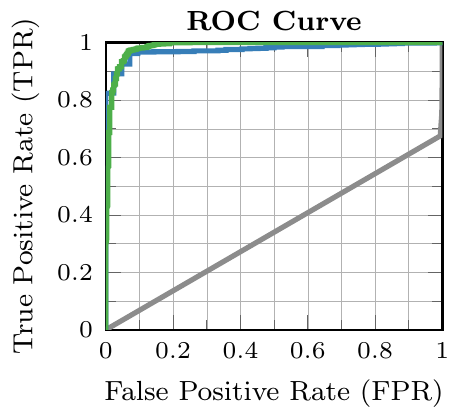}
        \end{subfigure}
        \begin{subfigure}[t]{0.49\textwidth}
            \vspace*{0px}
            
            \centering			
            \includegraphics[height=3.5cm]{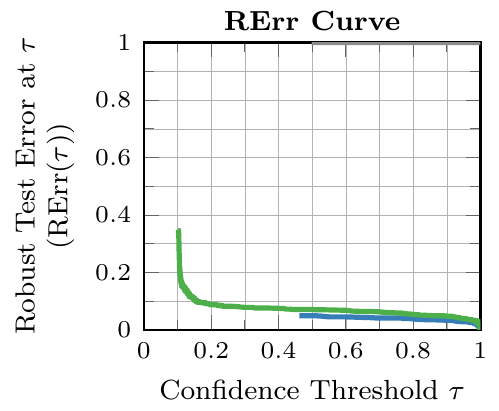}
        \end{subfigure}
        \begin{subfigure}{0.925\textwidth}
            \fbox{
                \hspace*{1.5cm}\includegraphics[width=0.6\textwidth]{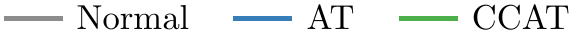}\hspace*{1.5cm}
            }
        \end{subfigure}
    \end{subfigure}
    \hfill
    \vrule
    \hfill
    \begin{subfigure}[t]{0.485\textwidth}
        \vspace*{0px}
        
        \centering
        \textbf{Cifar10} (worst-case of $L_\infty$ attacks with $\epsilon = 0.03$)
        
        \begin{subfigure}[t]{0.47\textwidth}
            \vspace*{0px}
            
            \centering
            \includegraphics[height=3.5cm]{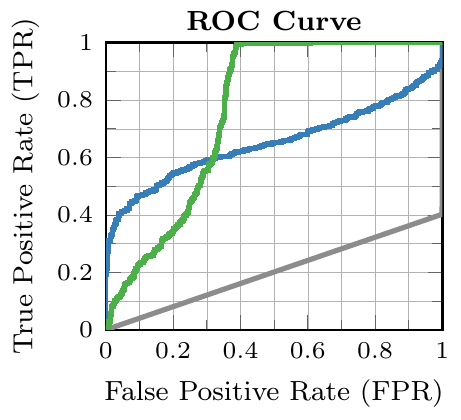}
        \end{subfigure}
        \begin{subfigure}[t]{0.49\textwidth}
            \vspace*{0px}
            
            \centering			
            \includegraphics[height=3.5cm]{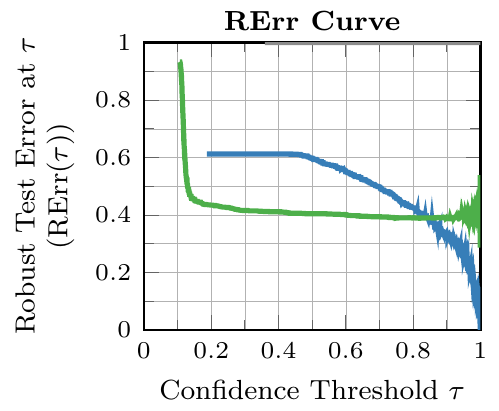}
        \end{subfigure}
        \begin{subfigure}{0.925\textwidth}
            \fbox{
                \hspace*{1.5cm}\includegraphics[width=0.6\textwidth]{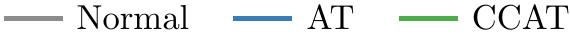}\hspace*{1.5cm}
            }
        \end{subfigure}
    \end{subfigure}
    \vskip -6px
    \caption{\textbf{ROC and \RTE curves.} ROC curves, \ie FPR plotted against TPR for all possible confidence thresholds $\tau$, and (confidence-thresholded) \RTE curves, \ie, \RTE over confidence threshold $\tau$ for \AdvTrain and \ConfTrain, including different $\rho$ parameters. Worst-case adversarial examples across all $L_\infty$ attacks with $\epsilon = 0.3$ (MNIST) and $\epsilon = 0.03$ (Cifar10) were tested. For evaluation, the confidence threshold $\tau$ is fixed at $99\%$TPR, allowing to reject at most $1\%$ correctly classified clean examples. Thus, we also do not report the area under the ROC curve in the main paper.}
    \label{fig:supp-experiments-evaluation}
\end{figure*}
\begin{figure*}[t]
    \centering
    \footnotesize
    \begin{subfigure}{1\textwidth}
        \centering
        \textbf{SVHN:} \textbf{\AdvTrain} with $L_\infty$ \PGD-\FConf, $\epsilon = 0.03$ for training \emph{and} testing
    \end{subfigure}
    \\[2px]
    \begin{subfigure}{0.19\textwidth}
        \includegraphics[height=2.2cm]{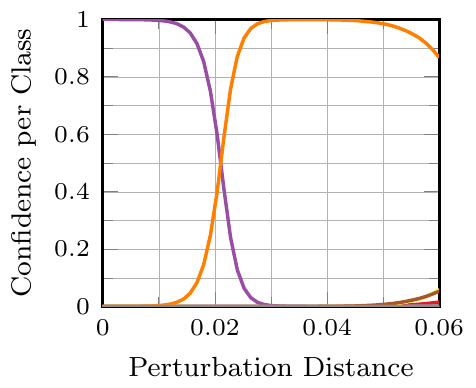}
    \end{subfigure}
    \begin{subfigure}{0.19\textwidth}
        \includegraphics[height=2.2cm]{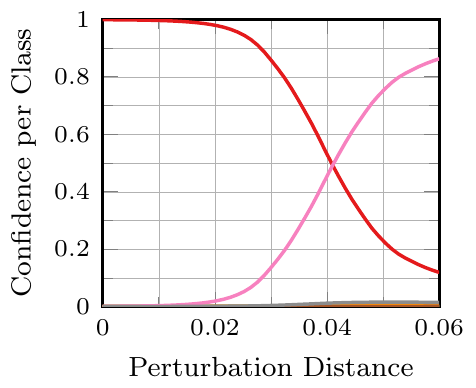}
    \end{subfigure}
    \begin{subfigure}{0.19\textwidth}
        \includegraphics[height=2.2cm]{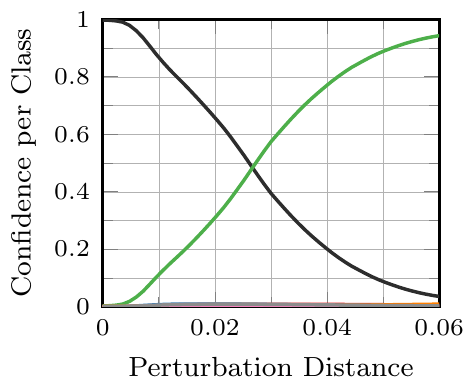}
    \end{subfigure}
    \begin{subfigure}{0.19\textwidth}
        \includegraphics[height=2.2cm]{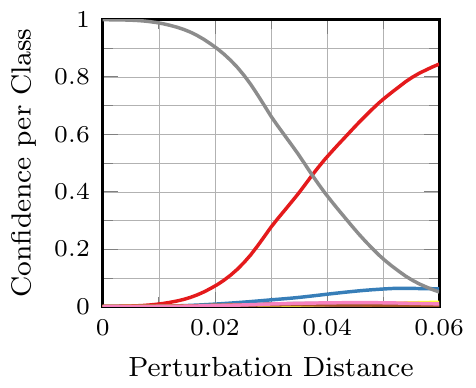}
    \end{subfigure}
    \begin{subfigure}{0.19\textwidth}
        \includegraphics[height=2.2cm]{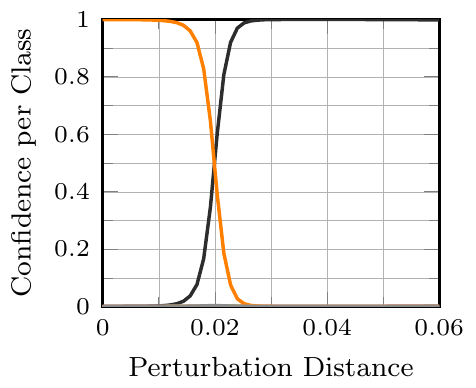}
    \end{subfigure}
    \\
    \begin{subfigure}{0.19\textwidth}
        \includegraphics[height=2.2cm]{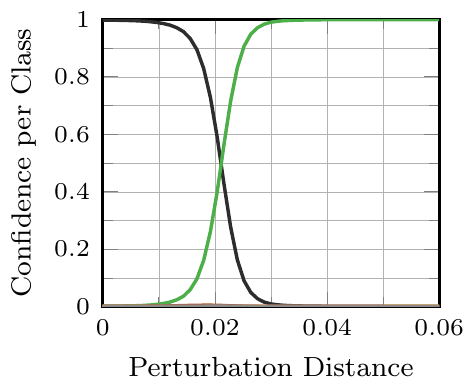}
    \end{subfigure}
    \begin{subfigure}{0.19\textwidth}
        \includegraphics[height=2.2cm]{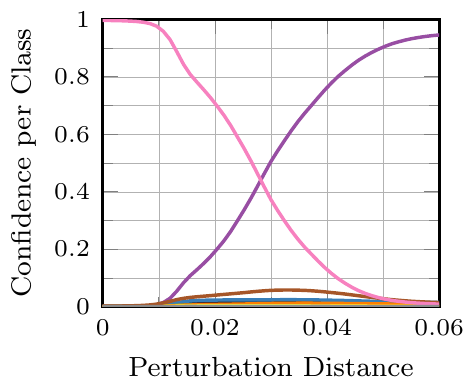}
    \end{subfigure}
    \begin{subfigure}{0.19\textwidth}
        \includegraphics[height=2.2cm]{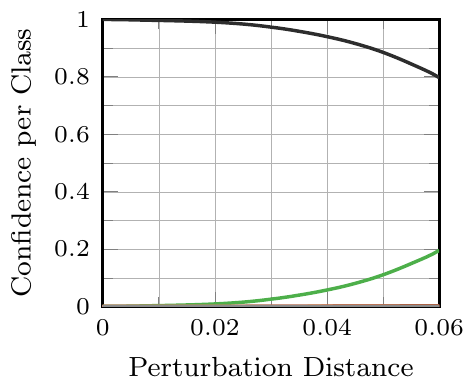}
    \end{subfigure}
    \begin{subfigure}{0.19\textwidth}
        \includegraphics[height=2.2cm]{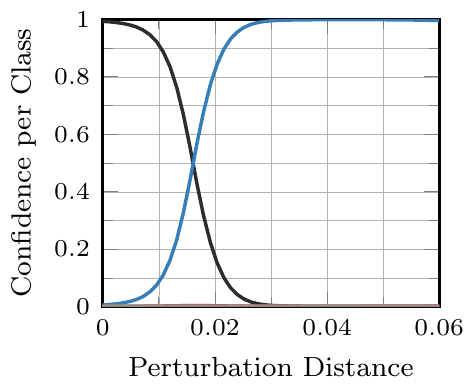}
    \end{subfigure}
    \begin{subfigure}{0.19\textwidth}
        \includegraphics[height=2.2cm]{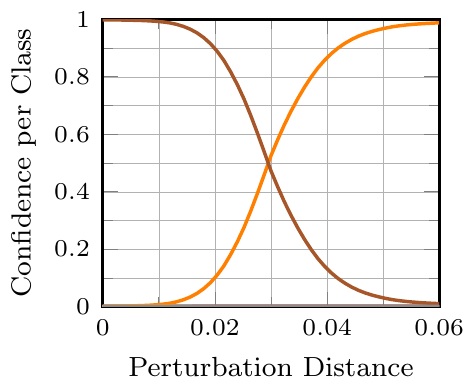}
    \end{subfigure}
    \\[4px]
    \begin{subfigure}{1\textwidth}
        \centering
        \textbf{SVHN:} \textbf{\ConfTrain} with $L_\infty$ \PGD-\FConf, $\epsilon = 0.03$ for training \emph{and} testing
    \end{subfigure}\\[4px]
    \begin{subfigure}{0.19\textwidth}
        \includegraphics[height=2.2cm]{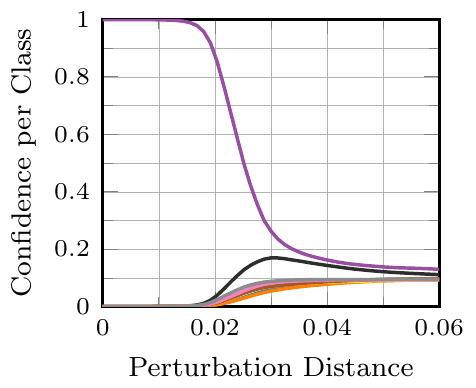}
    \end{subfigure}
    \begin{subfigure}{0.19\textwidth}
        \includegraphics[height=2.2cm]{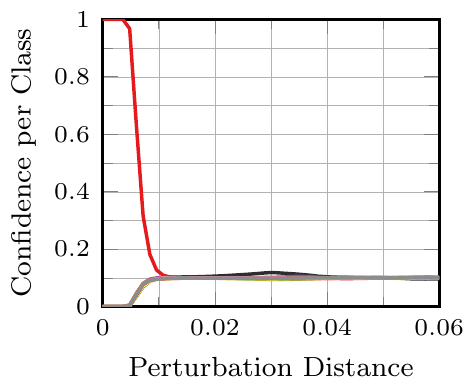}
    \end{subfigure}
    \begin{subfigure}{0.19\textwidth}
        \includegraphics[height=2.2cm]{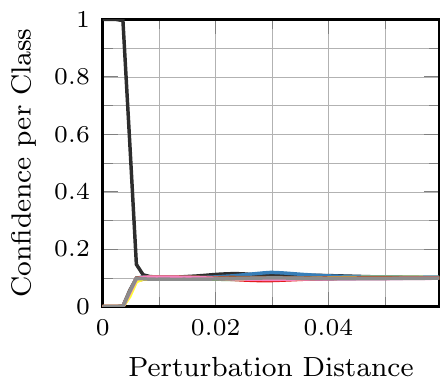}
    \end{subfigure}
    \begin{subfigure}{0.19\textwidth}
        \includegraphics[height=2.2cm]{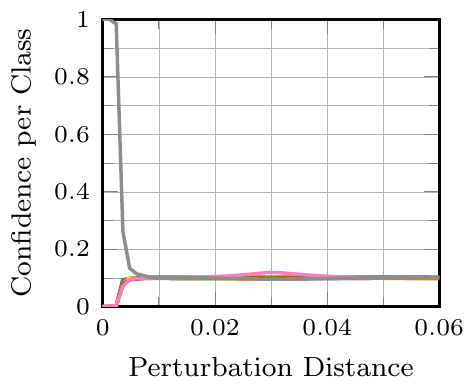}
    \end{subfigure}
    \begin{subfigure}{0.19\textwidth}
        \includegraphics[height=2.2cm]{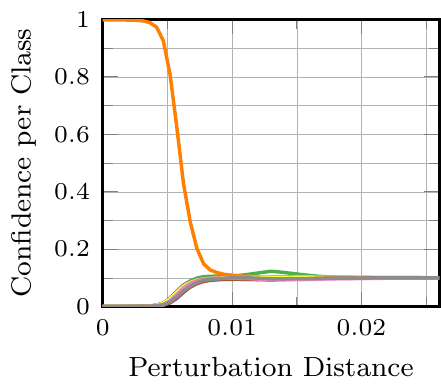}
    \end{subfigure}
    \\
    \begin{subfigure}{0.19\textwidth}
        \includegraphics[height=2.2cm]{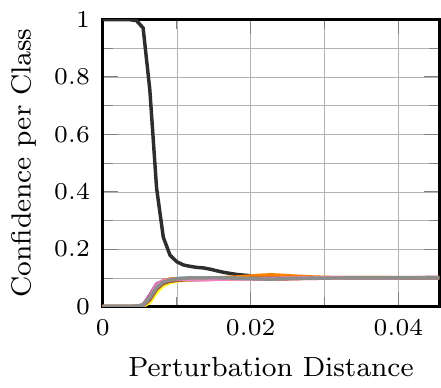}
    \end{subfigure}
    \begin{subfigure}{0.19\textwidth}
        \includegraphics[height=2.2cm]{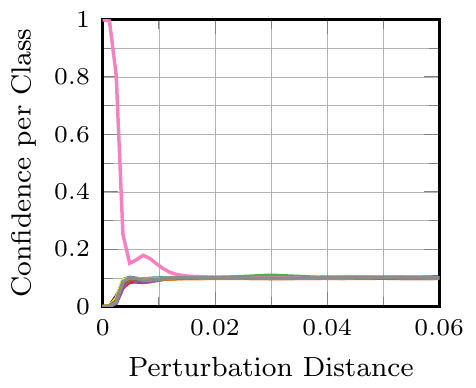}
    \end{subfigure}
    \begin{subfigure}{0.19\textwidth}
        \includegraphics[height=2.2cm]{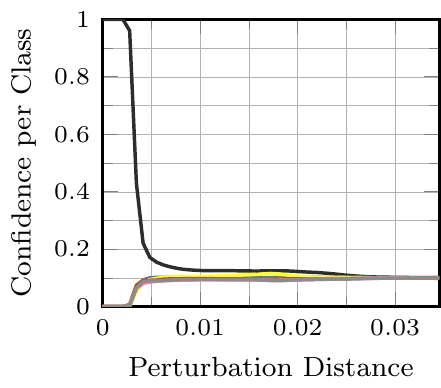}
    \end{subfigure}
    \begin{subfigure}{0.19\textwidth}
        \includegraphics[height=2.2cm]{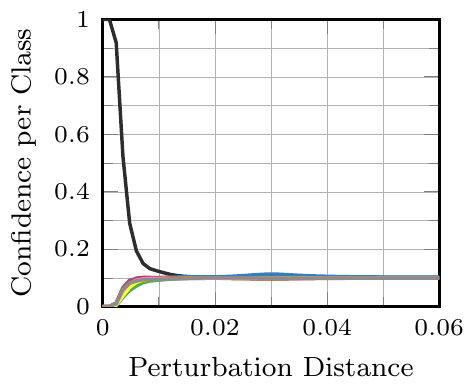}
    \end{subfigure}
    \begin{subfigure}{0.19\textwidth}
        \includegraphics[height=2.2cm]{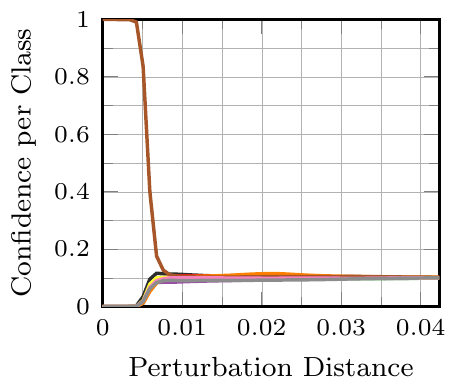}
    \end{subfigure}
    \\[4px]
    \begin{subfigure}{1\textwidth}
        \centering
        \textbf{Cifar10:} \textbf{\AdvTrain} with $L_\infty$ \PGD-\FConf, $\epsilon = 0.03$ for training \emph{and} testing
    \end{subfigure}
    \\[2px]
    \begin{subfigure}{0.19\textwidth}
        \includegraphics[height=2.2cm]{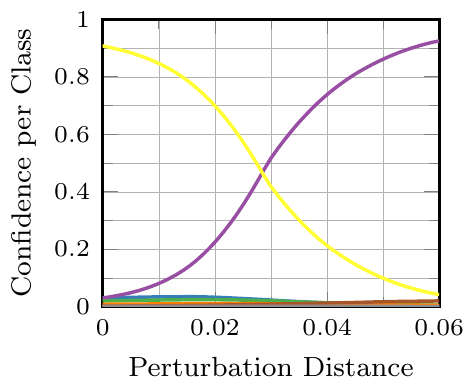}
    \end{subfigure}
    \begin{subfigure}{0.19\textwidth}
        \includegraphics[height=2.2cm]{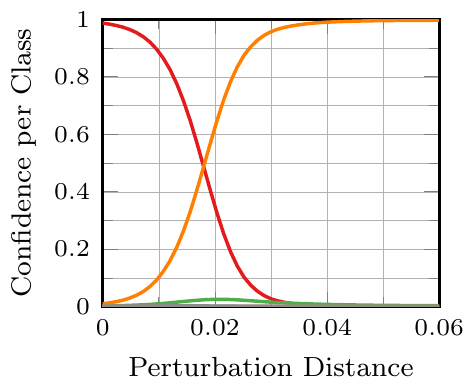}
    \end{subfigure}
    \begin{subfigure}{0.19\textwidth}
        \includegraphics[height=2.2cm]{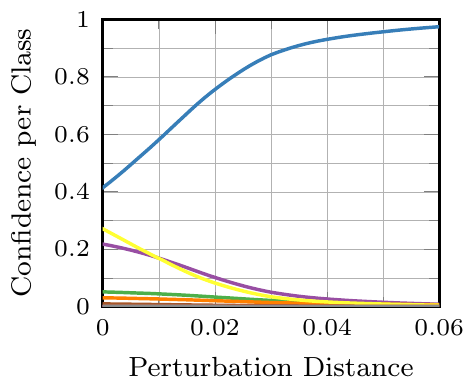}
    \end{subfigure}
    \begin{subfigure}{0.19\textwidth}
        \includegraphics[height=2.2cm]{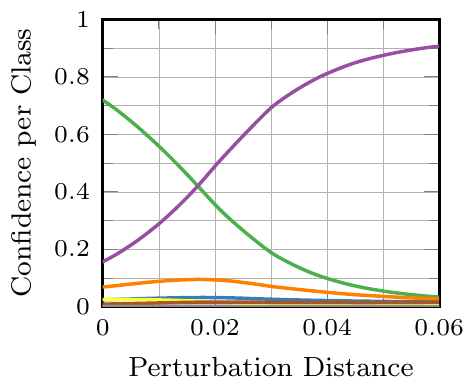}
    \end{subfigure}
    \begin{subfigure}{0.19\textwidth}
        \includegraphics[height=2.2cm]{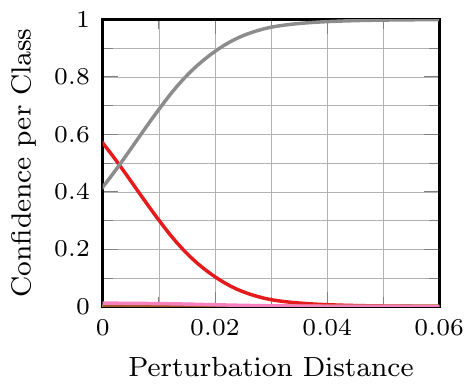}
    \end{subfigure}
    \\[4px]
    \begin{subfigure}{1\textwidth}
        \centering
        \textbf{Cifar10:} \textbf{\ConfTrain} with $L_\infty$ \PGD-\FConf, $\epsilon = 0.03$ for training \emph{and} testing
    \end{subfigure}
    \\[2px]
    \begin{subfigure}{0.19\textwidth}
        \includegraphics[height=2.2cm]{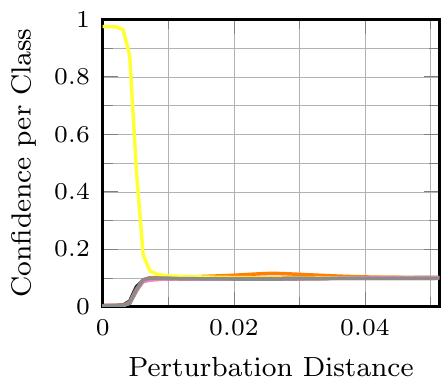}
    \end{subfigure}
    \begin{subfigure}{0.19\textwidth}
        \includegraphics[height=2.2cm]{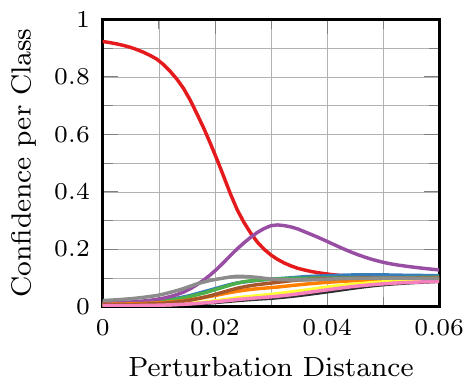}
    \end{subfigure}
    \begin{subfigure}{0.19\textwidth}
        \includegraphics[height=2.2cm]{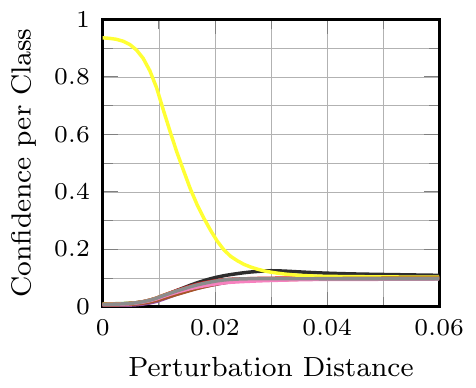}
    \end{subfigure}
    \begin{subfigure}{0.19\textwidth}
        \includegraphics[height=2.2cm]{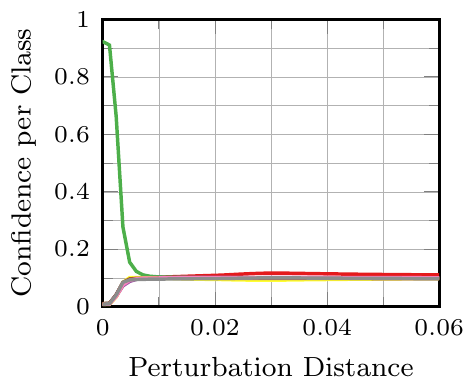}
    \end{subfigure}
    \begin{subfigure}{0.19\textwidth}
        \includegraphics[height=2.2cm]{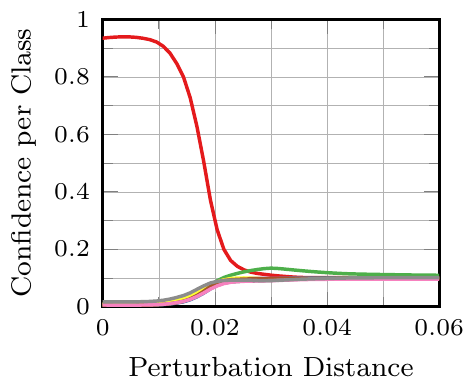}
    \end{subfigure}
    \vskip 4px
    \fbox{
        \hspace*{3.75cm}\includegraphics[width=0.5\textwidth]{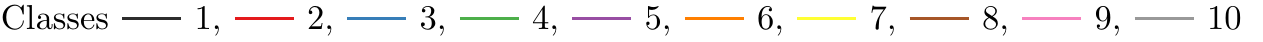}\hspace*{3.75cm}
    }
    \caption{\textbf{Effect of Confidence Calibration.} Confidences for classes along adversarial directions for \AdvTrain and \ConfTrain. Adversarial examples were computed using \PGD-\FConf with $T=1000$ iterations and zero initialization. For both \AdvTrain and \ConfTrain, we show the first ten examples of the test set on SVHN, and the first five examples of the test set on Cifar10. As can be seen, \ConfTrain biases the network to predict uniform distributions beyond the $\epsilon$-ball used during training ($\epsilon = 0.03$). For \AdvTrain, in contrast, adversarial examples can usually be found right beyond the $\epsilon$-ball.}
    \label{fig:supp-experiments-analysis}
\end{figure*}
\begin{figure*}[t]
    \centering
    \footnotesize
    \begin{subfigure}{1\textwidth}
        \centering
        \textbf{MNIST:} \textbf{\AdvTrain} with $L_\infty$ \PGD-\FConf, $\epsilon = 0.3$ for training
    \end{subfigure}
    \\[4px]
    \begin{subfigure}{0.225\textwidth}
        \includegraphics[height=2cm]{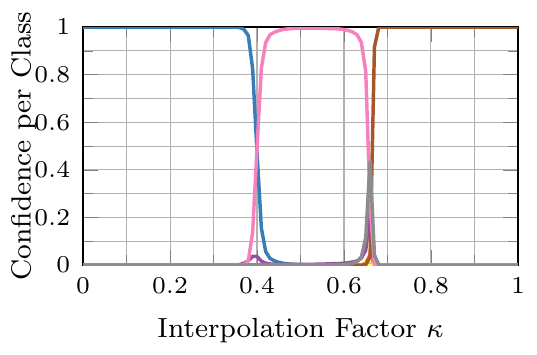}
    \end{subfigure}
    \begin{subfigure}{0.085\textwidth}
        \centering
        \includegraphics[height=1.25cm]{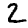}\\
        \tiny $y{=}2$\\
        $f_y{=}1$
    \end{subfigure}
    \begin{subfigure}{0.085\textwidth}
        \centering
        \includegraphics[height=1.25cm]{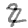}\\
        \tiny $\tilde{y}{=}8$\\
        $f_{\tilde{y}}{=}0.995$
    \end{subfigure}
    \begin{subfigure}{0.085\textwidth}
        \centering
        \includegraphics[height=1.25cm]{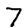}\\
        \tiny $y{=}7$\\
        $f_y{=}1$
    \end{subfigure}
    \begin{subfigure}{0.225\textwidth}
        \includegraphics[height=2cm]{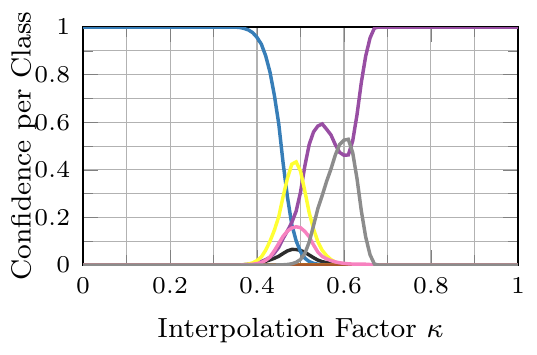}
    \end{subfigure}
    \begin{subfigure}{0.085\textwidth}
        \centering
        \includegraphics[height=1.25cm]{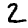}\\
        \tiny $y{=}2$\\
        $f_y{=}1$
    \end{subfigure}
    \begin{subfigure}{0.085\textwidth}
        \centering
        \includegraphics[height=1.25cm]{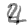}\\
        \tiny $\tilde{y}{=}6$\\
        $f_{\tilde{y}}{=}0.4$
    \end{subfigure}
    \begin{subfigure}{0.085\textwidth}
        \centering
        \includegraphics[height=1.25cm]{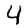}\\
        \tiny $y{=}4$\\
        $f_y{=}1$
    \end{subfigure}
    \\[4px]
    \begin{subfigure}{1\textwidth}
        \centering
        \textbf{MNIST:} \textbf{\ConfTrain} with $L_\infty$ \PGD-\FConf, $\epsilon = 0.3$ for training
    \end{subfigure}
    \\[4px]
    \begin{subfigure}{0.225\textwidth}
        \includegraphics[height=2cm]{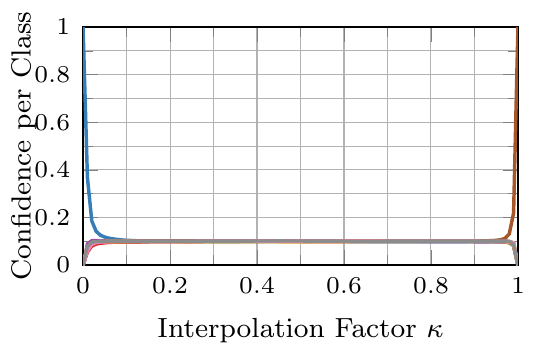}
    \end{subfigure}
    \begin{subfigure}{0.085\textwidth}
        \centering
        \includegraphics[height=1.25cm]{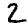}\\
        \tiny $y{=}2$\\
        $f_y{=}1$
    \end{subfigure}
    \begin{subfigure}{0.085\textwidth}
        \centering
        \includegraphics[height=1.25cm]{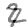}\\
        \tiny $\tilde{y}{=}4$\\
        {\color{red}$f_{\tilde{y}}{=}0.11$}
    \end{subfigure}
    \begin{subfigure}{0.085\textwidth}
        \centering
        \includegraphics[height=1.25cm]{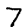}\\
        \tiny $y{=}7$\\
        $f_y{=}1$
    \end{subfigure}
    \begin{subfigure}{0.225\textwidth}
        \includegraphics[height=2cm]{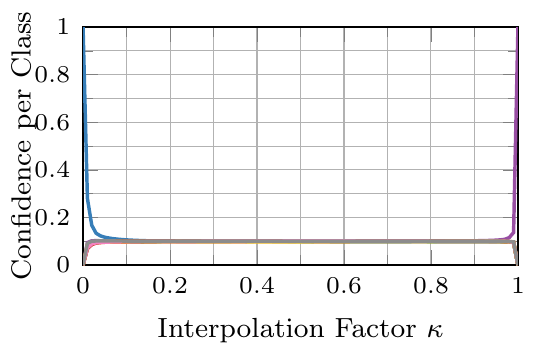}
    \end{subfigure}
    \begin{subfigure}{0.085\textwidth}
        \centering
        \includegraphics[height=1.25cm]{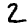}\\
        \tiny $y{=}2$\\
        $f_y{=}1$
    \end{subfigure}
    \begin{subfigure}{0.085\textwidth}
        \centering
        \includegraphics[height=1.25cm]{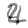}\\
        \tiny $\tilde{y}{=}0$\\
        {\color{red}$f_{\tilde{y}}{=}0.12$}
    \end{subfigure}
    \begin{subfigure}{0.085\textwidth}
        \centering
        \includegraphics[height=1.25cm]{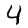}\\
        \tiny $y{=}4$\\
        $f_y{=}1$
    \end{subfigure}
    \vskip 4px
    \fbox{
        \hspace*{1.9cm}\includegraphics[width=0.7\textwidth]{fig_supp_class_legend}\hspace*{1.9cm}
    }
    
    \caption{\textbf{Confidence Calibration Between Test Examples.} We plot the confidence for all classes when interpolating linearly between test examples: $(1 - \kappa) x_1 + \kappa x_2$ for two test examples $x_1$ and $x_2$ with $\kappa \in [0,1]$; $x_1$ is fixed and we show two examples corresponding to different $x_2$. Additionally, we show the corresponding images for $\kappa = 0$, \ie, $x_1$, $\kappa = 0.5$, \ie, the mean image, and $\kappa = 1$, \ie, $x_2$, with the corresponding labels and confidences. As can be seen, \ConfTrain is able to perfectly predict a uniform distribution between test examples. \AdvTrain, in contrast, enforces high-confidence predictions, resulting in conflicts if $x_1$ and $x_2$ are too close together (\ie, within one $\epsilon$-ball) or in sudden changes of the predicted class in between, as seen above.}
    \label{fig:supp-experiments-interpolation}
\end{figure*}

In the main paper, instead of reporting ROC AUC, we reported only confidence-thresholded robust test error (\RTE), which implicitly subsumes the false positive rate (FPR), at a confidence threshold of $99\%$TPR. Again, we note that this is an extremely conservative choice, allowing to reject at most $1\%$ correctly classified clean examples. In addition, comparison to other approaches is fair as the corresponding confidence threshold only depends on correctly classified clean examples, not on adversarial examples. As also seen in \figref{fig:supp-experiments-evaluation}, ROC AUC is not a practical metric to evaluate the detection/rejection of adversarial examples. This is because rejecting a significant part of correctly classified clean examples is not acceptable. In this sense, ROC AUC measures how well positives and negatives can be distinguished in general, while we are only interested in the performance for very high TPR, \eg, $99\%$TPR as in the main paper. In this document, we also report FPR to complement our evaluation using (confidence-thresholded) \RTE.

\textbf{Robust Test Error:} 
The standard robust test error \cite{MadryICLR2018} is the model's test error in the case where all test examples are allowed to be attacked, \ie, modified within the chosen threat model, \eg, for $L_p$:
\begin{align}
\text{``Standard'' }\RTE = \frac{1}{N}\sum_{n = 1}^N \;\max\limits_{\|\delta\|_p\leq \epsilon} \Id_{f(x_n + \delta)\neq y_n}\label{eq:supp-rte}
\end{align}
where $\{(x_n,y_n)\}_{n = 1}^N$ are test examples and labels. In practice, \RTE is computed empirically using several adversarial attacks, potentially with multiple restarts as the inner maximization problem is generally non-convex. 

As standard \RTE does not account for a reject option, we propose a generalized definition adapted to our confidence-thresholded setting. For fixed confidence threshold $\tau$, \eg, at $99\%$TPR, the confidence-thresholded \RTE is defined as
\begin{align}
\RTE(\tau) = \frac{
    \sum\limits_{n=1}^N \;\max\limits_{\|\delta\|_p\leq \epsilon, c(x_n + \delta)\geq \tau} \Id_{f(x_n + \delta)\neq y_n}
}
{
    \sum\limits_{n=1}^N \;\max\limits_{\|\delta\|_p\leq \epsilon} \Id_{c(x_n + \delta)\geq \tau}
}\label{eq:supp-conf-rte}
\end{align}
with $c(x) = \max_k f_k(x)$ and $f(x)$ being the model's confidence and predicted class on example $x$, respectively. This is the test error on test examples that can be modified within the chosen threat model \emph{and} pass confidence thresholding. It reduces to standard \RTE for $\tau = 0$, is in $[0,1]$ and, thus, fully comparable to related work.

As both \eqnref{eq:supp-rte} and \eqnref{eq:supp-conf-rte} cannot be computed exactly, we compute
\begin{align}
\frac{
    \sum_{n = 1}^N \max\{\Id_{f(x_n) \neq y_n}\Id_{c(x_n)\geq \tau}, \Id_{f(\tilde{x}_n) \neq y_n}\Id_{c(\tilde{x}_n)\geq \tau}\}
}{
    \sum_{n = 1}^N \max\{\Id_{c(x_n)\geq \tau}, \Id_{c(\tilde{x}_n)\geq \tau}\}
}\label{eq:supp-appr-rte}
\end{align}
which is an upper bound assuming that our attack is perfect. Essentially, this counts the test examples $x_n$ that are either classified incorrectly with confidence $c(x_n) \geq \tau$ or that can be attacked successfully $\tilde{x}_n = x_n + \delta$ with confidence $c(\tilde{x}_n) \geq \tau$. This is normalized by the total number of test examples $x_n$ that have $c(x_n) \geq \tau$ or where the corresponding adversarial example $\tilde{x}_n$ has $c(\tilde{x}_n) \geq \tau$. It can easily be seen that $\tau = 0$ reduces \eqnref{eq:supp-appr-rte} to its unthresholded variant, \ie, standard \RTE, ensuring full comparability to related work.

In the following, we also highlight two special cases that are (correctly) taken into account by \eqnref{eq:supp-appr-rte}: (a) if a correctly classified test example $x_n$, \ie, $f(x_n) = y_n$, has confidence $c(x_n) < \tau$, \ie, is rejected, but the corresponding adversarial example $\tilde{x}_n$ with $f(\tilde{x}_n) \neq y$ has $c(\tilde{x}_n) \geq \tau$, \ie, is \emph{not} rejected, this is counted both in the numerator and denominator; (b) if an incorrectly classified test example $x_n$, \ie, $f(x_n) \neq y$, with $c(x_n) < \tau$, \ie, rejected, has a corresponding adversarial example $\tilde{x}_n$, \ie, also $f(\tilde{x}_n) \neq y$, but with $x(x_n) \geq \tau$, \ie, \emph{not} rejected, this is also counted in the numerator as well as denominator. Note that these cases are handled differently in our detection evaluation following related work \cite{MaICLR2018,LeeNIPS2018}: negatives are adversarial examples corresponding to correctly classified clean examples that are successful, \ie, change the label. For example, case (b) would not contribute towards the FPR since the original test example is already mis-classified. Thus, while \RTE implicitly includes FPR as well as \TE, it is even more conservative than just considering ``$\text{FPR} + \TE$''.

\subsection{Ablation Study}
\label{subsec:supp-experiments-ablation}

In the following, we include ablation studies for our attack \PGD-\FConf, in \tabref{tab:supp-experiments-attack}, and for \ConfTrain, in \tabref{tab:supp-experiments-training}.

\textbf{Attack.}
Regarding the proposed attack \PGD-\FConf using momentum and backtracking, \tabref{tab:supp-experiments-attack} shows that backtracking and sufficient iterations are essential to attack \ConfTrain. On SVHN, for \AdvTrain, the difference in \RTE between $T=200$ and $T=1000$ iterations is only $3.7\%$, specifically, $46.2\%$ and $49.9\%$. For \ConfTrain, in contrast, using $T=200$ iterations is not sufficient, with merely $5\%$ \RTE. However, $T=1000$ iterations with zero initialization increases \RTE to $22.8\%$. For more iterations, \ie, $T = 2000$, \RTE stagnates with $23.3\%$. When using random initialization (one restart), \RTE drops to $5.2\%$, even when using $T = 2000$ iterations. Similar significant drops are observed without backtracking. These observations generalize to MNIST and Cifar10.

\textbf{Training:}
\tabref{tab:supp-experiments-training} reports results for \ConfTrain with different values for $\rho$. We note that $\rho$ controls the (speed of the) transition from (correct) one-hot distribution to uniform distribution depending on the distance of adversarial example to the corresponding original training example. Here, higher $\rho$ results in a sharper (\ie, faster) transition from one-hot to uniform distribution. It is also important to note that the power transition does not preserve a bias towards the true label, \ie, for the maximum possible perturbation ($\|\delta\|_\infty = \epsilon$), the network is forced to predict a purely uniform distribution. As can be seen, both on SVHN and Cifar10, higher $\rho$ usually results in better robustness. Thus, for the main paper, we chose $\rho = 10$. Only on SVHN, $\rho = 6$ of $\rho = 12$ perform slightly better. However, we found that $\rho = 10$ generalizes better to previously unseen attacks.

\begin{table*}[t]
	\centering
	\footnotesize
	\begin{subfigure}{0.84\textwidth}
		\centering
		\input{tab_mmnist_main2_99tpr}
	\end{subfigure}
	\vskip 2px
	\begin{subfigure}{0.84\textwidth}
		\centering
		\input{tab_msvhn_main2_99tpr}
	\end{subfigure}
	\vskip 2px
	\begin{subfigure}{0.84\textwidth}
		\centering
		\input{tab_mcifar10_main2_99tpr}
	\end{subfigure}
	\vskip -6px
	\caption{\textbf{Main Results: FPR for $99\%$TPR.} For $\boldsymbol{99}\%$TPR, we report confidence-thresholded \RTE \emph{and} FPR for the results from the main paper. We emphasize that only \PGD-\FCE and \PGD-\FConf were used against \Ma and \Lee. In general, the observations of the main paper can be confirmed considering FPR. Due to the poor \TE of \AdvTrain, \Wong or \TRADES on Cifar10, these methods benefit most from considering FPR instead of (confidence-thresholded) \RTE. \textbf{*} Pre-trained models with different architectures.}
	\label{tab:supp-experiments-99}
\end{table*}
\begin{table*}[t]
    \footnotesize
    \begin{subfigure}{0.83\textwidth}
        \centering
        \input{tab_mmnist_main2_98tpr}
    \end{subfigure}
    \begin{subfigure}{0.08\textwidth}
        \centering
        \begin{tabular}{|c|}
\hline
\\
\hline
\begin{tabular}{@{}c@{}}distal\\\vphantom{t}\end{tabular}\\
\hline
\textcolor{colorbrewer1}{\bfseries unseen}\\
\hline
FPR$\downarrow$\\
\hline
\hline
 100.0 
\\
100.0 
\\
100.0 
\\
0.0 
\\\hline\hline
100.0 
\\
100.0 
\\\hline
\end{tabular}

    \end{subfigure}
    \begin{subfigure}{0.075\textwidth}
        \centering
        \begin{tabular}{|c|}
\hline
\\
\hline
\begin{tabular}{@{}c@{}}corr.\\\tiny MNIST-C\end{tabular}\\
\hline
\textcolor{colorbrewer1}{\bfseries unseen}\\
\hline
\TE$\downarrow$\\
\hline\hline
31.0\\ 
12.3\\ 
15.4\\ 
5.3\\ 
\hline\hline
5.6\\ 
5.7\\ 
\hline
\end{tabular}

    \end{subfigure}
    \vskip 2px
    \begin{subfigure}{0.83\textwidth}
        \centering
        \input{tab_msvhn_main2_98tpr}
    \end{subfigure}
    \begin{subfigure}{0.08\textwidth}
        \centering
        \begin{tabular}{|c|}
\hline
\\
\hline
\begin{tabular}{@{}c@{}}distal\\\vphantom{t}\end{tabular}\\
\hline
\textcolor{colorbrewer1}{\bfseries unseen}\\
\hline
FPR$\downarrow$\\
\hline
\hline
87.1 
\\
86.3 
\\
81.0 
\\
0.0 
\\\hline
\end{tabular}

    \end{subfigure}
    \hfill
    \vskip 2px
    \begin{subfigure}{0.83\textwidth}
        \centering
        \input{tab_mcifar10_main2_98tpr}
    \end{subfigure}
    \begin{subfigure}{0.08\textwidth}
        \centering
        \begin{tabular}{|c|}
\hline
\\
\hline
\begin{tabular}{@{}c@{}}corr.\\\vphantom{t}\end{tabular}\\
\hline
\textcolor{colorbrewer1}{\bfseries unseen}\\
\hline
FPR$\downarrow$\\
\hline
\hline
83.3 
\\
75.0 
\\
72.5 
\\
0.0 
\\\hline\hline
76.7 
\\
76.2 
\\
78.5 
\\\hline
\end{tabular}

    \end{subfigure}
    \begin{subfigure}{0.075\textwidth}
        \centering
        \begin{tabular}{|c|}
\hline
\\
\hline
\begin{tabular}{@{}c@{}}corr.\\\tiny CIFAR10-C\end{tabular}\\
\hline
\textcolor{colorbrewer1}{\bfseries unseen}\\
\hline
\TE$\downarrow$\\
\hline\hline
11.4\\ 
15.1\\ 
17.8\\ 
8.1\\ 
\hline\hline
17.1\\ 
14.1\\ 
11.4\\ 
\hline
\end{tabular}

    \end{subfigure}
    \vskip -6px
    \caption{\textbf{Main Results: Generalizable Robustness for {\color{colorbrewer1}$\boldsymbol{98\%}$TPR}.} While reporting results for $99\%$TPR in the main paper, reducing the TPR requirement for confidence-thresholding to {\color{colorbrewer1}$98\%$TPR} generally improves results, but only slightly. We report FPR and confidence-thresholded \RTE for $98\%$TPR. For MNIST-C and Cifar10-C, we report mean \TE across all corruptions. $L_\infty$ attacks with $\epsilon{=}0.3$ on MNIST and $\epsilon = 0.03$ on SVHN/Cifar10 were used for training (\textbf{\textcolor{colorbrewer3}{seen}}). All other attacks were not used during training (\textbf{\textcolor{colorbrewer1}{unseen}}). \textbf{*} Pre-trained models with different architectures.}
    \label{tab:supp-experiments-98}
\end{table*}
\begin{table*}[t]
    \footnotesize
    \begin{subfigure}{0.83\textwidth}
        \centering
        \input{tab_mmnist_main2_95tpr}
    \end{subfigure}
    \begin{subfigure}{0.08\textwidth}
        \centering
        \begin{tabular}{|c|}
\hline
\\
\hline
\begin{tabular}{@{}c@{}}distal\\\vphantom{t}\end{tabular}\\
\hline
\textcolor{colorbrewer1}{\bfseries unseen}\\
\hline
FPR$\downarrow$\\
\hline
\hline
100.0 
\\
100.0 
\\
100.0 
\\
0.0 
\\\hline\hline
100.0 
\\
100.0 
\\\hline
\end{tabular}

    \end{subfigure}
    \begin{subfigure}{0.075\textwidth}
        \centering
        \begin{tabular}{|c|}
\hline
\\
\hline
\begin{tabular}{@{}c@{}}corr.\\\tiny MNIST-C\end{tabular}\\
\hline
\textcolor{colorbrewer1}{\bfseries unseen}\\
\hline
\TE$\downarrow$\\
\hline\hline
27.5\\ 
8.6\\ 
10.5\\ 
5.5\\ 
\hline\hline
4.4\\ 
3.2\\ 
\hline
\end{tabular}

    \end{subfigure}
    \vskip 2px
    \begin{subfigure}{0.83\textwidth}
        \centering
        \input{tab_msvhn_main2_95tpr}
    \end{subfigure}
    \begin{subfigure}{0.08\textwidth}
        \centering
        \begin{tabular}{|c|}
\hline
\\
\hline
\begin{tabular}{@{}c@{}}distal\\\vphantom{t}\end{tabular}\\
\hline
\textcolor{colorbrewer1}{\bfseries unseen}\\
\hline
FPR$\downarrow$\\
\hline
\hline
87.1 
\\
59.3 
\\
75.1 
\\
0.0 
\\\hline
\end{tabular}

    \end{subfigure}
    \hfill
    \vskip 2px
    \begin{subfigure}{0.83\textwidth}
        \centering
        \input{tab_mcifar10_main2_95tpr}
    \end{subfigure}
    \begin{subfigure}{0.08\textwidth}
        \centering
        \begin{tabular}{|c|}
\hline
\\
\hline
\begin{tabular}{@{}c@{}}corr.\\\vphantom{t}\end{tabular}\\
\hline
\textcolor{colorbrewer1}{\bfseries unseen}\\
\hline
FPR$\downarrow$\\
\hline
\hline
83.3 
\\
75.0 
\\
72.5 
\\
0.0 
\\\hline\hline
76.7 
\\
76.2 
\\
78.5 
\\\hline
\end{tabular}

    \end{subfigure}
    \begin{subfigure}{0.075\textwidth}
        \centering
        \begin{tabular}{|c|}
\hline
\\
\hline
\begin{tabular}{@{}c@{}}corr.\\\tiny CIFAR10-C\end{tabular}\\
\hline
\textcolor{colorbrewer1}{\bfseries unseen}\\
\hline
\TE$\downarrow$\\
\hline\hline
7.7\\ 
12.1\\ 
15.6\\ 
6.0\\ 
\hline\hline
13.7\\ 
10.7\\ 
9.1\\ 
\hline
\end{tabular}

    \end{subfigure}
    \vskip -6px
    \caption{\textbf{Main Results: Generalizable Robustness for {\color{colorbrewer1}$\boldsymbol{95\%}$TPR}.} We report FPR and \RTE for $\boldsymbol{95}\%$TPR, in comparison with $98\%$ in \tabref{tab:supp-experiments-98} and $99\%$ in the main paper. For MNIST-C and Cifar10-C, we report mean \TE across all corruptions. $L_\infty$ attacks with $\epsilon{=}0.3$ on MNIST and $\epsilon = 0.03$ on SVHN/Cifar10 \textbf{\textcolor{colorbrewer3}{seen}} during training; all other attacks \textbf{\textcolor{colorbrewer1}{unseen}} during training. Results improve slightly in comparison with $98\%$TPR. However, the improvements are rather small and do not justify the significantly increased fraction of ``thrown away'' (correctly classified) clean examples. \textbf{*} Pre-trained models with different architectures.}
    \label{tab:supp-experiments-95}
\end{table*}
\begin{table}
    \centering
    \begin{subfigure}{0.235\textwidth}
        \begin{tabular}{|@{ }l@{ }|@{ }c@{ }|@{ }c@{ }|}
    \hline
    & \multicolumn{2}{@{}c@{}|}{\textbf{MNIST:}}\\
    \hline\hline
    & \multicolumn{2}{@{}c@{}|}{\begin{tabular}{@{}c@{}}\textbf{all}\\\textbf{\textcolor{colorbrewer1}{unseen}}\\\vphantom{$L_\infty$}\end{tabular}}\\\hline
    & FPR$\downarrow$ & \RTE$\downarrow$\\
    \hline
    \hline
    \Normal & 99.3 & 100.0 
    \\
    \AdvTrainHalf & 99.3 & 100.0 
    \\
    \AdvTrainFull & 99.2 & 100.0 
    \\
    \ConfTrain & 23.4 & 23.9 
    \\\hline\hline
    \textbf{*} \Wong & 97.0 & 99.2 
    \\
    \textbf{*} \TRADES & 99.3 & 99.9 
    \\
    \textbf{*} \MadryAT & \textcolor{gray}{--} & \textcolor{gray}{--}\\
    \hline
\end{tabular}
    \end{subfigure}
    \begin{subfigure}{0.125\textwidth}
        \begin{tabular}{|@{ }c@{ }|@{ }c@{ }|}
    \hline
    \multicolumn{2}{|@{}c@{}|}{\textbf{SVHN:}}\\
    \hline\hline
    \multicolumn{2}{|@{}c@{}|}{\begin{tabular}{@{}c@{}}\textbf{all}\\\textbf{\textcolor{colorbrewer1}{unseen}}\\\vphantom{$L_\infty$}\end{tabular}}\\\hline
    FPR$\downarrow$ & \RTE$\downarrow$\\
    \hline
    \hline
    95.9 & 100.0 
    \\
    96.2 & 99.9 
    \\
    93.7 & 99.9 
    \\
    57.5 & 61.1 
    \\
    \hline\hline
    \textcolor{gray}{--} & \textcolor{gray}{--}\\ 
    \textcolor{gray}{--} & \textcolor{gray}{--}\\ 
    \textcolor{gray}{--} & \textcolor{gray}{--}\\
    \hline
\end{tabular}
    \end{subfigure}
    \begin{subfigure}{0.125\textwidth}
        \begin{tabular}{|@{ }c@{ }|@{ }c@{ }|}
    \hline
    \multicolumn{2}{|@{}c@{}|}{\textbf{CIFAR10:}}\\
    \hline\hline
    \multicolumn{2}{|@{}c@{}|}{\begin{tabular}{@{}c@{}}\textbf{all}\\\textbf{\textcolor{colorbrewer1}{unseen}}\\\vphantom{$L_\infty$}\end{tabular}}\\\hline
    FPR$\downarrow$ & \RTE$\downarrow$\\
    \hline
    \hline
    93.0 & 100.0 
    \\
    84.1 & 99.2 
    \\
    81.0 & 98.6 
    \\
    86.3 & 94.8 
    \\\hline\hline
    76.2 & 94.1 
    \\
    82.8 & 97.4 
    \\
    87.6 & 98.9 
    \\
    \hline
\end{tabular}
    \end{subfigure}
    \hspace*{0px}
    {\color{gray}\rule[-2.75cm]{1px}{5.75cm}}
    \hspace*{0.5px}
    \begin{subfigure}{0.125\textwidth}
        \begin{tabular}{|@{ }c@{ }|@{ }c@{ }|}
    \hline
    \multicolumn{2}{|@{}c@{}|}{\textbf{CIFAR10:}}\\
    \hline\hline
    \multicolumn{2}{|@{}c@{}|}{\begin{tabular}{@{}c@{}}\textbf{\textcolor{colorbrewer1}{unseen}}\\\textcolor{colorbrewer2}{\textbf{except} $\mathbf{L_\infty}$}\\\textcolor{colorbrewer2}{with $\mathbf{\epsilon{=}0.06}$}\\\end{tabular}}\\\hline
    FPR$\downarrow$ & \RTE$\downarrow$\\
    \hline
    \hline
    93.0 & 100.0 
    \\
    84.1 & 99.2 
    \\
    81.1 & 98.7 
    \\
    69.1 & 77.6 
    \\\hline\hline
    75.6 & 93.5 
    \\
    82.7 & 97.3 
    \\
    87.6 & 98.9 
    \\
    \hline
\end{tabular}
    \end{subfigure}
    \caption{\textbf{Worst-Case Results Across \textcolor{colorbrewer1}{Unseen} Attacks.} We report the (per-example) worst-case, confidence-thresholded \RTE and FPR across \textbf{all} unseen attacks on MNIST, SVHN and Cifar10. On Cifar10, we additionally present results for all attacks except $L_\infty$ adversarial examples with larger $\epsilon=0.06$ (indicated in \textcolor{colorbrewer2}{blue}). \ConfTrain is able to outperform all baselines, including \Wong and \TRADES, significantly on MNIST and SVHN. On Cifar10, \ConfTrain performs poorly on $L_\infty$ adversarial examples with larger $\epsilon = 0.06$. However, excluding these adversarial examples, \ConfTrain also outperforms all baselines on Cifar10. \textbf{*} Pre-trained models with different architectures.}
    \label{tab:supp-experiments-all}
\end{table}

\subsection{Analysis}
\label{subsec:supp-experiments-analysis}

\textbf{Confidence Histograms:}
For further analysis, \figref{fig:supp-experiments-histograms} shows confidence histograms for \AdvTrain and \ConfTrain on MNIST and Cifar10. The confidence histograms for \ConfTrain reflect the expected behavior: adversarial examples are mostly successful in changing the label, which is supported by high \RTE values for confidence threshold $\tau = 0$, but their confidence is pushed towards uniform distributions. For \AdvTrain, in contrast, successful adversarial examples -- fewer in total -- generally obtain high confidence. As a result, while confidence thresholding generally benefits \AdvTrain, the improvement is not as significant as for \ConfTrain.

\textbf{Confidence Along Adversarial Directions:}
In \figref{fig:supp-experiments-analysis}, we plot the probabilities for all ten classes along an adversarial direction. We note that these directions do not necessarily correspond to successful or high-confidence adversarial examples. Instead, we chose the first 10 test examples on SVHN and Cifar10. The adversarial examples were obtained using our $L_\infty$ \PGD-\FConf attack with $T = 1000$ iterations and zero initialization for $\epsilon = 0.03$. For \AdvTrain, we usually observe a change in predictions along these directions; some occur within $\|\delta\|_\infty \leq \epsilon$, corresponding to successful adversarial examples (within $\epsilon$), some occur for $\|\delta\|_\infty > \epsilon$, corresponding to unsuccessful adversarial examples (within $\epsilon$). However, \AdvTrain always assigns high confidence. Thus, when allowing larger adversarial perturbations at test time, robustness of \AdvTrain reduces significantly. For \ConfTrain, in contrast, there are only few such cases; more often, the model achieves a near uniform prediction for small $\|\delta\|_\infty$ and extrapolates this behavior beyond the $\epsilon$-ball used for training. On SVHN, this behavior successfully allows to generalize the robustness to larger adversarial perturbations. Furthermore, these plots illustrate why using more iterations at test time, and using techniques such as momentum and backtracking, are necessary to find adversarial examples as the objective becomes more complex compared to \AdvTrain.

\textbf{Confidence Along Interpolation:}
In \figref{fig:supp-experiments-interpolation}, on MNIST, we additionally illustrate the advantage of \ConfTrain with respect to the toy example in Proposition \ref{prop:toy-example}. Here, we consider the case where the $\epsilon$-balls of two training or test examples (in different classes) overlap. As we show in Proposition \ref{prop:toy-example}, adversarial training is not able to handle such cases, resulting in the  trade-off between accuracy in robustness reported in the literature \citep{TsiprasARXIV2018,StutzCVPR2019,RaghunathanARXIV2019,ZhangICML2019}. This is because adversarial training enforces high-confidence predictions on both $\epsilon$-balls (corresponding to different classes), resulting in an obvious conflict. \ConfTrain, in contrast, enforces uniform predictions throughout the largest parts of both $\epsilon$-balls, resolving the conflict.

\subsection{Results}
\label{subsec:supp-experiments-results}

\textbf{Main Results for $98\%$ and $95\%$ TPR:}
\tabref{tab:supp-experiments-98} reports our main results requiring only $98\%$TPR; \tabref{tab:supp-experiments-95} shows results for $95\%$TPR. This implies, that compared to $99\%$TPR, up to $1\%$ (or $4\%$) more correctly classified test examples can be rejected, increasing the confidence threshold and potentially improving robustness. For relatively simple tasks such as MNIST and SVHN, where \TE is low, this is a significant ``sacrifice''. However, as can be seen, robustness in terms of \RTE only improves slightly. We found that the same holds for $95\%$TPR, however, rejecting more than $2\%$ of correctly classified examples seems prohibitive large for the considered datasets.

\textbf{Worst-Case Across \textbf{\textcolor{colorbrewer1}{Unseen}} Attacks:}
\tabref{tab:supp-experiments-all} reports \emph{per-example} worst-case \RTE and FPR for $99\%$TPR considering \textbf{all} \textcolor{colorbrewer1}{unseen} attacks. On MNIST and SVHN, \RTE increases to nearly $100\%$ for \AdvTrain, both \AdvTrainHalf and \AdvTrainFull. \ConfTrain, in contrast, is able to achieve considerably lower \RTE: $23.9\%$ on MNIST and $61.1\%$ on SVHN. Only on Cifar10, \ConfTrain does not result in a significant improvement; all methods, including related work such as \Wong and \TRADES yield \RTE of $94\%$ or higher. However, this is mainly due to the poor performance of \ConfTrain against large $L_\infty$ adversarial examples with $\epsilon = 0.06$. Excluding these adversarial examples (right most table, indicated in \textcolor{colorbrewer2}{blue}) shows that \RTE improves to $77.6\%$ for \ConfTrain, while \RTE for the remaining methods remains nearly unchanged. Overall, these experiments emphasize that \ConfTrain is able to generalize robustness to previously unseen attacks.

\textbf{Per-Attack Results:}
In \tabref{tab:supp-experiments-main-mnist-1} to \ref{tab:supp-experiments-main-cifar10-4}, we break down our main results regarding all used $L_p$ attacks for $p \in \{\infty, 2, 1, 0\}$. For simplicity we focus on \PGD-\FCE and \PGD-\FConf while reporting the used black-box attacks together, \ie, taking the per-example worst-case adversarial examples across all black-box attacks. For comparison, we also include the area under the ROC curve (ROC AUC), non-thresholded \TE and non-thresholded \RTE. On MNIST, where \AdvTrain performs very well in practice, it is striking that for $\nicefrac{4}{3}\epsilon = 0.4$ even black-box attacks are able to reduce robustness completely, resulting in high \RTE. This observation also transfers to SVHN and Cifar10. For \ConfTrain, black-box attacks are only effective on Cifar10, where they result in roughly $87\%$ \RTE with $\tau$@$99\%$TPR. For the $L_2$, $L_1$ and $L_0$ attacks we can make similar observations. Across all $L_p$ norms, it can also be seen that \PGD-\FCE performs significantly worse against our \ConfTrain compared to \AdvTrain, which shows that it is essential to optimize the right objective to evaluate the robustness of defenses and adversarially trained models, \ie, maximize confidence against \ConfTrain.

\textbf{Results on Corrupted MNIST/Cifar10:}
We also conducted experiments on MNIST-C \citep{MuICMLWORK2019} and Cifar10-C \citep{HendrycksARXIV2019}. These datasets are variants of MNIST and Cifar10 that contain common perturbations of the original images obtained from various types of noise, blur or transformations; examples include zoom or motion blue, Gaussian and shot noise, rotations, translations and shear. \tabref{tab:supp-experiments-corruption-mnist-1} to \ref{tab:supp-experiments-corruption-cifar10-2} presents the per-corruption results on MNIST-C and Cifar10-C, respectively. Here, \texttt{all} includes all corruptions and \texttt{mean} reports the average results across all corruptions. We note that, due to the thresholding, different numbers of corrupted examples are left after detection for different corruptions. Thus, the distinction between \texttt{all} and \texttt{mean} is meaningful. Striking is the performance of \ConfTrain on noise corruptions such as \texttt{gaussian\_noise} or \texttt{shot\_noise}. Here, \ConfTrain is able to reject $100\%$ of the corrupted examples, resulting in a thresholded \TE of $0\%$. This is in stark contrast to \AdvTrain, exhibiting a \TE of roughly $15\%$ after rejection on Cifar10-C. On the remaining corruptions, \ConfTrain is able to perform slightly better than \AdvTrain, which is often due to higher detection rate, \ie, higher ROC AUC. On, Cifar10, the generally lower \TE of \ConfTrain also contributes to the results. Overall, this illustrates that \ConfTrain is able to preserve the inductive bias of predicting near-uniform distribution on noise similar to $L_\infty$ adversarial examples as seen during training.

\begin{table*}[t]
	\centering
	\scriptsize
    \input{tab_mmnist_supp_99tpr_1}
	\vskip -6px
	\caption{\textbf{Per-Attack Results on MNIST, Part I ($\mathbf{L_\infty}$).} Per-attack results considering \PGD-\FCE, as in \cite{MadryICLR2018}, our \PGD-\FConf and the remaining black-box attacks for the $L_\infty$ threat model. The used $\epsilon$ values are reported in the left-most column. For the black-box attacks, we report the per-example worst-case across all black-box attacks. In addition to FPR and \RTE, we include ROC AUC, \TE as well as \TE and \RTE in the standard, non-thresholded setting, as reference.}
	\label{tab:supp-experiments-main-mnist-1}
\end{table*}
\begin{table*}[t]
    \centering
    \scriptsize
    \input{tab_mmnist_supp_99tpr_2}
    \vskip -6px
    \caption{\textbf{Per-Attack Results on MNIST, Part II ($\mathbf{L_2}$).} Per-attack results considering \PGD-\FCE, as in \cite{MadryICLR2018}, our \PGD-\FConf and the remaining black-box attacks for the $L_2$ threat model. The used $\epsilon$ values are reported in the left-most column. For the black-box attacks, we report the per-example worst-case across all black-box attacks. In addition to FPR and \RTE we include ROC AUC, \TE as well as \TE and \RTE in the standard, non-thresholded setting, as reference.}
    \label{tab:supp-experiments-main-mnist-2}
\end{table*}
\begin{table*}[t]
    \centering
    \scriptsize
    \input{tab_mmnist_supp_99tpr_3}
    \vskip -6px
    \caption{\textbf{Per-Attack Results on MNIST, Part III ($\mathbf{L_1}$).} Per-attack results considering \PGD-\FCE, as in \cite{MadryICLR2018}, our \PGD-\FConf and the remaining black-box attacks for the $L_1$ threat model. The used $\epsilon$ values are reported in the left-most column. For the black-box attacks, we report the per-example worst-case across all black-box attacks. In addition to FPR and \RTE we include ROC AUC, \TE as well as \TE and \RTE in the standard, non-thresholded setting, as reference.}
    \label{tab:supp-experiments-main-mnist-3}
\end{table*}
\begin{table*}[t]
    \centering
    \scriptsize
    \input{tab_mmnist_supp_99tpr_4}
    \vskip -6px
    \caption{\textbf{Per-Attack Results on MNIST, Part IV ($\mathbf{L_0}$, Adversarial Frames).} Per-attack results considering \PGD-\FCE, as in \cite{MadryICLR2018}, our \PGD-\FConf and the remaining black-box attacks for $L_0$ threat models and adversarial frames. The used $\epsilon$ values are reported in the left-most column. For the black-box attacks, we report the per-example worst-case across all black-box attacks. In addition to FPR and \RTE we include ROC AUC, \TE as well as \TE and \RTE in the standard, non-thresholded setting, as reference.}
    \label{tab:supp-experiments-main-mnist-4}
\end{table*}
\begin{table*}[t]
	\centering
	\scriptsize
    \input{tab_msvhn_supp_99tpr_1}
	\vskip -6px
	\caption{\textbf{Per-Attack Results on SVHN, Part I ($\mathbf{L_\infty}$).} Per-attack results considering \PGD-\FCE, as in \cite{MadryICLR2018}, our \PGD-\FConf and the remaining black-box attacks for the $L_\infty$ threat model. The used $\epsilon$ values are reported in the left-most column. For the black-box attacks, we report the per-example worst-case across all black-box attacks. In addition to FPR and \RTE we include ROC AUC, \TE as well as \TE and \RTE in the standard, non-thresholded setting, as reference.}
    \label{tab:supp-experiments-main-svhn-1}
\end{table*}
\begin{table*}[t]
    \centering
    \scriptsize
    \input{tab_msvhn_supp_99tpr_2}
    \vskip -6px
    \caption{\textbf{Per-Attack Results on SVHN, Part II ($\mathbf{L_2}$).} Per-attack results considering \PGD-\FCE, as in \cite{MadryICLR2018}, our \PGD-\FConf and the remaining black-box attacks for $L_2$ threat model. The used $\epsilon$ values are reported in the left-most column. For the black-box attacks, we report the per-example worst-case across all black-box attacks. In addition to FPR and \RTE we include ROC AUC, \TE as well as \TE and \RTE in the standard, non-thresholded setting, as reference.}
    \label{tab:supp-experiments-main-svhn-2}
\end{table*}
\begin{table*}[t]
    \centering
    \scriptsize
    \input{tab_msvhn_supp_99tpr_3}
    \vskip -6px
    \caption{\textbf{Per-Attack Results on SVHN, Part III ($\mathbf{L_1}$, $\mathbf{L_0}$, Adversarial Frames).} Per-attack results considering \PGD-\FCE, as in \cite{MadryICLR2018}, our \PGD-\FConf and the remaining black-box attacks for $L_1$, $L_0$ threat models and adversarial frames. The used $\epsilon$ values are reported in the left-most column. For the black-box attacks, we report the per-example worst-case across all black-box attacks. In addition to FPR and \RTE we include ROC AUC, \TE as well as \TE and \RTE in the standard, non-thresholded setting, as reference.}
    \label{tab:supp-experiments-main-svhn-3}
\end{table*}
\begin{table*}[t]
	\centering
	\tiny
    \input{tab_mcifar10_supp_99tpr_1}
	\vskip -6px
	\caption{\textbf{Per-Attack Results on Cifar10, Part I ($\mathbf{L_\infty}$).} Per-attack results considering \PGD-\FCE, as in \cite{MadryICLR2018}, our \PGD-\FConf and the remaining black-box attacks for $L_\infty$ and $L_2$ threat models. The used $\epsilon$ values are reported in the left-most column. For the black-box attacks, we report the per-example worst-case across all black-box attacks. In addition to FPR and \RTE we include ROC AUC, \TE as well as \TE and \RTE in the standard, non-thresholded setting, as reference.}
    \label{tab:supp-experiments-main-cifar10-1}
\end{table*}
\begin{table*}[t]
    \centering
    \scriptsize
    \input{tab_mcifar10_supp_99tpr_2}
    \vskip -6px
    \caption{\textbf{Per-Attack Results on Cifar10, Part II ($\mathbf{L_2}$).} Per-attack results considering \PGD-\FCE, as in \cite{MadryICLR2018}, our \PGD-\FConf and the remaining black-box attacks for the $L_2$ threat model. The used $\epsilon$ values are reported in the left-most column. For the black-box attacks, we report the per-example worst-case across all black-box attacks. In addition to FPR and \RTE we include ROC AUC, \TE as well as \TE and \RTE in the standard, non-thresholded setting, as reference.}
    \label{tab:supp-experiments-main-cifar10-2}
\end{table*}
\begin{table*}[t]
    \centering
    \scriptsize
    \input{tab_mcifar10_supp_99tpr_3}
    \vskip -6px
    \caption{\textbf{Per-Attack Results on Cifar10, Part III ($\mathbf{L_1}$).} Per-attack results considering \PGD-\FCE, as in \cite{MadryICLR2018}, our \PGD-\FConf and the remaining black-box attacks for the $L_1$ threat model. The used $\epsilon$ values are reported in the left-most column. For the black-box attacks, we report the per-example worst-case across all black-box attacks. In addition to FPR and \RTE we include ROC AUC, \TE as well as \TE and \RTE in the standard, non-thresholded setting, as reference.}
    \label{tab:supp-experiments-main-cifar10-3}
\end{table*}
\begin{table*}[t]
    \centering
    \scriptsize
    \input{tab_mcifar10_supp_99tpr_4}
    \vskip -6px
    \caption{\textbf{Per-Attack Results on Cifar10, Part IV ($\mathbf{L_0}$, Adversarial Frames).} Per-attack results considering \PGD-\FCE, as in \cite{MadryICLR2018}, our \PGD-\FConf and the remaining black-box attacks for the $L_0$ threat model and adversarial frames. The used $\epsilon$ values are reported in the left-most column. For the black-box attacks, we report the per-example worst-case across all black-box attacks. In addition to FPR and \RTE we include ROC AUC, \TE as well as \TE and \RTE in the standard, non-thresholded setting, as reference.}
    \label{tab:supp-experiments-main-cifar10-4}
\end{table*}
\begin{table*}
    \centering
    \scriptsize
    \input{tab_mmnist_corrupted_supp_99tpr_1}
    \caption{\textbf{Per-Corruptions Results on MNIST-C, PART I.} Results on MNIST-C, broken down by individual corruptions (first column); \texttt{mean} are the averaged results over all corruptions. We report ROC AUC, FPR and the true negative rate (TNR) in addition to the thresholded and unthresholded \TE on the corrupted examples. The table is continued in \tabref{tab:supp-experiments-corruption-mnist-2}.}
    \label{tab:supp-experiments-corruption-mnist-1}
\end{table*}
\begin{table*}
    \centering
    \scriptsize
    \input{tab_mmnist_corrupted_supp_99tpr_2}
    \caption{\textbf{Per-Corruptions Results on MNIST-C, PART II.} Continued results of \tabref{tab:supp-experiments-corruption-mnist-1} including results on MNIST-C focusing on individual corruptions. texttt{mean} are the averaged results over all corruptions. We report ROC AUC, FPR and the true negative rate (TNR) in addition to the thresholded and unthresholded \TE on the corrupted examples.}
    \label{tab:supp-experiments-corruption-mnist-2}
\end{table*}
\begin{table*}
    \centering
    \tiny
    \input{tab_mcifar10_corrupted_supp_99tpr_1}
    \caption{\textbf{Per-Corruptions Results on Cifar10-C, PART I.} Results on Cifar10-C focusing on individual corruptions (first column); texttt{mean} are the averaged results over all corruptions. We report ROC AUC, FPR and the true negative rate (TNR) in addition to the thresholded and unthresholded \TE on the corrupted examples. The table is continued in \tabref{tab:supp-experiments-corruption-cifar10-2}.}
    \label{tab:supp-experiments-corruption-cifar10-1}
\end{table*}
\begin{table*}
    \centering
    \tiny
    \input{tab_mcifar10_corrupted_supp_99tpr_2}
    \caption{\textbf{Per-Corruptions Results on Cifar10-C, PART II} Continued results of \tabref{tab:supp-experiments-corruption-cifar10-1} including results on Cifar10-C focusing on individual corruptions. texttt{mean} are the averaged results over all corruptions. We report ROC AUC, FPR and the true negative rate (TNR) in addition to the thresholded and unthresholded \TE on the corrupted examples.}
    \label{tab:supp-experiments-corruption-cifar10-2}
\end{table*}
\FloatBarrier
\end{appendix}

\end{document}